\documentclass{article} 
\usepackage{iclr2021_conference,times}
\usepackage{sidecap}

\usepackage{amsmath,amsfonts,bm}









\def\eqref#1{equation~\ref{#1}}









\def\1{\bm{1}}










\DeclareMathAlphabet{\mathsfit}{\encodingdefault}{\sfdefault}{m}{sl}
\SetMathAlphabet{\mathsfit}{bold}{\encodingdefault}{\sfdefault}{bx}{n}











\newcommand{\R}{\mathbb{R}}



\usepackage{hyperref}
\usepackage{url}
\usepackage[utf8]{inputenc} 
\usepackage[T1]{fontenc}    
\usepackage{hyperref}       
\usepackage{url}            
\usepackage{booktabs}       
\usepackage{amsfonts}       
\usepackage{nicefrac}       
\usepackage{microtype}      
\usepackage{comment}
\usepackage{xcolor}
\usepackage{enumitem}

\usepackage{graphicx}
\usepackage{epstopdf}
\usepackage[ruled]{algorithm}
\usepackage{algpseudocode}
\newtheorem{thrm}{Theorem}
\newtheorem{cor}{Corollary}
\newtheorem{defn}{Definition}
\newtheorem{proof}{Proof}
\newtheorem{lemma}{Lemma}
\usepackage{amsmath,amssymb}
\usepackage{float}

\newcommand{\cut}[1]{ }

\title{Shape-Tailored Deep Neural Networks}


\author{Naeemullah Khan$^*$, Angira Sharma$^\dagger$, Ganesh Sundaramoorthi$^+$ \& Philip H. S. Torr$^*$ \\
$*$\texttt{\{naeemullah.khan, philip.torr\}@eng.ox.ac.uk} \\
$\dagger$\texttt{angira.sharma@cs.ox.ac.uk} \\
$+$\texttt{ganesh.sundaramoorthi@kaust.eud.sa} \\

}

%

\iclrfinalcopy 
\begin{document}
\maketitle
\vspace{-0.75cm}
\begin{abstract}
We present Shape-Tailored Deep Neural Networks (ST-DNN). ST-DNN extend convolutional networks (CNN), which aggregate data from fixed shape (square) neighborhoods, to compute descriptors defined on arbitrarily shaped regions. This is natural for segmentation, where descriptors should describe regions (e.g., of objects) that have diverse shape. We formulate these descriptors through the Poisson partial differential equation (PDE), which can be used to generalize convolution to arbitrary regions. We stack multiple PDE layers to generalize a deep CNN to arbitrary regions, and apply it to segmentation. We show that ST-DNN are covariant to translations and rotations and robust to domain deformations, natural for segmentation, which existing CNN based methods lack. ST-DNN are 3-4 orders of magnitude smaller then CNNs used for segmentation. We show that they exceed segmentation performance compared to state-of-the-art CNN-based descriptors using 2-3 orders smaller training sets on the texture segmentation problem.
\end{abstract}
\vspace{-0.5cm}

\section{Introduction}
\vspace{-0.3cm}
Convolutional neural networks (CNNs) have been used extensively for segmentation problems in computer vision \cite{mask-rcnn,resnet101,deeplabv3,HED2015}.  CNNs provide a framework for learning descriptors that are able to discriminate different textured or semantic regions within images. Much progress has been made in segmentation with CNNs but results are still far from human performance. Also, significant engineering must be performed to adapt CNNs to segmentation problems. A basic component in the architecture for segmentation problems involves labeling or grouping dense descriptors returned by a backbone CNN. A difficulty in grouping these descriptors arises, especially near the boundaries of segmentation regions, as CNN descriptors aggregate data from fixed shape (square neighborhoods) at each pixel and may thus aggregate data from different regions. This makes grouping these descriptors into a unique region difficult, which often results in errors in the grouping.

In segmentation problems (e.g., semantic segmentation), current methods attempt to mitigate these errors by adding post-processing layers that aim to group simultaneously the (coarse-scale) descriptors from the CNN backbone and the fine-level pixel data. However, the errors introduced might not always be fixed. A more natural approach to avoid this problem is to consider the coarse and fine structure together, avoiding aggregation across boundaries, to prevent errors at the outset.

To avoid such errors, one could design descriptors that aggregate data only within boundaries. To this end, \cite{Khan2015} introduced ``shape-tailored'' descriptors that aggregate data within a region of interest, and used these descriptors for segmentation. However, these descriptors are hand-crafted and do not perform on-par with learned approaches. \cite{Khan2018a} introduced learned shape-tailored descriptors by learning a neural network operating on the input channel dimension of input hand-crafted shape-tailored descriptors for segmentation. However, these networks, though deep in the channel dimension, did not filter data spatially within layers. Since an advantage of CNNs comes from exploiting spatial filtering at each depth of the network, in this work, we design shape-tailored networks that are deep and perform shape-tailored filtering in space at \emph{each} layer using solutions of the Poisson PDE. This results in shape-tailored networks that provide more discriminative descriptors than a single shape-tailored kernel. This extension requires development of techniques to back-propagate through PDEs, which we derive in this work.

Our contributions are specifically:
\begin{enumerate}[leftmargin=0.5cm,topsep=1pt,itemsep=0pt]
\item We construct and show how to train ST-DNN, deep networks that perform shape-tailored spatial filtering via the Poisson PDE at each depth so as to generalize a CNN to arbitrarily shaped regions.

\item We show analytically and empirically that ST-DNNs are covariant to translations and rotations as they inherit this property from the Poisson PDE. In segmentation, covariance (a.k.a., equivariance) to translation and rotation is a desired property: if a segment in an image is found, then the corresponding segment should be found in the translated / rotated image (or object). This property is not generally present with existing CNN-based segmentation methods even when trained with augmented translated and rotated images \cite{azulay2019deep}, and requires special consideration.

\item We show analytically and empirically that ST-DNNs are robust to domain deformations. These result from viewpoint change or object articulation, and so they should not affect the descriptor.

\item To demonstrate ST-DNN and the properties above, we validate them on the task of segmentation, an important problem in low-level vision \cite{malik1990preattentive,arbelaez2011contour}.
\end{enumerate}
Because of properties of the PDE, ST-DNN also have desirable generalization properties. This is because:
a) The robustness and covariance properties are built into our descriptors and do not need to be learned from data,
b) The PDE solutions, generalizations of Gabor-like filters \cite{Olhausen96, nature_2019}, have natural image structure inherent in their solutions and so this does not need to be learned from data, and
c) Our networks have fewer parameters compared to existing networks in segmentation. This is because the PDE solutions form a basis and only linear combinations of a few basis elements are needed to learn discriminative descriptors for segmentation.  In contrast, CNNs spend a lot of parameters to learn this structure.

\subsection{Related Work}
Traditional approaches to segmentation rely on hand-crafted features, e.g., through a filter bank \cite{Haralick1985}. These features are ambiguous near the boundaries of objects. In \cite{Khan2015} hand-crafted descriptors that aggregate data within object boundaries are constructed to avoid this, but lack sufficient capacity to capture the diversity of textures or be invariant to nuisances. \cite{Khan2017} extended the shape-tailored descriptors to a continuum of scale space to achieve coarse to fine segmentation. Deep-learning based approaches have showed state-of-the-art results in edge-based methods \cite{Xie2017,He_2019_CVPR,CrispBoundaries2019}. Watershed is applied on edge-maps to obtain the segmentation. The main drawback of these methods is it is often difficult to form segmentations due to extraneous or faint edges, particularly when "textons" in textures are large.

CNNs have been applied to compute descriptors for semantic segmentation, where pixels in an image are classified into certain semantic object classes \cite{Li_2019_ICCV,Huang_2019_ICCV,Du_2019_ICCV,Pang_2019_ICCV,Zhu_2019_ICCV, Liu_2019_CVPR}. Usually these classes are limited to a few object classes and do not tackle general textures, where the number of classes may be far greater, and thus such approaches are not directly applicable to texture segmentation. But semantic segmentation approaches may eventually benefit from our methodology as descriptors aggregating data only within objects or regions are also relevant to these problems. A learned shape-tailored  descriptor \cite{Khan2018a} is constructed with a Siamese network on hand-crafted shape-tailored descriptors. However, \cite{Khan2018a} only does shape-tailored filtering in pre-processing as layering these requires new methods to train. We further examine covariance and robustness, not examined in \cite{Khan2018a}.

Covariance to rotation in CNNs has been examined in recent works, e.g., \cite{NIPS2018_covariance,NIPS2019_shift, NIPS2019_cormorant}. They, however, are not shape-tailored so do not aggregate data only within shaped regions. Lack of robustness to deformation (and translation) in CNNs is examined in \cite{azulay2019deep} and theoretically in \cite{NIPS2017_invariance}. \cite{sifre2013rotation} constructs deformation robust descriptors inspired by CNNs, but are hand-crafted.

\section{Construction of Shape-tailored DNN and Properties}
In this section, we design a deep neural network that outputs descriptors at each pixel within an arbitrary shaped region of interest and aggregates data only from within the region. We want the descriptors to be discriminative of different texture, yet robust to nuisances within the region (e.g., local photometric and geometric variability) to be useful for segmentation. Our construction uses a Poisson PDE, which naturally smooths data only within a region of interest. Smoothing naturally yields robustness to geometric nuisances (domain deformations). By taking linear combinations of derivatives of the output of the PDE, we can approximate the effect of general convolutional kernels but avoid mixing data across the boundary of region of interest. ST-DNN is also covariant to translations and rotations, inheriting it from the Poisson equation, which leads to the segmentation algorithm being covariant to such transformations.

\subsection{Shape-Tailored DNN Descriptors through Poisson PDE}
\label{sec:STDNN}

{\bf Shape-tailored Smoothing via Poisson PDE}: To construct a shape-tailored deep network, we first smooth the input to a layer using the Poisson PDE so as to aggregate data only within the region of interest, similar to what is done in \cite{Khan2015} for just the first layer. Let $R\subset \Omega \subset \R^2$ be the region of interest, where $\Omega$ is the domain of the input image $\mathbf{I} : \Omega \to \R^k$ and $k$ is the number of input channels to the layer. Let $\mathbf{u} : R \to \R^M$ ($M$ is the number of output channels) be the result of the smoothing; the components $u$ of $\mathbf{u}$ solve the PDE within $R$:
\begin{equation} \label{eq:poisson}
\begin{cases}
u(x) -\alpha \Delta u(x) = I(x)
& x\in R \\
\nabla u(x) \cdot N = 0 & x\in \partial R
\end{cases} \quad,
\end{equation}
where $I$ is a channel of $\mathbf{I}$, $\partial R$ is the boundary of $R$, $N$ is normal to $\partial R$, $\alpha$ is the scale of smoothing and $\Delta / \nabla$ are the Laplacian and the gradient respectively. It can be shown that the smoothing can be written in the form $u(x)=\int_R K(x,y)I(y)dy$ where $K(.,.)$ is the Green's function of the PDE, a smoothing kernel, which further shows that the PDE aggregates data only within $R$.

{\bf Shape-tailored Deep Network}: We can now generalize the operation of convolution tailored to the region of interest by taking linear combinations of partial derivatives of the output of the PDE \eqref{eq:poisson}. This is motivated by the fact that in $R=\R^2$, linear combinations of derivatives of Gaussians can approximate any kernel arbitrarily well. Gaussian filters are the solution of the heat equation, and the PDE \eqref{eq:poisson} relates to the heat equation, i.e., \eqref{eq:poisson} is the steady state solution of a heat equation. Thus, linear combinations of derivatives of \eqref{eq:poisson} generalize convolution to an arbitrary shape $R$; in experiments, a few first order directional derivatives are sufficient for our segmentation tasks (see Section~\ref{sec:expts} for details). A layer of the ST-DNN takes such linear combinations and rectifies it as follows:
\begin{equation}
f_i(x)= r \circ L_{i} \circ T[I] (x),
\end{equation} 
where $I : \R \to \R^k$ is the input to the layer, $T$ is an operator that outputs derivatives of the solution of the Poisson PDE \eqref{eq:poisson}, $L_i(y) = w_i y + b_i$ is a point-wise linear function (i.e., a $1\times 1$ convolution applied to combine different channels), $r$ is the rectified linear function, and $i$ indexes the layer of the network. Notice that since $r$ and $L_i$ are pointwise operations, they preserve the property of $T$ that it aggregates data only within the region $R$. We now compose layers to construct a ST-DNN as follows:
\begin{equation} \label{eq:STDNN_eqn}
F[I](x) = s \circ f_m \circ f_{m-1} \circ f_{m-2} \circ .... f_0 \circ I(x),
\end{equation}
where $F[I](x)$ is the output of the ST-DNN, $f_0,...,f_m$ are the $m+1$ layers of the network, $I$ is the input image, and $s$ represents the soft-max operation (to bound the output values).

ST-DNN does not have a pooling layer because the PDE already aggregates data from a neighborhood by smoothing; further, the lack of reduction in spatial dimension allows for more accurate shape estimation in our subsequent segmentation, and avoids the need for up-sampling layers. We will show that we can retain efficiency in training and inference.

\subsection{Covariance and Robustness of ST-DNN}
In addition to ST-DNN generalizing CNNs to arbitrary shaped regions, the ST-DNN is also covariant to in-plane translation and rotation, and robustness to domain deformations due to properties of the Poisson PDE. This means covariance also extends to our segmentation method, which is important as any object segmented in an image will also be segmented if the camera undergoes these transformations. Robustness to deformations is important as this means that small geometric variability (e.g., shape variations in textons, small viewpoint change, object deformation) will not affect the descriptors and the segmentation much. We make these properties more precise, and give intuition for proofs, leaving details to Appendix~\ref{sec:Proofs_supp}.

A covariant operator commutes with a set of transformations:
\begin{defn}
	An operator $S : \mathcal{I} \to \mathcal{I}$ (from the set of images $\mathcal I$ to itself) is {\bf covariant} to a class $\mathcal{W}$ of transformations if $S [I \circ w] = [SI]\circ w$ 
	for every $I\in \mathcal{I}$ and $w\in \mathcal{W}$.
\end{defn}
ST-DNN is covariant to $\mathcal W$, the set of in-plane rotations and translations:
\begin{thrm}\label{thrm:covariance}
The ST-DNN \eqref{eq:STDNN_eqn}
is covariant to the set of translations and rotations, i.e., $x \to \mathcal{R}x+\mathcal{T}$ where $\mathcal{R}$ is a $2\times 2$ rotation matrix and $\mathcal{T}\in \mathbb{R}^2$.
\end{thrm}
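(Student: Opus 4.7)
The plan is to prove covariance compositionally, working outward through the definition $F = s \circ f_m \circ \cdots \circ f_0$ where $f_i = r \circ L_i \circ T$. It suffices to show each atomic building block commutes with the rigid motion $w(x)=\mathcal{R}x+\mathcal{T}$ (in the sense that $B[I\circ w] = B[I]\circ w$), after which composition gives the result for $F$. The pointwise nonlinearities $r$ and $s$ and the $1\times 1$ channel mixing $L_i$ all act on values at a single pixel, so $(L_i \circ g)\circ w = L_i \circ (g\circ w)$ and similarly for $r,s$ — these are immediate. The real content concerns the operator $T$, i.e., taking partial derivatives of the Poisson PDE solution.

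First I would prove covariance of the Poisson solver itself. Let $u$ solve \eqref{eq:poisson} on $R$ with data $I$, and let $\tilde u$ solve \eqref{eq:poisson} on $\tilde R := w^{-1}(R)$ with data $\tilde I := I\circ w$. I claim $\tilde u = u\circ w$. Interiorly, since $\mathcal{R}$ is orthogonal, $\Delta(u\circ w)(x) = (\Delta u)(w(x))$, so $(u\circ w)(x) - \alpha\Delta(u\circ w)(x) = I(w(x)) = \tilde I(x)$. For the boundary condition, the outward normal $\tilde N$ to $\partial\tilde R$ satisfies $\tilde N(x) = \mathcal{R}^T N(w(x))$, while $\nabla(u\circ w)(x) = \mathcal{R}^T (\nabla u)(w(x))$; hence $\nabla(u\circ w)\cdot\tilde N = \langle \mathcal{R}^T\nabla u,\mathcal{R}^T N\rangle = \langle\nabla u,N\rangle = 0$ on $\partial\tilde R$. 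Uniqueness of solutions to the elliptic Neumann problem then yields $\tilde u = u\circ w$.

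Next I would lift this to $T$. Derivatives of $u\circ w$ relate to derivatives of $u$ at $w(x)$ by a linear transformation that depends only on $\mathcal{R}$ (for translations there is nothing to check, since partial derivatives commute with $x\mapsto x+\mathcal{T}$). Concretely, first-order directional derivatives transform via $\mathcal{R}^T$ applied to the gradient channels, and higher-order derivatives transform by the corresponding tensor power of $\mathcal{R}^T$; rotation-invariant scalars such as $u$, $\Delta u$ and $\|\nabla u\|^2$ pass through unchanged. Thus $T[I\circ w](x) = A_{\mathcal R}\,T[I](w(x))$ for a constant (pixel-independent) linear map $A_\mathcal{R}$ mixing channels. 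Combining with the pointwise $1\times 1$ convolution $L_i$ gives $L_i \circ T[I\circ w] = (L_i A_\mathcal{R}) \circ T[I]\circ w$, which equals $(L_i\circ T[I])\circ w$ provided $L_i$ is taken to commute with $A_\mathcal{R}$ — i.e., either $T$ outputs only the rotation-invariant scalars (so $A_\mathcal{R}=\mathrm{Id}$), or the channel-mixing weights $w_i,b_i$ are parameterised to be equivariant with respect to $A_\mathcal{R}$. Either interpretation matches the experimental setup described after \eqref{eq:STDNN_eqn}.

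The main obstacle is exactly this last point: translations are trivial, but rotations genuinely act on the gradient as a vector field, so to obtain strict equality $F[I\circ w] = F[I]\circ w$ one must handle the rotational mixing of derivative channels produced by $T$. Once this is cleared (by restricting $T$ to rotationally invariant derivative combinations, or by equivariantly tying the weights of $L_i$), composing the per-layer statements through $f_0,\dots,f_m$ and then through the pointwise softmax $s$ gives the covariance of $F$, proving the theorem.
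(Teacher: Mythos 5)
Your proposal follows essentially the same route as the paper's proof: establish covariance of the Laplacian (hence of the Poisson solution operator) under rigid motions, observe that the $1\times1$ channel mixing and the pointwise nonlinearities trivially commute with composition by $w$, and propagate through the layers. Your version is, however, more careful on two points the paper's proof passes over: you verify that the Neumann boundary condition is preserved (the normal and the gradient both transform by $\mathcal{R}^T$, so their inner product is unchanged) and you invoke uniqueness of the elliptic Neumann problem, whereas the paper only proves covariance of the interior operator. More importantly, you correctly flag that the operator $T$ outputs \emph{directional derivatives} of the Poisson solution, which under a rotation are mixed by a channel-permuting/rotating map $A_{\mathcal{R}}$; strict equality $F[I\circ w]=F[I]\circ w$ therefore requires either restricting to rotation-invariant channels or equivariantly tying the weights $L_i$ so that $L_iA_{\mathcal{R}}=L_i$ acts consistently. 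The paper's proof simply asserts that ``a linear combination of channels and point-wise non-linear operation preserves the covariance'' without addressing this mixing, so your conditional statement is not a defect of your argument but an honest identification of a gap in the theorem as stated for the architecture actually used (which includes oriented gradients at fixed angles). This caveat is worth making explicit rather than hiding.
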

This follows from the covariance of the Laplacian and point-wise operations (rectification, $1\times 1$ convolution), and lack of sub-sampling.

We now make precise the robustness of the ST-DNN to domain deformations:
\begin{thrm}\label{thrm:robustness}
	The ST-DNN \eqref{eq:STDNN_eqn}
	is insensitive to deformations, i.e., 
	\begin{equation}
		|F[I\circ w]-F[I]| \leq C \| w-\mbox{id} \|_{H^1},
	\end{equation}
	where $w : \Omega\to\Omega$ is a domain deformation, $\mbox{id}$ is the identity map, $H^1$ is the Sobolev norm (measures both the amount and smoothness of the deformation), and $C$ is a constant independent of $w,I$.
\end{thrm}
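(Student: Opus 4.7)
The plan is to exploit the compositional structure of $F$ and reduce the theorem to a single Lipschitz estimate on the Poisson-based feature operator $T$. Each layer has the form $f_i = r \circ L_i \circ T$, where $r$ is the rectifier (1-Lipschitz), $L_i(y) = w_i y + b_i$ is a bounded linear map (Lipschitz with constant $|w_i|$), and the final soft-max $s$ is smooth and bounded. Hence, by induction on depth, if we establish
\begin{equation*}
\|T[I\circ w] - T[I]\|_\infty \leq C_T \, \|w - \mbox{id}\|_{H^1},
\end{equation*}
for some constant $C_T$ depending on the region $R$ and the regularity of $I$, then telescoping through the composition gives the theorem with $C = C_T \prod_i |w_i|$ times a constant absorbing the Lipschitz constant of $s$. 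So the whole theorem reduces to a stability estimate for the operator $T$ that outputs (derivatives of) the solution of the Poisson PDE \eqref{eq:poisson}.

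To prove that stability estimate, I would pull everything back to the fixed region $R$. Let $u$ solve \eqref{eq:poisson} on $R$ with source $I$, and let $v$ solve the analogous PDE on the deformed region $R_w = w^{-1}(R)$ with source $I\circ w$ and Neumann boundary on $\partial R_w$; define $\tilde u = u \circ w$ on $R_w$. A direct change of variables shows that $\tilde u$ satisfies an elliptic PDE of the form
\begin{equation*}
\tilde u(x) - \alpha \, \mathrm{div}\bigl(A(x)\nabla \tilde u(x)\bigr) = I(w(x)), \qquad x\in R_w,
\end{equation*}
with coefficient matrix $A(x) = |\det Dw(x)|\, Dw(x)^{-1} Dw(x)^{-T}$ and a transformed Neumann condition on $\partial R_w$. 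When $w = \mbox{id}$ we have $A = \mI$, so the difference $A - \mI$ is controlled by $\|Dw - \mI\|_{L^2}$; together with $|\det Dw| - 1$ and the boundary-conormal perturbation, the coefficients and boundary datum differ from those of the unperturbed PDE by quantities bounded by $\|w - \mbox{id}\|_{H^1}$.

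Next I would apply standard elliptic perturbation theory: since $v$ and $\tilde u$ solve the same equation on $R_w$ but with coefficients and source differing by $O(\|w-\mbox{id}\|_{H^1})$, energy estimates for the Helmholtz-type operator $u - \alpha \Delta u$ (which is coercive on $H^1$ thanks to the zeroth-order term) yield $\|\tilde u - v\|_{H^2(R_w)} \leq C\,\|w-\mbox{id}\|_{H^1}$, with $C$ depending on $\|I\|_{H^1}$ and the geometry of $R$. Because $T$ takes a finite number of first-order partial derivatives and $H^2 \hookrightarrow C^0$ in two dimensions (for sufficiently regular $\partial R$), we transfer this into a sup-norm bound on $T[I\circ w] - T[I]$, completing the key lemma.

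The main obstacle will be the boundary treatment. The Neumann condition $\nabla u \cdot N = 0$ is imposed on $\partial R$, and under the pull-back it becomes a conormal condition on $\partial R_w$ involving $A$ and the deformed normal; making the perturbation argument rigorous requires enough regularity of $\partial R$ (say, $C^{1,1}$) so that elliptic regularity up to the boundary applies, and enough regularity on $w$ so that $Dw \in L^\infty$ (otherwise the coefficient matrix $A$ may fail to be uniformly elliptic). In practice one assumes $\|w - \mbox{id}\|_{H^1}$ is small enough that $A$ stays within a fixed ellipticity range; this is the mild smallness hypothesis absorbed into the constant $C$. Once that is handled, the remainder of the argument is a routine chain of Lipschitz estimates on the composition \eqref{eq:STDNN_eqn}.
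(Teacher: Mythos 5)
Your reduction to a single stability lemma for the Poisson operator $T$, followed by telescoping Lipschitz estimates through the rectifiers, $1\times 1$ convolutions, and soft-max, matches the paper's overall strategy (the paper proves a per-layer Lipschitz lemma and stacks it). But your proof of the key lemma takes a genuinely different route. The paper avoids the boundary entirely by assuming $\Omega=\R^2$, writes a layer as a convolution with the Green's function $M = L\circ D\circ K$, changes variables inside the convolution so that $M\ast(I\circ w)(x)-M\ast I(x)=\sum_y\bigl[M(x-y)-M(x-w(y))\det\nabla w(y)\bigr]I(w(y))$, expands $\det\nabla w = 1+\mathrm{div}(v)+\det\nabla v$ with $w=\mathrm{id}+v$, and uses pointwise Lipschitz bounds on the kernel to arrive at $C\|L\|_\infty\|I\|_\infty\int(|v|^2+|\nabla v|^2)$. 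You instead pull the PDE back under $w$, obtain a divergence-form operator with coefficients $A=|\det Dw|\,Dw^{-1}Dw^{-T}$, and invoke elliptic perturbation/energy estimates on the actual bounded region with the Neumann condition. Your approach is more faithful to what ST-DNN actually computes (a Neumann problem on an arbitrary $R$, which is the whole point of shape-tailoring), at the cost of heavier machinery and the regularity hypotheses you correctly flag; the paper's approach is elementary and explicit but silently discards the boundary, and it also hand-waves past the singularity of the Green's function.

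One step in your chain does not close as written: from $\|\tilde u - v\|_{H^2(R_w)}\lesssim\|w-\mathrm{id}\|_{H^1}$ you cannot conclude a sup-norm bound on $T[I\circ w]-T[I]$, because $T$ outputs \emph{first derivatives} of the solution, and in two dimensions $H^2\hookrightarrow C^0$ controls $\tilde u - v$ itself but only places $\nabla(\tilde u - v)$ in $H^1\hookrightarrow L^p$ for $p<\infty$, not in $L^\infty$. You would need to upgrade to $W^{2,p}$ with $p>2$ (or $H^3$) regularity for the perturbed problem, which in turn demands more smoothness of $I$ and of $\partial R$, or else carry $L^2$-based norms through the subsequent layers (which the pointwise nonlinearities do permit, and which is arguably the more natural functional setting for the energy \eqref{eq:loss}). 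Separately, note that both your constant (through $\|I\|_{H^1}$) and the paper's (through $\|I\|_\infty$) depend on the image, so the theorem's claim that $C$ is independent of $I$ should be read as ``uniform over a bounded class of images,'' as the paper's own lemma statement concedes.
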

Intuitively, this follows from the fact that the Poisson PDE locally averages input data, and local averages are robust to translation and hence deformations, which are locally translations.

\section{Training of the Network and Back-Propagation}\label{sec:Backprop}

In this section, we describe the training of ST-DNN by introducing a loss function, how the weights can be learned, and the implementation. As the network layers solutions to PDEs, one needs to differentiate through such layers, which we describe.

\subsection{Loss Function for Training}
Given the ST-DNN of Section \ref{sec:STDNN}, 
the loss function to train such descriptors from ground truth segmentation masks (motivated by consistency to the segmentation algorithm in Section~\ref{sec:segApp} that is based on classical energies \cite{chan2001active,yezzi1999statistical} from computer vision) is defined as (similar to \cite{sharma2020}):
\begin{equation} \label{eq:loss}
L({\bf W})= \sum_{i=1}^N \frac{1}{|R_i|}\int_{R_i} || {\bf F}_{\bf W}(x) -{\bf a_i} ||_2^2 dx - \sum_i \sum_{j \neq i}||{\bf a_i} -{\bf a_j}||_2^2
\end{equation}
where $i ,j\in \{1,2,...,N\}$ are the indices for the regions in the ground truth segmentation, ${\bf F_W}(x)$ is the output of the ST-DNN, $\mathbf{W}$ are the weights of the network (i.e., weights on derivatives of the Poisson PDE solution), $|R_i|$ is the area of region $i$, and ${\bf a_i}$ is the average descriptor within the $i^{\text{th}}$ region, i.e., ${\bf a_i}= \frac{1}{|R_i|} \int_{R_i} {\bf F_W}(x) dx$. The loss function is comprised of two terms. The first component of the loss is minimized when the learned descriptor is constant within regions $R_i$ so that each region consists of parts of the image with uniform descriptor. The second term forces the learned descriptor of different regions to be different to discriminate different textures.

\subsection{Computing Gradients of the Loss and Training}
Computing gradients of the loss function for training requires consideration as it involves differentiating through PDEs. The most straightforward way to do this involves discretizing the PDE, so the solution is a linear matrix system as we do below. This allows the use of existing deep learning packages to perform back-propagation by storing the matrix in memory. However, this can lead to large memory consumption as the matrix can be large and is only feasible for small images. Fortunately, our PDEs involve a scale parameter $\alpha$ so we can train using down-sampled images, facilitating the use of existing packages for back-prop, and then infer on native resolution images by simply scaling $\alpha$ by the down-sampling factor, which we do in experiments. 

The more accurate method, though more difficult to implement, is to avoid storing the matrix and instead compute the PDE solution by an iterative numerical PDE method that does not require storage of matrices. This requires formulating a variant of the back-propagation algorithm, that is similar but involves forward propagation of layer derivatives with respect to the weights through the PDE solutions. This is unfortunately not available in standard deep learning packages. For completeness, we provide the mathematical formulation of this approach in Appendix~\ref{sec:Backprop_supp},for which we have performed experiments for few layer cases. This did not give an appreciable performance increase given the complexity of implementation, but could be useful in other applications.

{\bf Implementation}: Using the first method above, we discretize \eqref{eq:poisson} as:
\begin{equation}\label{eq:PDE_discretize}
    u(i,j) - \alpha \cdot  \sum_{k,l \in \mathcal{N}(i,j) \cap R } [ u(k,l) - u(i,j) ] = I(i,j), \quad \mbox{ for } \,\,  (i,j)\in R
\end{equation}
where $\mathcal{N}(i,j)$ represents the 4-pixel neighborhood of pixel $(i,j)$, which represents the $i^{\text{th}}$ row and $j^{\text{th}}$ column, and intersection means that only neighbors in the region are considered as implied by the Neumann boundary condition in the PDE, avoiding aggregation outside $R$. The discretization approximately preserves the rotation covariance, and any errors vanish with increasing resolution. Note that more accurate discretizations exist, but our experiments demonstrate the sufficiency of this scheme. We can vectorize $u$ and $I$ and write \eqref{eq:PDE_discretize} as:
\begin{equation} \label{eq:invPoisson}
\mathbf{A_R} \mathbf{u}=  \mathbf{I} \quad \text{and }\quad  \mathbf{u}=  \mathbf{A_R}^{-1} \mathbf{I}.
\end{equation}
The above is a linear transformation from $\mathbf I$ to $\mathbf u$. The size of $\mathbf{A}$ is $(mn\times mn)$, where $m$ and $n$ are the number of rows and columns in $I$. With the PDE layers defined through this matrix multiplication, we can use the usual back-propagation method to compute derivatives with respect to weights (see Appendix~\ref{sec:Implementation_supp} for more details). In experiments, we downsampled images to $32\times 32$ for training.

\section{Application to Segmentation}\label{sec:segApp}

In this section, we describe the procedure for segmentation using the trained ST-DNN. During inference time, the regions of segmentation are estimated iteratively together with updates of the ST-DNN for each of the regions as they evolve. The evolution of the regions to determine the segmentation is obtained by optimizing the following energy (based on the classical energy \cite{chan2001active}):
\begin{equation}\label{eq:mulregions}
E(\mathbf{R})=\sum_{i=1}^N \int_{R_i} \| {\bf F}_{\bf R_i}(x;W) -{\bf a_i} \|_2^2 dx + \beta \int_{\partial R_i} ds,
\end{equation}
where $\mathbf{R}=\{R_1,R_2,..., R_N\}$ are regions in segmentation, $\mathbf{F_{R_i}}$ is the shape-tailored descriptor from the DNN given the learned weights $W$ and within $R_i$, and $\beta>0$ is the arclength regularization parameter. Note that this energy differs from the loss function used for training in two ways. First, the second term in \eqref{eq:loss} is omitted as it is used in training to avoid the descriptor from learning to be uniform across different textured regions; during inference, the network is already trained to be different across different regions. Second, we add regularization to keep the region boundaries smooth; it is not needed in training since we do not solve for the regions as ground truth is available.

\begin{figure}
	\centering
	\includegraphics[scale=0.45]{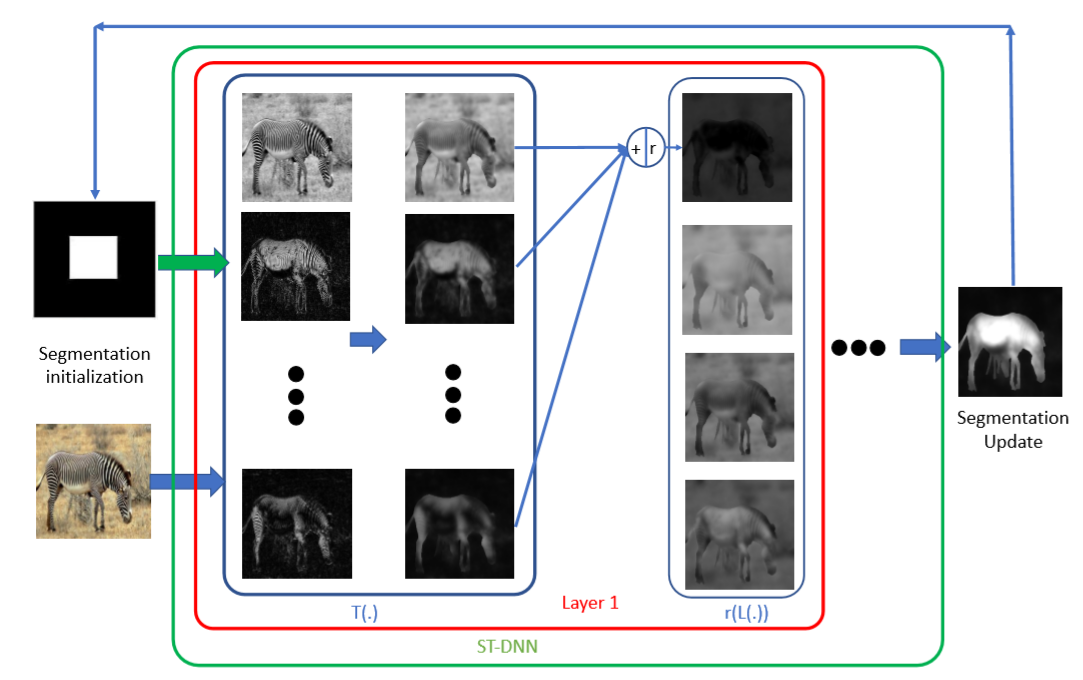}
	\caption{\textbf{ST-DNN computation and joint segmentation.} The input to the network is the image and an initial segmentation mask. ST-DNN dense descriptors are computed for each region of the mask using \eqref{eq:STDNN_eqn}. The segmentation updates by taking a few steps in the gradient direction of \eqref{eq:mulregions}. The process is iterated with the updated segmentation until the regions converge.}
	\label{fig:segAlgo}
\end{figure}

To minimize the (non-convex) energy with respect to the region, we use gradient decent. The gradient with respect to the region $R_i$ is approximately given by $[\|{\bf F}_{\bf R_i}(x;W) -{\bf a_i} \|_2^2 -\|{\bf F}_{\bf R_j}(x;W) -{\bf a_j} \|_2^2 + \beta \kappa_i] N_i$ where $N_i$ is the unit outward normal to the region, and $\kappa_i$ is its curvature. The curve (boundary of regions) evolution to determine the regions is implemented with a method analogous to level set methods \cite{osher1988fronts} by evolving smooth indicator functions of regions for convenient implementation, details of the algorithm and implementation are shown in Algorithm~\ref{alg:multilabel_scheme_supp} in Appendix~\ref{sec:Implementation_supp}. The method involves joint updates of the regions and the shape-tailored descriptors within the evolving regions (see Figure~\ref{fig:segAlgo}). We used a box tessellation to initialise the regions, typical of level set methods for segmentation. Our method typically takes a few iterations (approx.~20) to converge in our experiments.

\section{Experiments} \label{sec:expts}
\textbf{Network Architecture:} We use a 4 layer ST-DNN, which is optimal for the datasets used: fewer layers lead to less accuracy and more layers lead to overfitting (see Appendix~\ref{sec:AddResults_supp}). The layer $f_0$ outputs a 40 dimensional descriptor, with 3 color channels, a gray scale channel and oriented gradients at angles $\{0,\pi/4, \pi/2, 3\pi/4 \}$ over 5 shape-tailored smoothing levels (scales) $\alpha=\{5,10,15,20,25\}$. The four fully-connected layers have 100, 40, 20, 5 hidden units respectively. The smoothing parameter $\alpha$ for all subsequent layers is set to 5. The training on the datasets (below) takes less than 2 hours on Nvidia Quadro RTX 6000 GPU and Intel Xeon 2.60GHz CPU. The inference time (joint segmentation) is 2 seconds on images of size $256\times 256$.

\textbf{Datasets:} We apply ST-DNN to texture segmentation (since covariance and robustness properties are important for texture descriptors \cite{julesz1981textons,sifre2013rotation}). We evaluate on two challenging texture segmentation datasets \cite{Khan2015} - the Real-World Texture Segmentation dataset (RWTSD) - 256 complex real-world images (128 training and testing images) and the Synthetic Texture Dataset consists of 200 test images and 300 training images generated from the Brodatz dataset. We have also tested on multi-region segmentation BSDS500 and Synthetic Texture datasets, details are provided below.

\textbf{Methods:}  We compare our method against popular deep learning architectures in computer vision -  DeepLab-v3 \cite{deeplabv3}, and FCN-ResNet101 \cite{resnet101}. In our notation resnet/deeplabv3-x-y, resnet and deeplabv3 represents FCN-Resnet101 and DeepLab-v3 respectively, x denotes the data used in training (subsequently fine-tuned on the texture segmentation dataset). 'x' can be 'm' for MSRA dataset, 'd' for DUTS dataset, 'all' for a combination of all datasets (see Appendix~\ref{sec:AddResults_supp} for more details), and 'TD' for RWTSD (augmented with 8 rotations and 5 scales). 'y' denotes the loss function, 'ce' represents cross-entropy and 'ours' represents the loss introduced in this paper.  Segmentation for all methods is done by minimizing \eqref{eq:mulregions}. We also compare our methods against the state-of-the-art methods for texture segmentation, which contain both the classical \cite{arbelaez2011contour,arbelaez2014multiscale,isola2014crisp,Kokkinos2015,Khan2015} and deep-learning methods \cite{Khan2018a}. ST-DNN is trained only on texture datasets.

\textbf{Evaluation Metrics}: We compare on evaluation metrics from \cite{Arbelaez2011}. Ground truth covering (GT-Cov), Random Index (RI) and Variation of Information (VOI) measures region accuracy (higher GT-Cov, RI and lower VOI are more accurate), and F-measure (higher is better) measures boundary accuracy.

\textbf{Ablation Studies}: We have performed ablations studies that are summarized in Appendix \ref{sec:AddResults_supp} Table \ref{tab:STLDvsST-DNN_supp}. They show that more PDE filtering layers give higher accuracy (up to a point of overfitting at 4 layers), and that shape-tailored descriptors updated as the region updates outperforms non-shape tailored descriptor (ST-DNN computed on the whole image and not updated as the region evolves).

\textbf{Testing Covariance:} To demonstrate the covariance of ST-DNN to translation and rotation, we performed an experiment on Real-world Texture Segmentation dataset. Each image in the test set was randomly rotated with $\theta \in \{\pi/6,2\pi/6,3\pi/2,4\pi/6, 5\pi/6,\pi\}$ and cropped to a rectangle at random positions (to simulate translation) in the rotated image; we ensure the rectangle only contains data from the original image. We segment the original and the transformed image, denoted $S[I]$ and $S[I\circ w]$, respectively, where $w$ is the transformation used to produce the translated/rotated image. We then measure the difference between $S[I]\circ w$ and $S[I\circ w]$ through GT-covering; both should be equal if the descriptor is covariant. Results are summarized in Figure~\ref{tab:invariance}. ST-DNN outperforms resnet101-all-ce by a margin of almost 25\%. Note ST-DNN uses no data augmentation, whereas the competing networks are augmented with translated and rotated images from RWTSD. Perfect covariance may not be achieved as translation/rotation also necessarily include occlusion.

\def\fWidRTex{.2\linewidth}
\def\fHeiRTex{0.1\linewidth}
\begin{figure}[h]
\begin{minipage}{0.5\textwidth}
\centering
\scriptsize
\begin{tabular}{c}
    \hspace{-1cm}resnet101-all-ce \qquad \qquad \qquad \quad ST-DNN \\
    \hspace{0.2cm}image\qquad $90^{\circ}$ rotation\qquad image \qquad $90^{\circ}$ rotation \\
	\includegraphics[width=\fWidRTex,height=\fHeiRTex]{{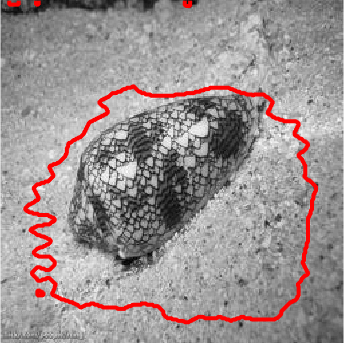}}
	\includegraphics[width=\fWidRTex,height=\fHeiRTex]{{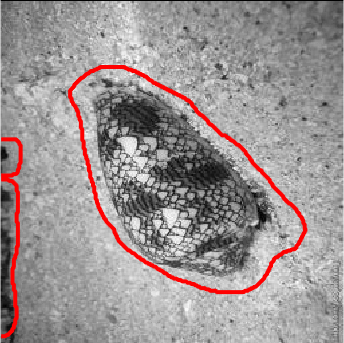}}
		\includegraphics[width=\fWidRTex,height=\fHeiRTex]{{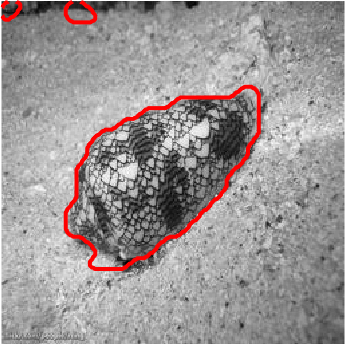}}
	\includegraphics[width=\fWidRTex,height=\fHeiRTex]{{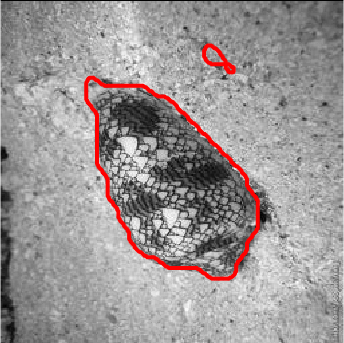}}
\end{tabular}
\end{minipage}
\begin{minipage}{0.5\textwidth}
\centering
\scriptsize
    \begin{tabular}{c|c|c|c}
    \multicolumn{2}{c|}{ST-DNN}&  \multicolumn{2}{|c}{resnet101-all-ce}\\ 
    \hline
        GT Covering & Rand Index& GT Covering & Rand Index \\ \hline
         0.87 & 0.89 &0.69 & 0.70
    \end{tabular}
\end{minipage}
\caption{Comparison of covariance to rotation and translation of ST-DNN, and sota CNN descriptor. Left: A sample result with outputs for original and transformed images. Right: Quantitative result on Real-World Texture Dataset: Higher scores indicate better covariance.}
\label{tab:invariance}
\end{figure}

\textbf{Testing Deformation Robustness:} To demonstrate robustness to deformation, we apply the trained networks above to randomly deformed versions of the RWTSD test set. We generate random smooth deformations using truncated Fourier series: $v(x) = \sum_{k=-N}^{N} a_k \exp(i 2\pi k\cdot x)$ where $x\in[0,1]^2, k=(k_1,k_2)$, $w(x)=x+v(x)$ is the deformation, $a_k$ is randomly generated, and $N=10$ (appropriate for the resolution). The Soboev norm is $\|v\|^2 = |a_0|^2 + \sum_{k=-N}^N |k|^2|a_k|^2$. For each image in the dataset we generate 8 random deformations of varying norm $\|v\|^2 $ from 10 to 80 in steps of 10. We examine the robustness of descriptors to deformations of increasing norm by comparing the segmentation of the original and deformed images similar to the previous experiment. Results and qualitative samples are in Figure~\ref{fig:defromExp}, which show that ST-DNN is more robust by large margins than competing descriptors, and the robustness over competing methods increases with increasing norm. Note a descriptor could be robust, but not accurate, but this is not the case for ST-DNN (next experiments).

\def\fWidRTex{.135\linewidth}
\def\fHeiRTex{.08\linewidth}
\begin{figure}[h]

\begin{minipage}{0.65\textwidth}
\scriptsize{
\begin{tabular}{c}

	original image (left), deformed images (increasing deformation $\longrightarrow$)\\
	\includegraphics[width=\fWidRTex,height=\fHeiRTex]{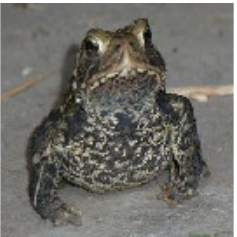}
	\includegraphics[width=\fWidRTex,height=\fHeiRTex]{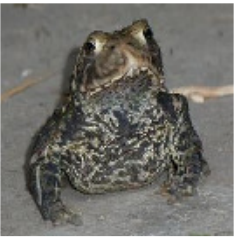}
	\includegraphics[width=\fWidRTex,height=\fHeiRTex]{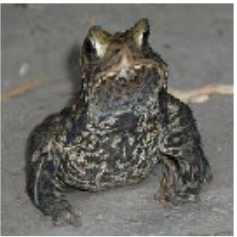}
	\includegraphics[width=\fWidRTex,height=\fHeiRTex]{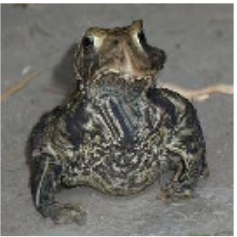}
	\includegraphics[width=\fWidRTex,height=\fHeiRTex]{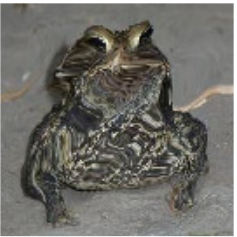}
	\includegraphics[width=\fWidRTex,height=\fHeiRTex]{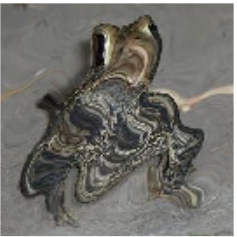}\\
	ST-DNN\\
	\includegraphics[width=\fWidRTex,height=\fHeiRTex]{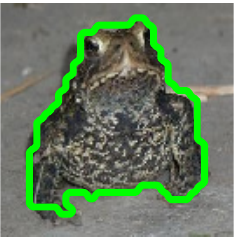}
	\includegraphics[width=\fWidRTex,height=\fHeiRTex]{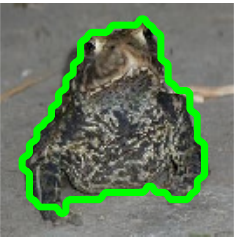}
	\includegraphics[width=\fWidRTex,height=\fHeiRTex]{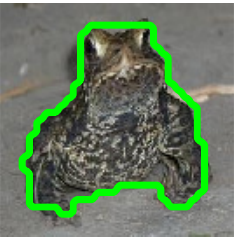}
	\includegraphics[width=\fWidRTex,height=\fHeiRTex]{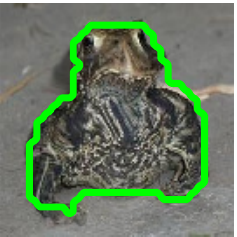}
	\includegraphics[width=\fWidRTex,height=\fHeiRTex]{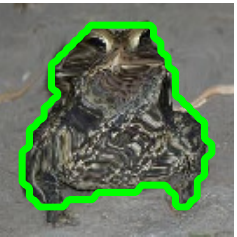}
	\includegraphics[width=\fWidRTex,height=\fHeiRTex]{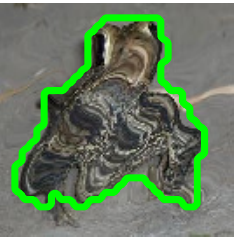}\\[-\dp\strutbox]
	DeepLab-v3 \\
	\includegraphics[width=\fWidRTex,height=\fHeiRTex]{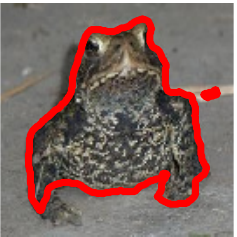}
	\includegraphics[width=\fWidRTex,height=\fHeiRTex]{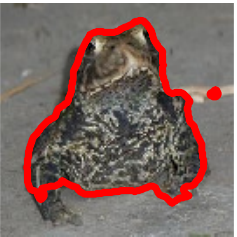}
	\includegraphics[width=\fWidRTex,height=\fHeiRTex]{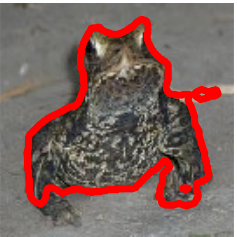}
	\includegraphics[width=\fWidRTex,height=\fHeiRTex]{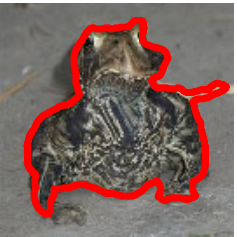}
	\includegraphics[width=\fWidRTex,height=\fHeiRTex]{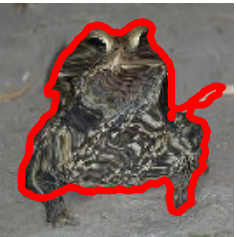}
	\includegraphics[width=\fWidRTex,height=\fHeiRTex]{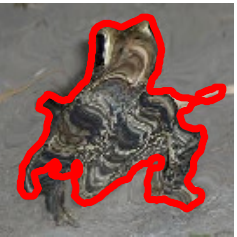}\\[-\dp\strutbox]
	
	FCN-ResNet101 \\
	\includegraphics[width=\fWidRTex,height=\fHeiRTex]{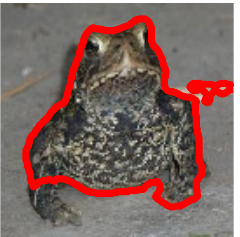}
	\includegraphics[width=\fWidRTex,height=\fHeiRTex]{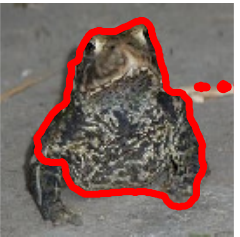}
	\includegraphics[width=\fWidRTex,height=\fHeiRTex]{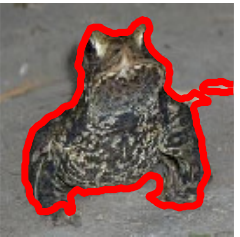}
	\includegraphics[width=\fWidRTex,height=\fHeiRTex]{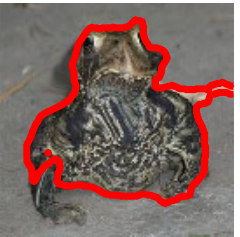}
	\includegraphics[width=\fWidRTex,height=\fHeiRTex]{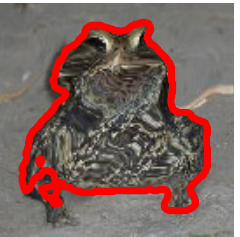}
	\includegraphics[width=\fWidRTex,height=\fHeiRTex]{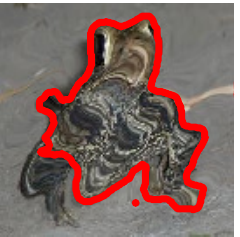}\\[-\dp\strutbox]

\end{tabular}
}
\end{minipage}
\begin{minipage}{0.3\textwidth}

\centering
{\scriptsize
\begin{tabular}{ | c | c | c | c | }
\hline
 &\multicolumn{3}{c|}{ GroundTruth Covering}   \\ \hline
	Sobolev Norm & 20 & 40  & 80  \\ \hline
	DeepLab-v3 & 0.85 & 0.76  & 0.66  \\ \hline
	FCN-ResNet101 & 0.81 & 0.75  & 0.65   \\ \hline
	ST-DNN & \textbf{0.88} & \textbf{0.85} &  \textbf{0.81}  \\ \hline
	 &\multicolumn{3}{c|}{ Rand Index}   \\ \hline
	Sobolev Norm & 20 & 40  & 80  \\ \hline
	DeepLab-v3 & 0.86 & 0.77  & 0.68  \\ \hline
	FCN-ResNet101 & 0.82 & 0.77  & 0.68   \\ \hline
	ST-DNN & \textbf{0.89} & \textbf{0.86} &  \textbf{0.82}  \\ \hline
\end{tabular}
}
\end{minipage}

\caption{\sl Comparison of robustness to deformations of ST-DNN with sota CNN descriptors. Sample results on segmentation of original and deformed images (left), and quantitative results (right): higher values indicates more robustness.}
\label{fig:defromExp}
\end{figure}

\textbf{Comparison of ST-DNN to Standard DNNs}:
We segment descriptors (ST-DNN and common deep network backbones) by minimizing \eqref{eq:mulregions}. Quantitative results are in Table~\ref{tab:results_Net} and qualitative samples are shown in Figure~\ref{fig:results_real_textures}. ST-DNN outperforms all other descriptors. ST-DNN has $\mathbf{8900}$ parameters and is trained with only $\mathbf{128}$ training images of real world texture dataset. ST-DNN is around 3 orders of magnitude smaller than standard deep networks and takes around 2 orders of magnitude less training data (e.g., FCN-ResNet101-all-ours uses 50,000 images plus augmented data and has 45 million parameters), but still outperforms these networks (see Figure~\ref{fig:motivation}).

\def\fWidRTex{.4\linewidth}
\def\fHeiRTex{.2\linewidth}
\begin{figure}
\centering
\scriptsize
\begin{tabular}{c}
	\includegraphics[width=\fWidRTex,height=\fHeiRTex]{{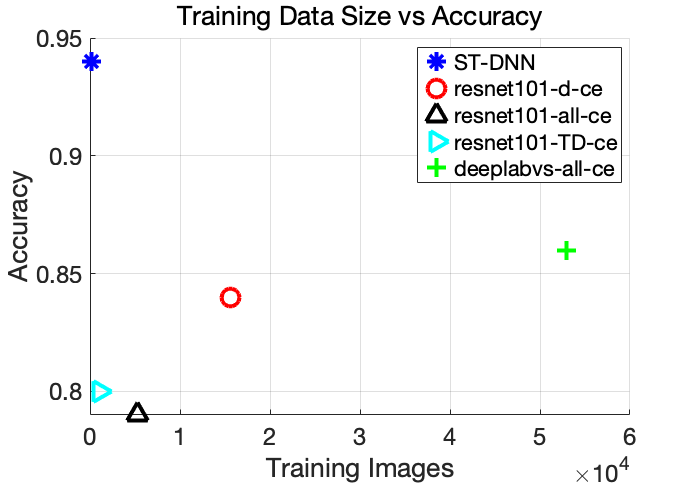}}
	\includegraphics[width=\fWidRTex,height=\fHeiRTex]{{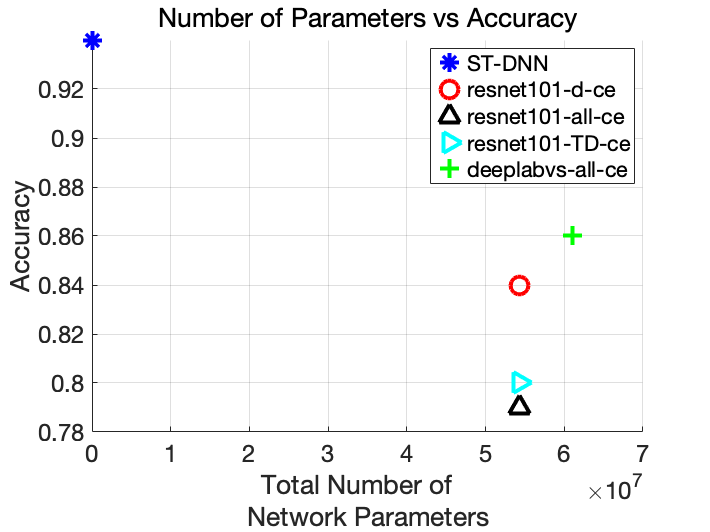}}
\end{tabular}
\caption{\sl ST-DNN is smaller in size and uses fewer training images compared with SOTA DNNs.}
\label{fig:motivation}
\end{figure}

\textbf{Comparison to Texture Segmentation Methods on Real-World \& Brodatz Synthetic Texture Datasets:}
We compare to state-of-the-art texture segmentation methods: edge-based (gPb, MCG, Kok, CB) \cite{arbelaez2011contour,arbelaez2014multiscale,isola2014crisp,Kokkinos2015}) and region based (STLD \cite{Khan2015} and Siamese \cite{Khan2018a}). STLD is a shape-tailored (but hand crafted) approach using PDEs, and non-STLD uses the same PDE as STLD but on the whole image so is not shape-tailored. Siamese uses a neural network on the channel dimension returned by STLD, but does not layer PDE solutions as our approach. Quantitative results for both texture datasets are in Table~\ref{tab:results_texture_datasets} and a few qualitative samples of the results are in Figure~\ref{fig:results_real_textures}. Our method out-performs all others by significant margins on all regions metrics and achieves close to the best result on the contour metric on RWTSD, while on the Synthetic Brodatz, our method out performs all methods on all metrics. More visual results and comparison to more methods is in Appendix~\ref{sec:AddResults_supp}.

\begin{SCtable}[][h]
\centering
{\scriptsize
	
	\begin{tabular}{l|c|c|c|c}
		 &Contour& \multicolumn{3}{c}{Region metrics} \\\hline
		  &F-meas.  & GT-cov. & Rand.~Index & Var.~Info. \\\hline
		ST-DNN (ours)  &  {\bf 0.63} & {\bf  0.94}   & {\bf 0.94 }  & {\bf 0.35}    \\
		resnet101-d-ce   &  0.39 &  0.84 &  0.75   &  0.70  \\
		resnet-d-ours  &  0.39  & 0.83  & 0.76 & 0.74   \\
		resnet101-all-ce   &  0.04  &0.79 & 0.55  & 1.39  \\
		resnet101-all-ours  &  0.48 &0.83 & 0.87  & 0.50  \\
		resnet101-TD-ours&  0.17 & 0.83   &  0.66 & 0.94  \\
		resnet101-TD-ce&  0.11 & 0.80  &  0.63 & 1.35  \\
		deeplabv3-all-ce & 0.42 & 0.86   & 0.85 & 0.66  \\
		deeplabv3-all-ours  & 0.42  & 0.85  & 0.76  & 0.67 \\
		HED\cite{HED2015} & 0.04  & 0.53  & 0.60 & 1.69 
	\end{tabular}
}

\caption{{\bf Results on Real-World Segmentation
		Datasets of Deep Networks}. Algorithms are evaluated on contour/ region
	metrics. Higher F-measure for the contour
	metric, ground truth covering (GT-cov), and rand index indicate
	better fit, and lower variation of information
	(Var.~Info) indicates a better fit to ground truth.}
\label{tab:results_Net}
\end{SCtable}

\def\fWidRTex{.11\linewidth}
\def\fHeiRTex{.07\linewidth}
\begin{figure}
\centering
\scriptsize
\begin{tabular}{c|c}
	\vspace{1pt}
	images& 	\includegraphics[width=\fWidRTex,height=\fHeiRTex]{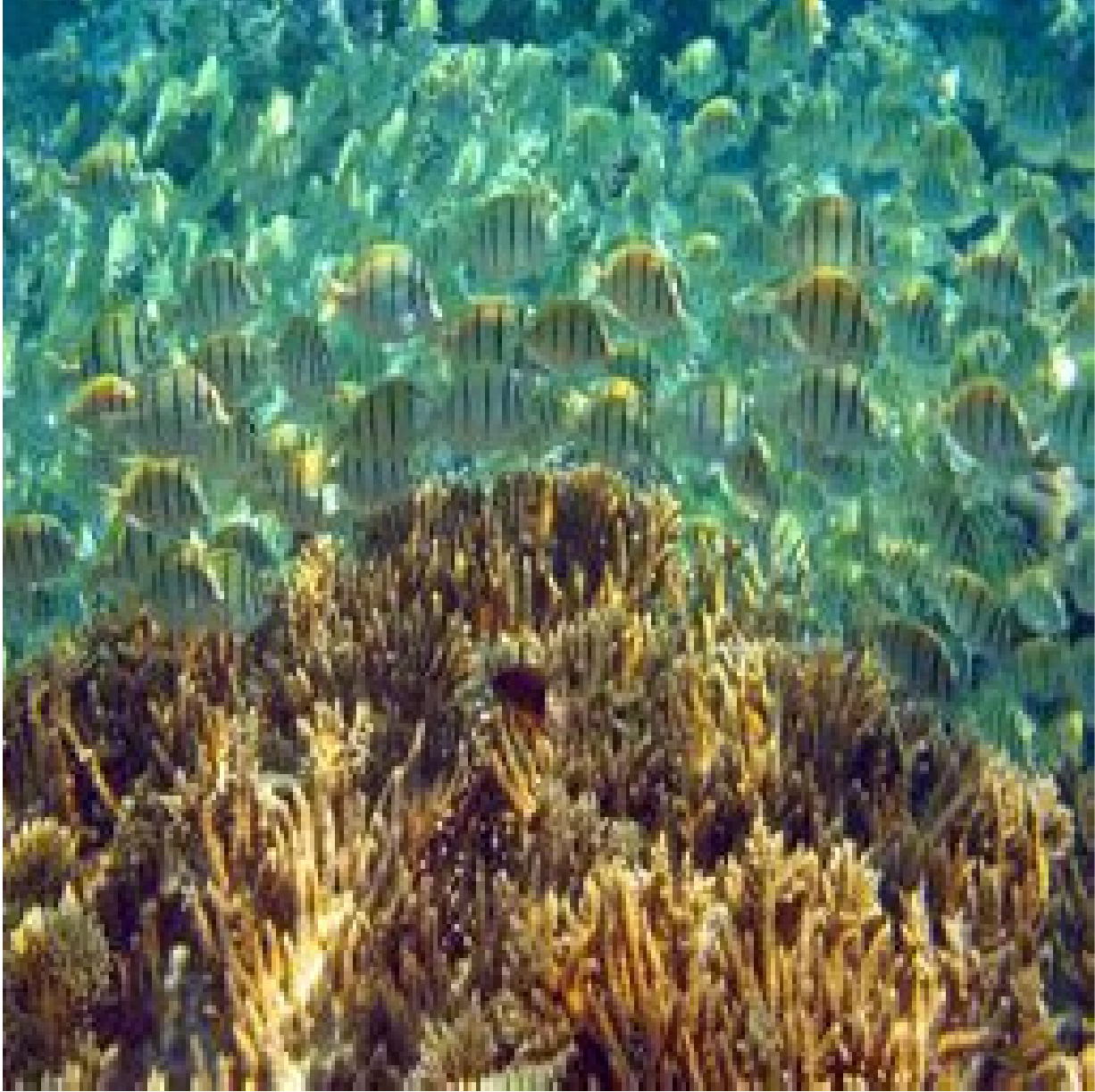}
	\includegraphics[width=\fWidRTex,height=\fHeiRTex]{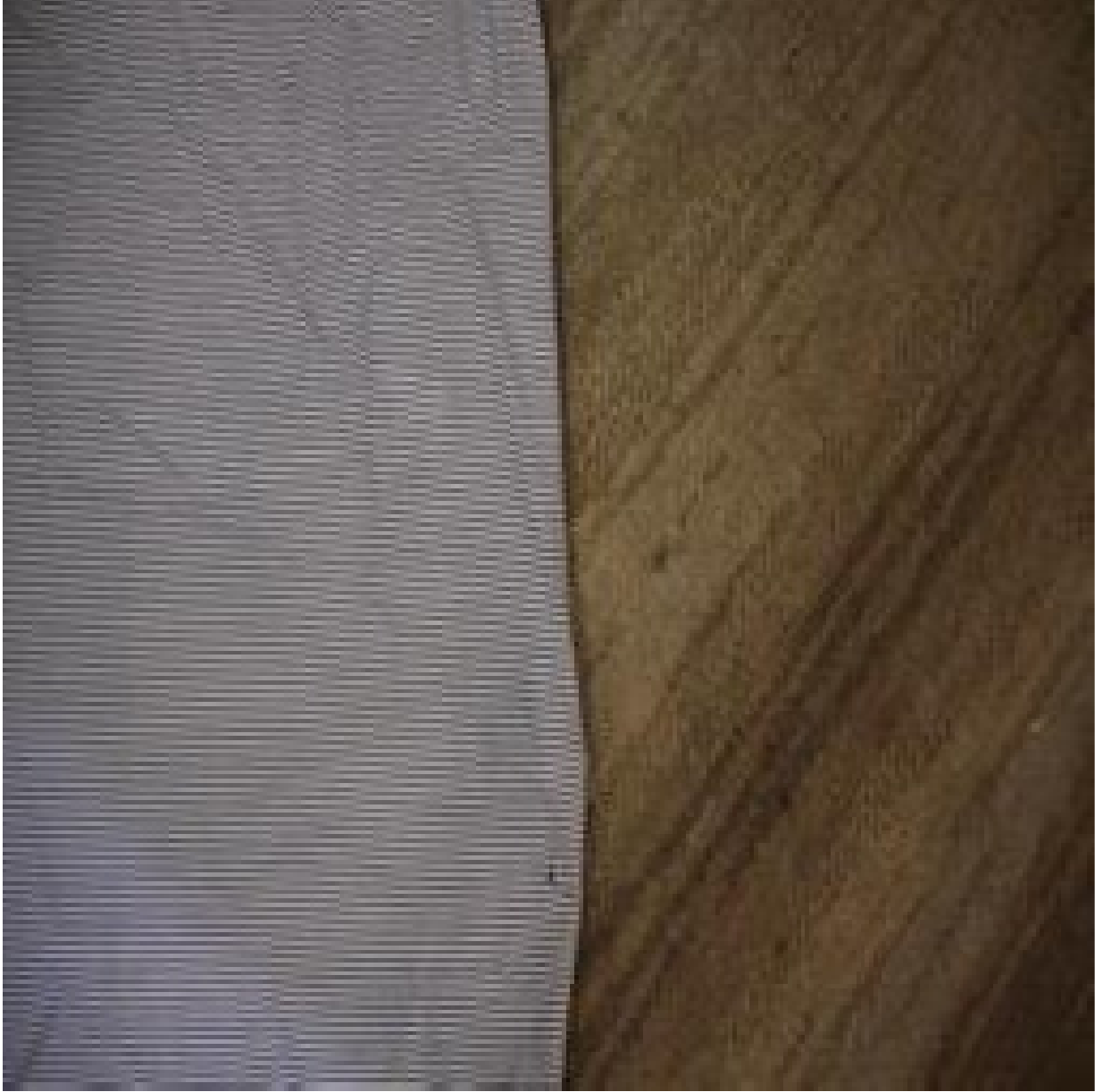}
	\includegraphics[width=\fWidRTex,height=\fHeiRTex]{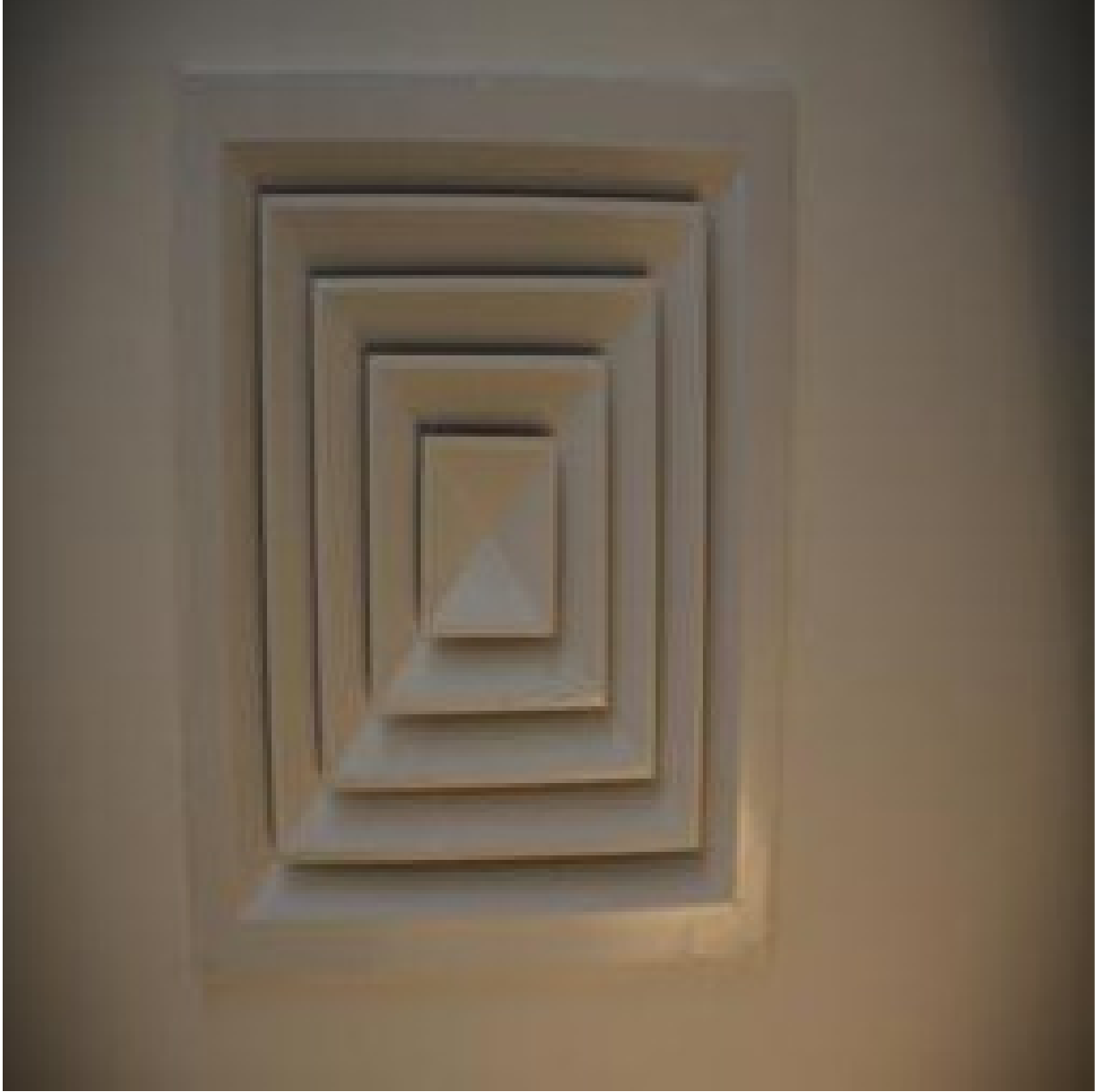}
	\includegraphics[width=\fWidRTex,height=\fHeiRTex]{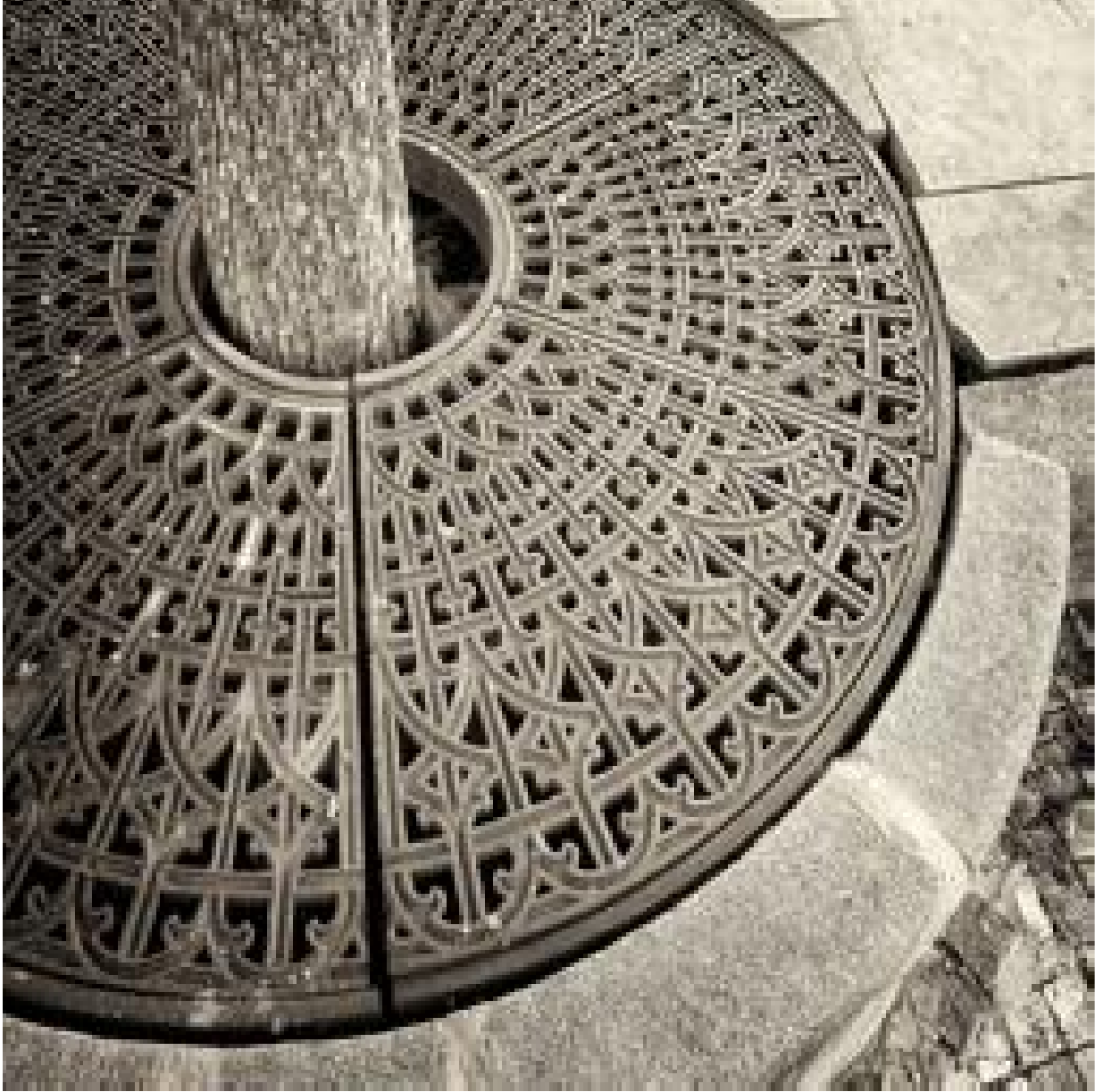}
	\includegraphics[width=\fWidRTex,height=\fHeiRTex]{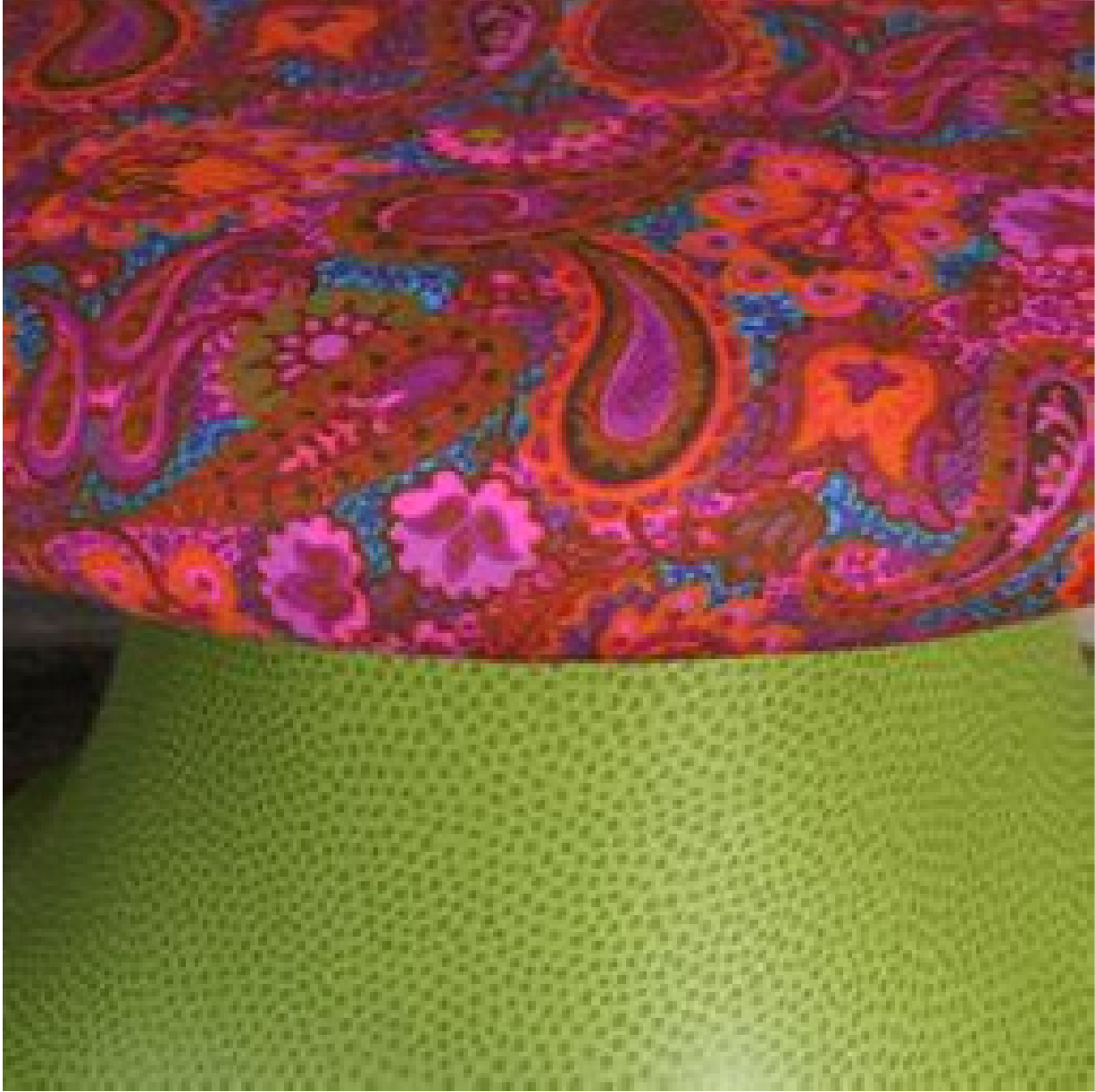}
	\includegraphics[width=\fWidRTex,height=\fHeiRTex]{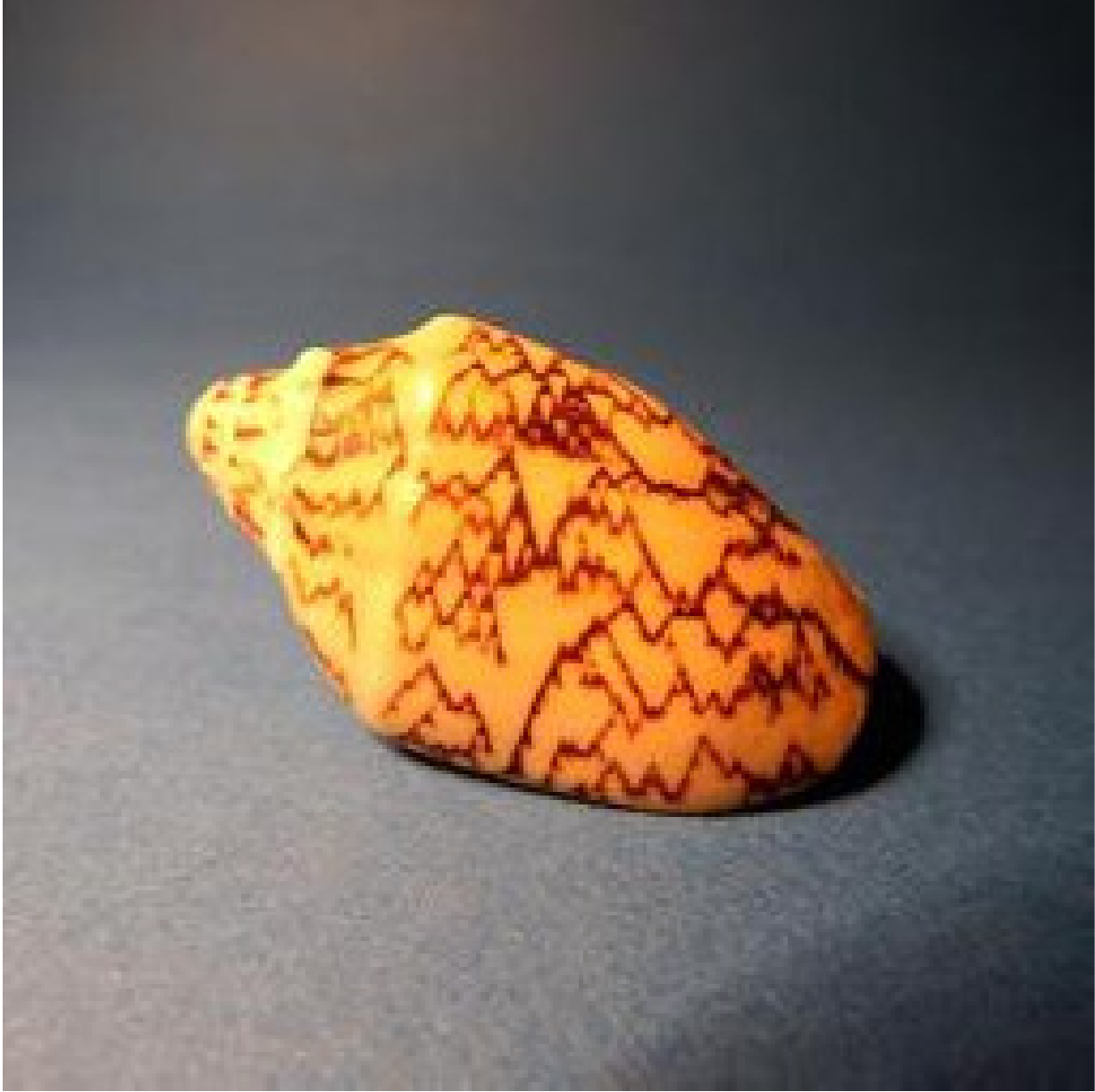}
	\includegraphics[width=\fWidRTex,height=\fHeiRTex]{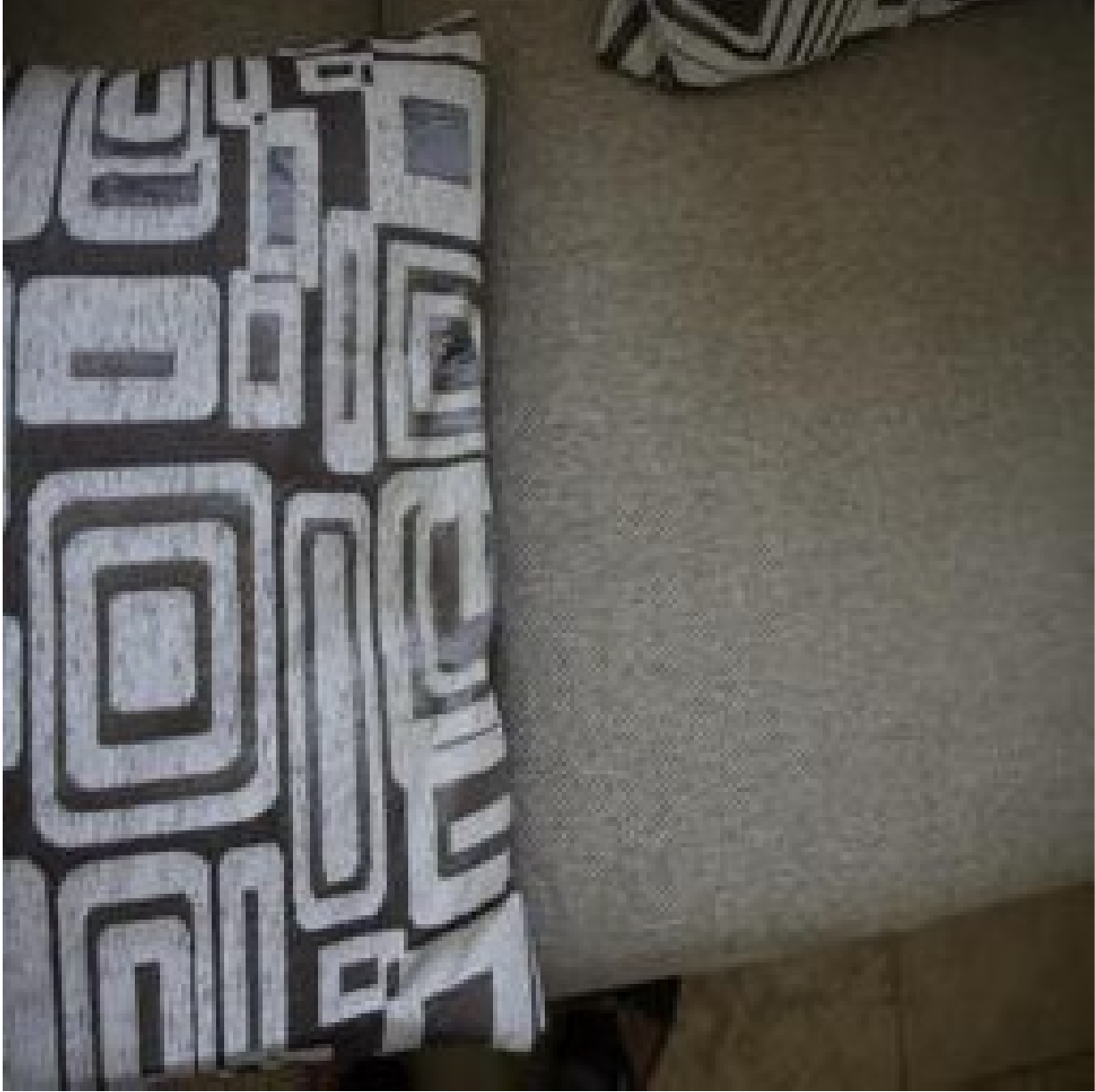}\\[-\dp\strutbox]
	\vspace{1pt}
	ground truth &
	\includegraphics[width=\fWidRTex,height=\fHeiRTex]{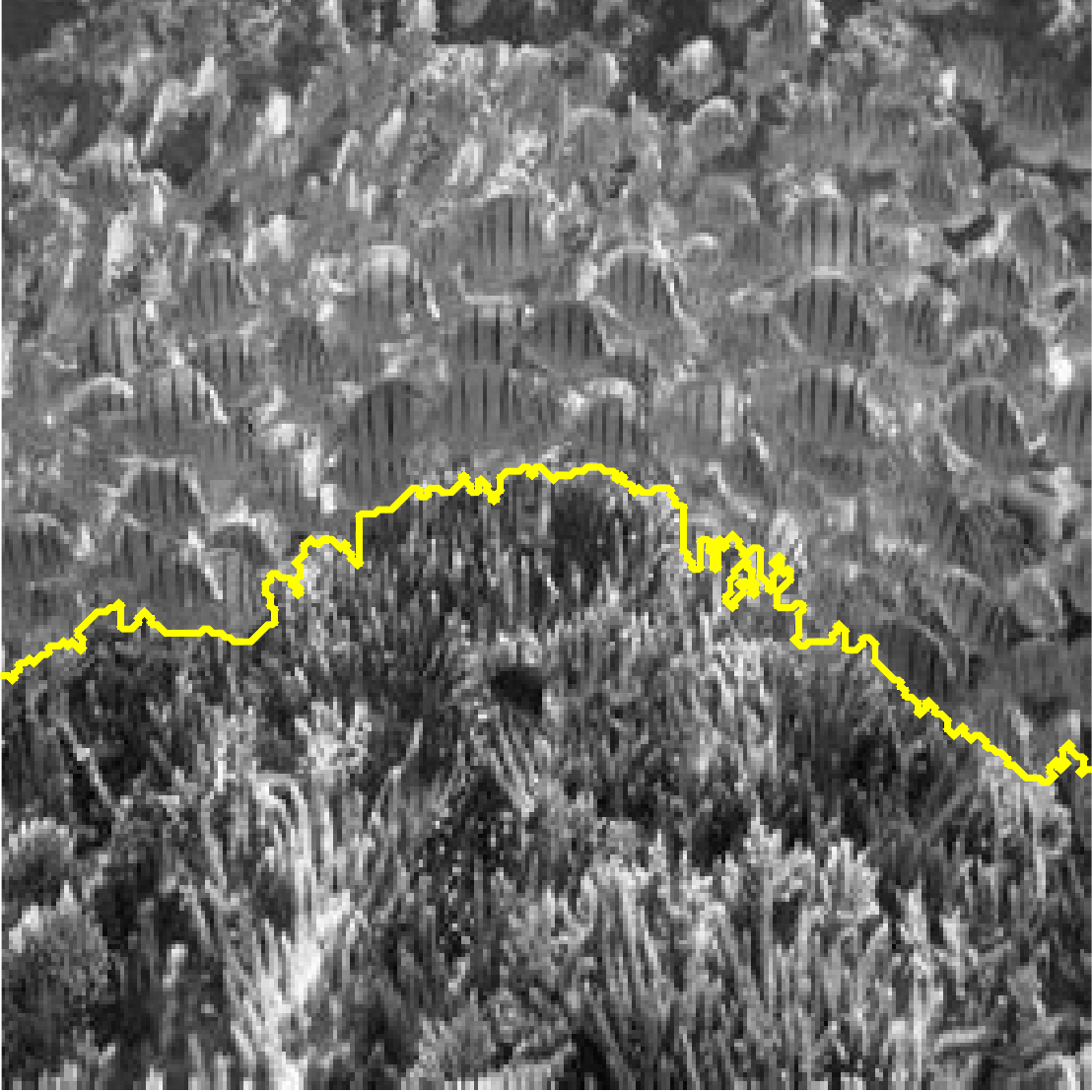}
	\includegraphics[width=\fWidRTex,height=\fHeiRTex]{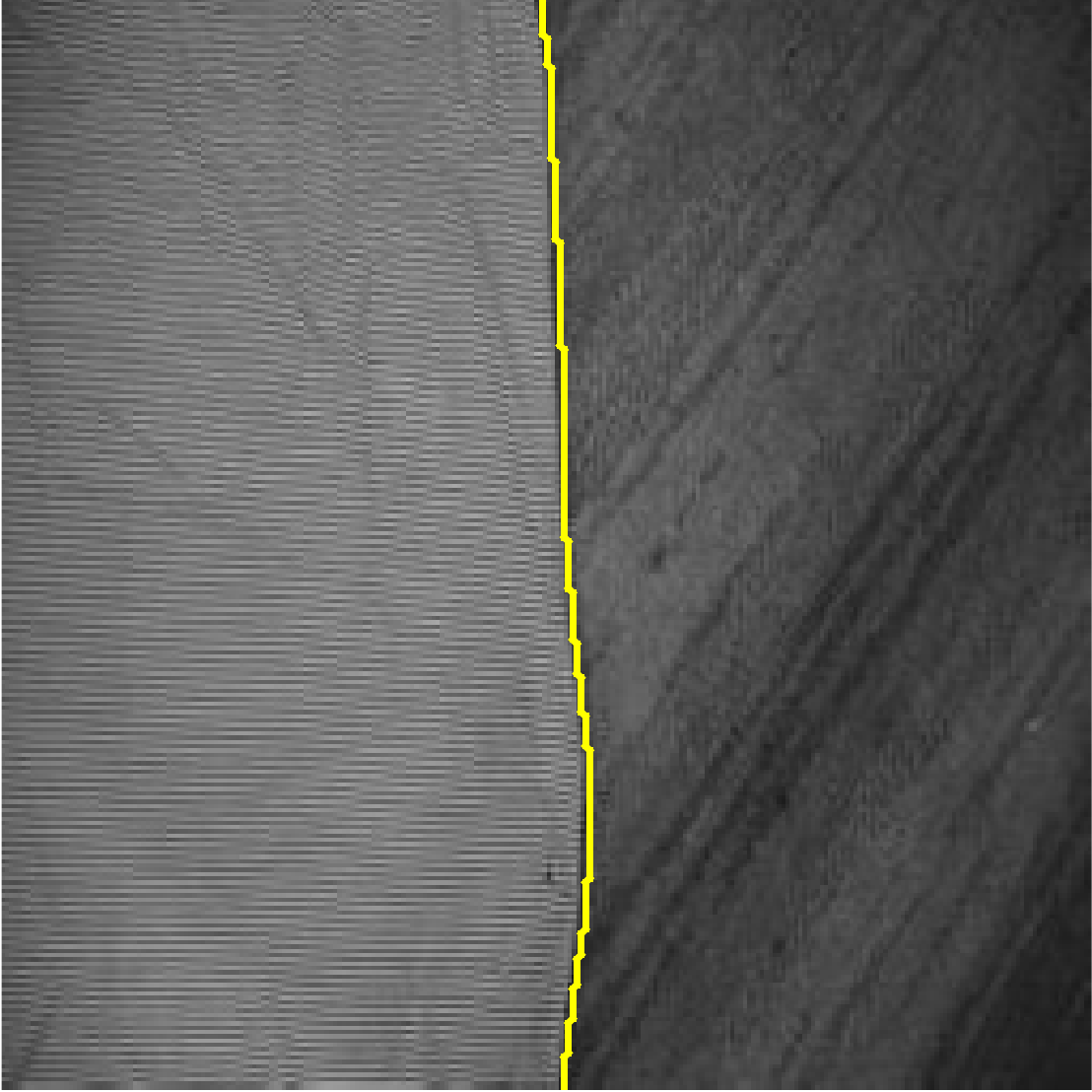}
	\includegraphics[width=\fWidRTex,height=\fHeiRTex]{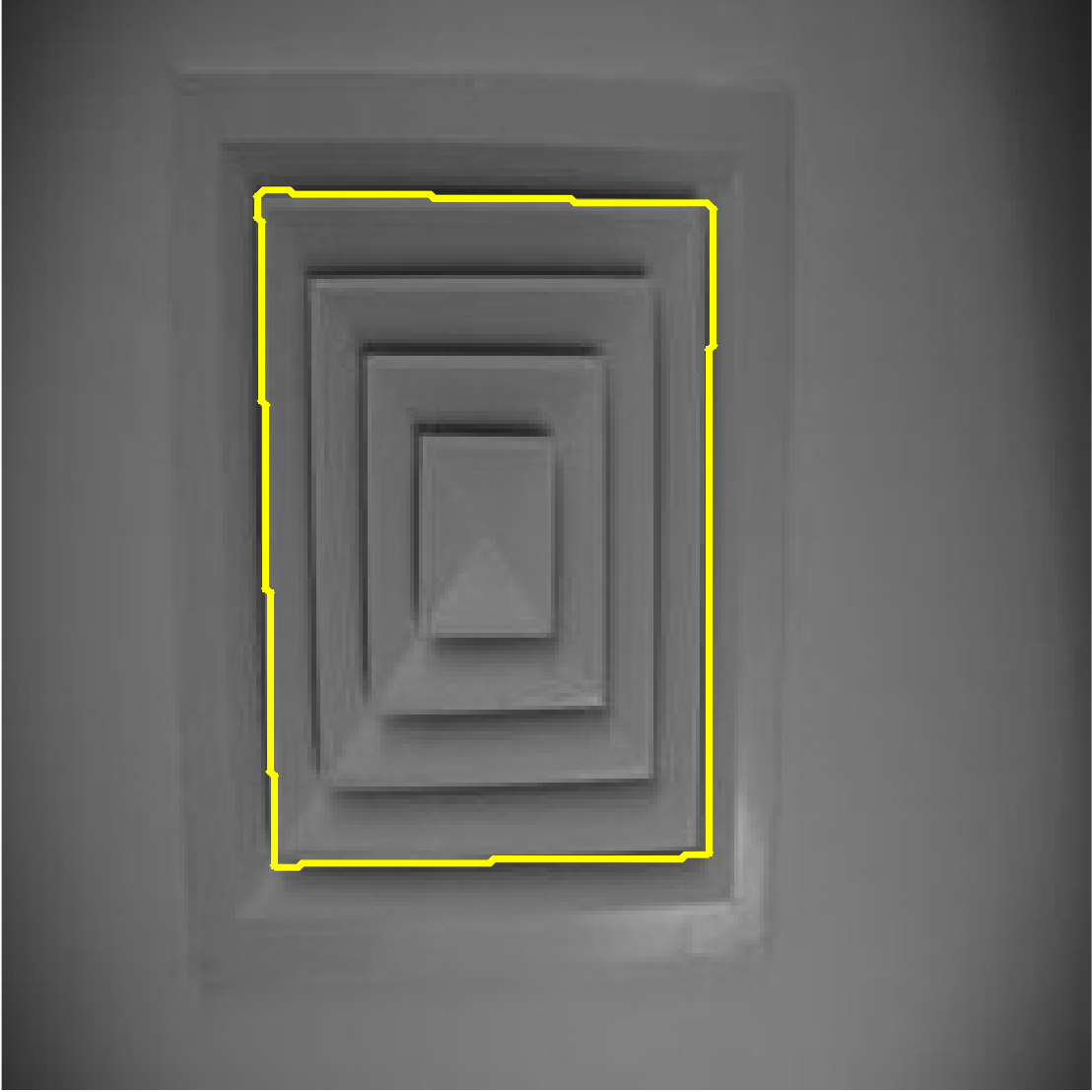}
	\includegraphics[width=\fWidRTex,height=\fHeiRTex]{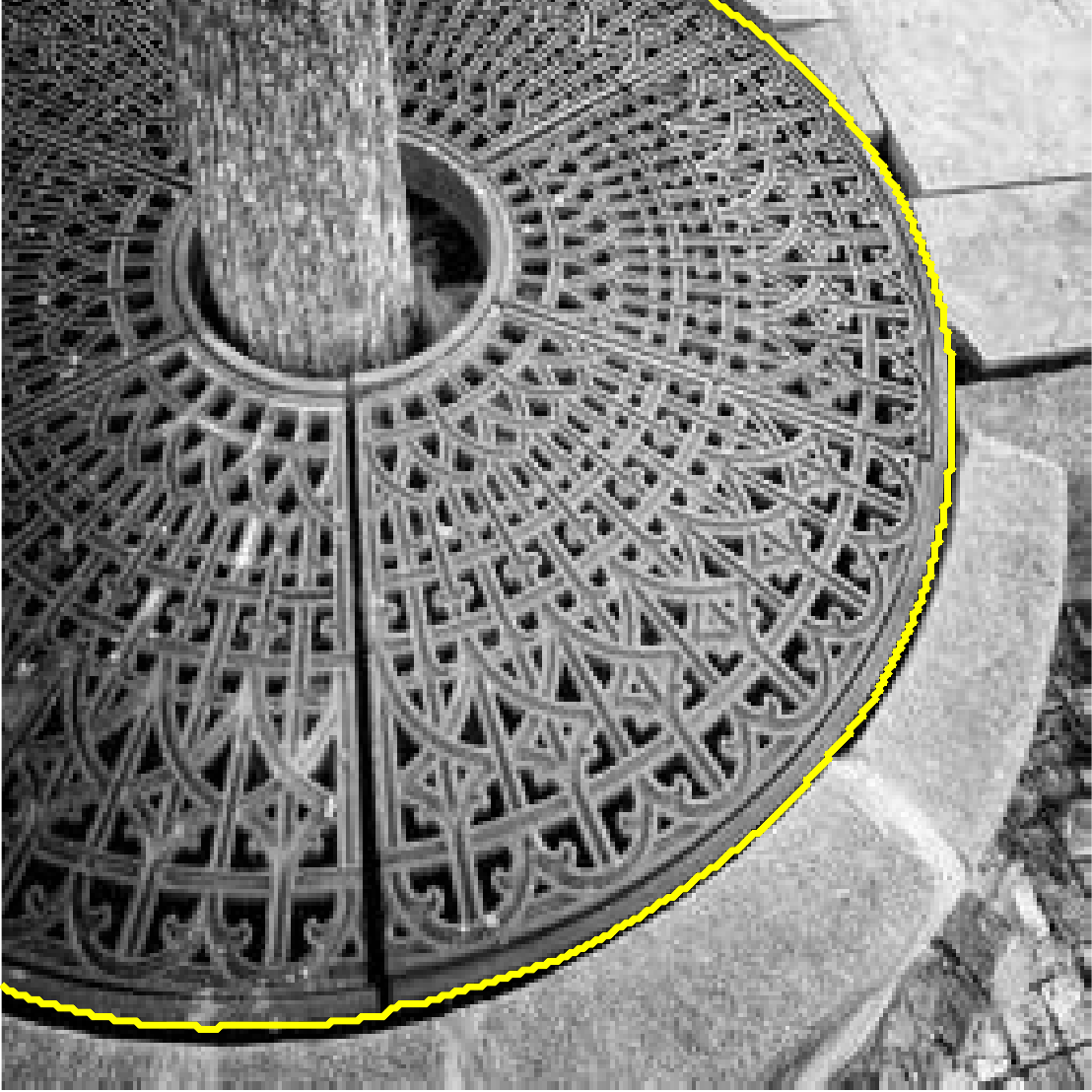}
	\includegraphics[width=\fWidRTex,height=\fHeiRTex]{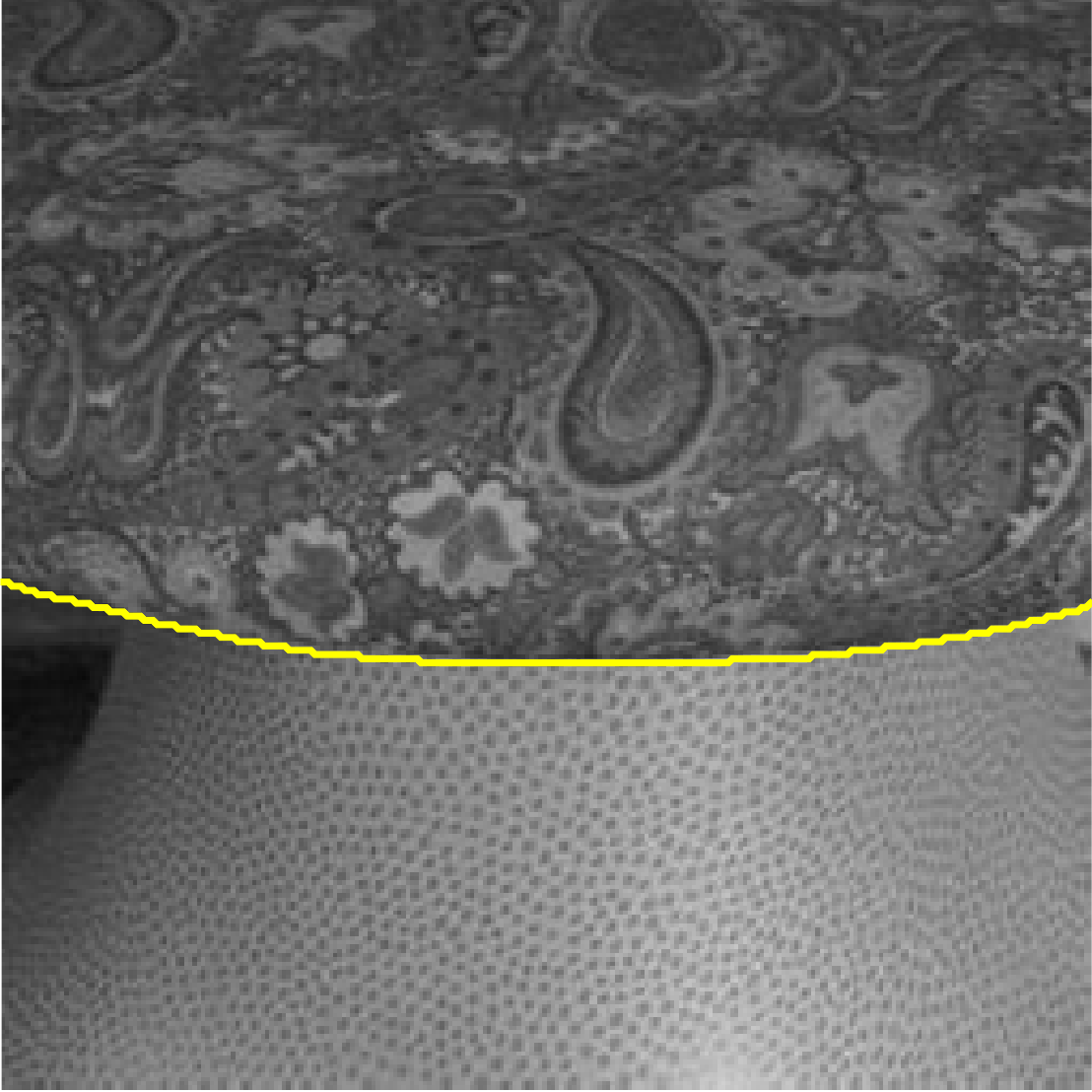}
	\includegraphics[width=\fWidRTex,height=\fHeiRTex]{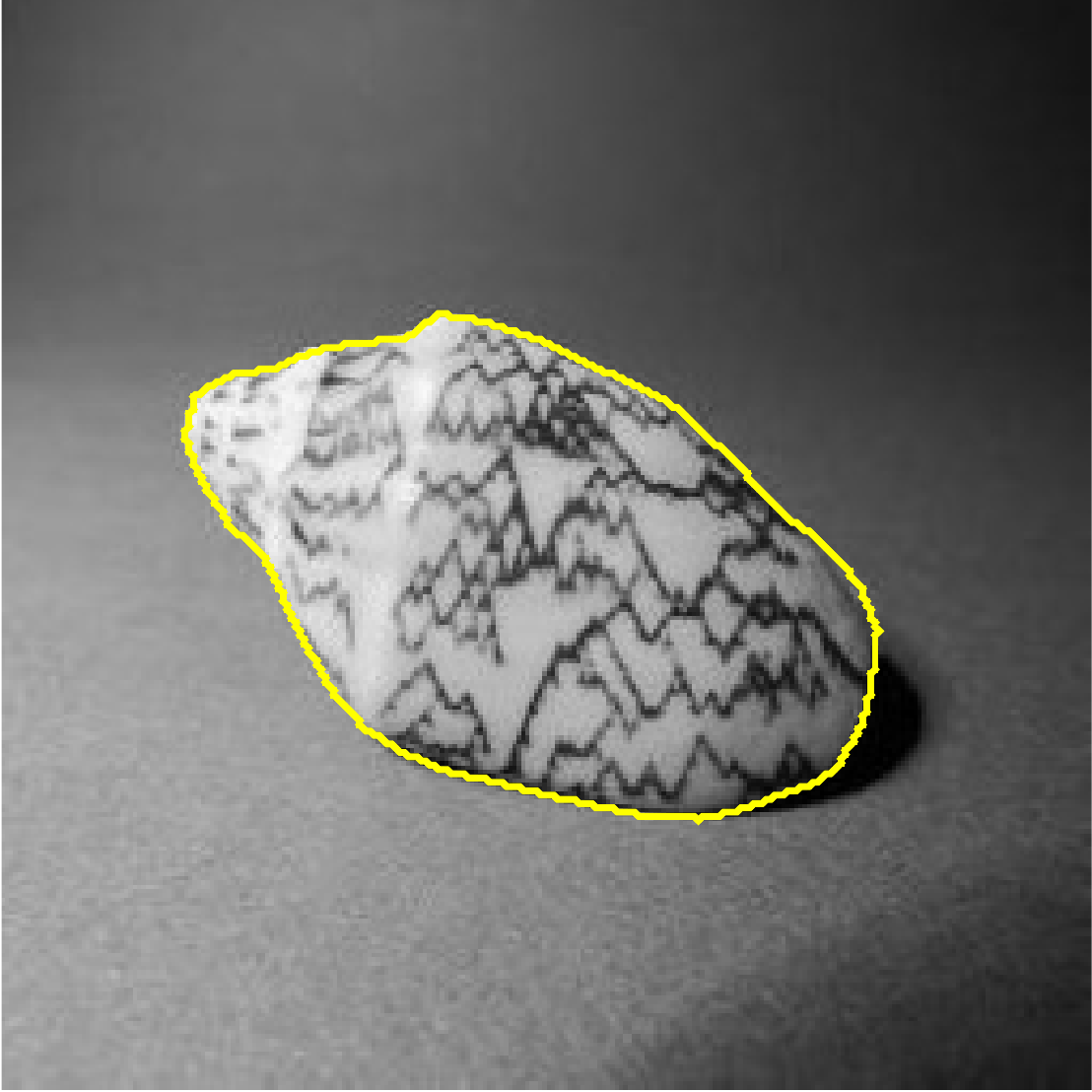}
	\includegraphics[width=\fWidRTex,height=\fHeiRTex]{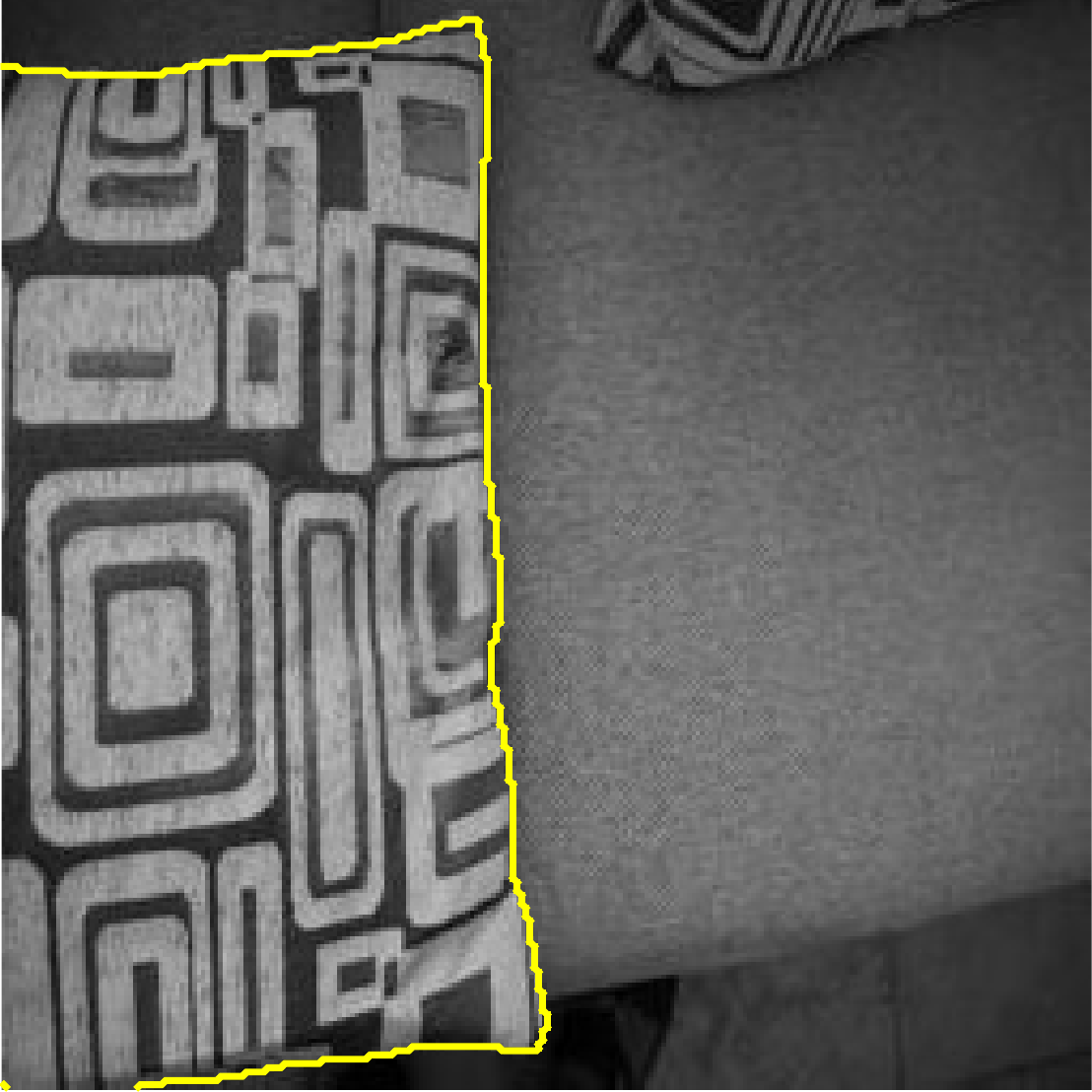}\\[-\dp\strutbox]
	\vspace{1pt}
	STLD &
	\includegraphics[width=\fWidRTex,height=\fHeiRTex]{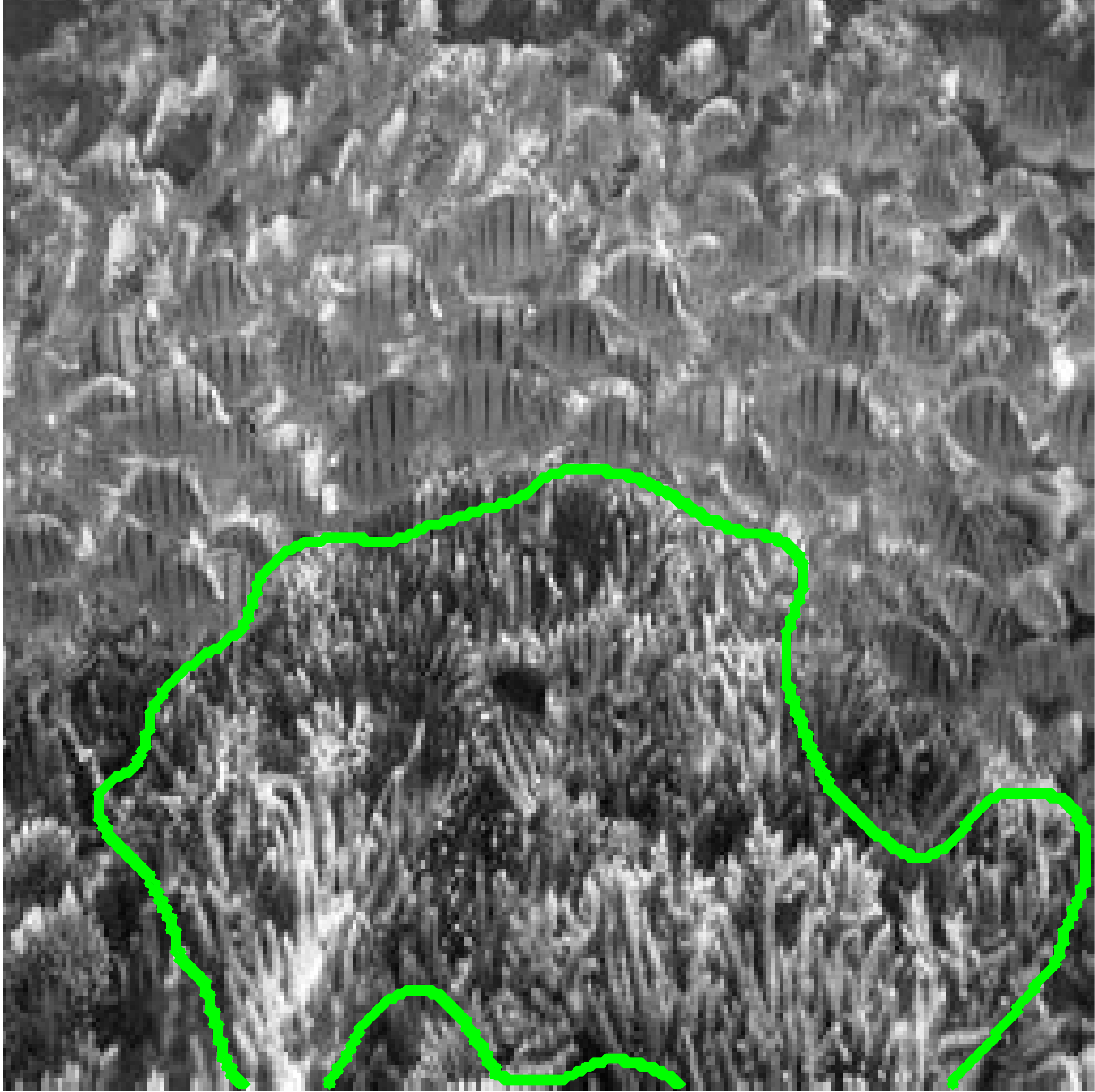}
	\includegraphics[width=\fWidRTex,height=\fHeiRTex]{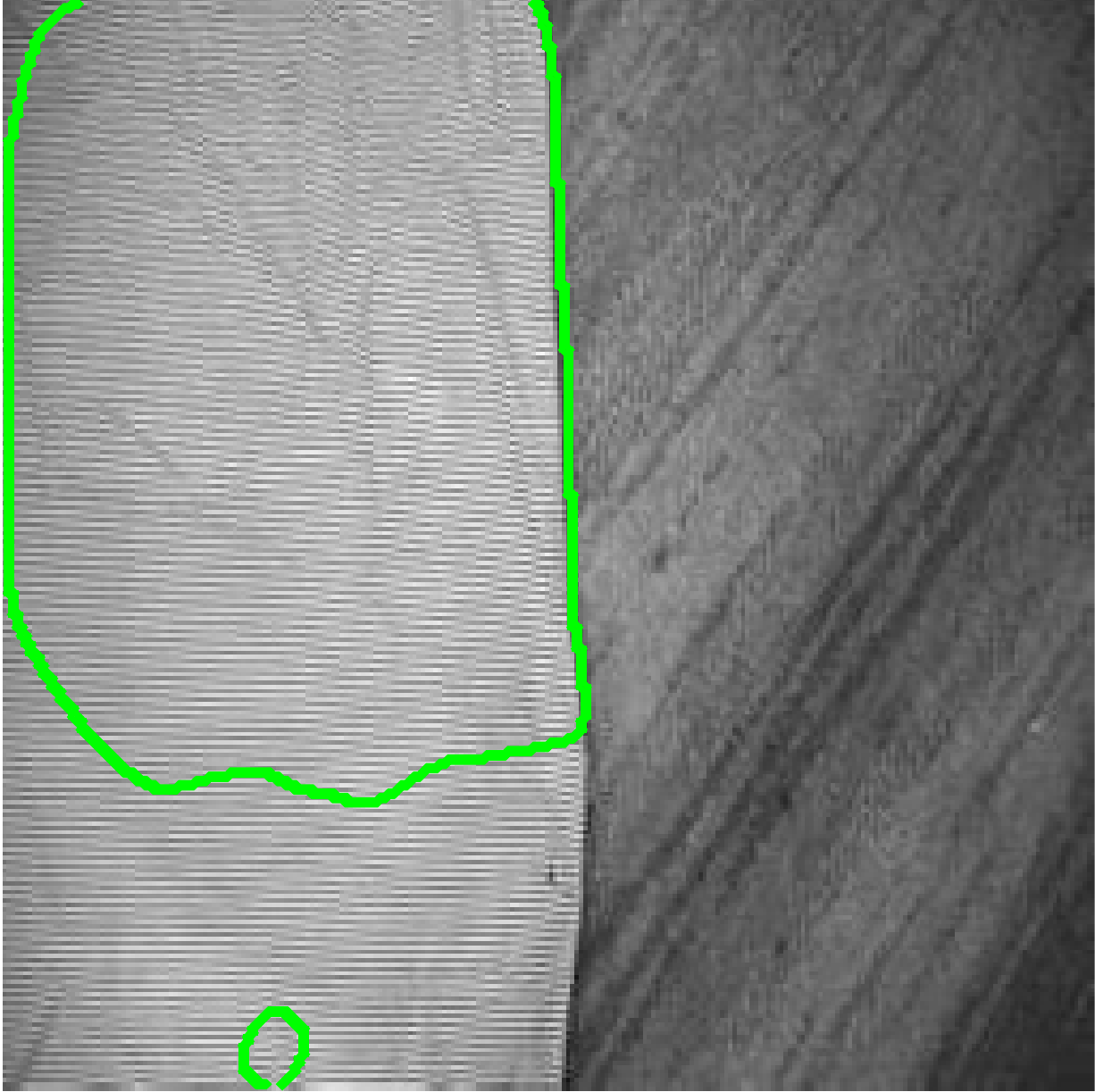}
	\includegraphics[width=\fWidRTex,height=\fHeiRTex]{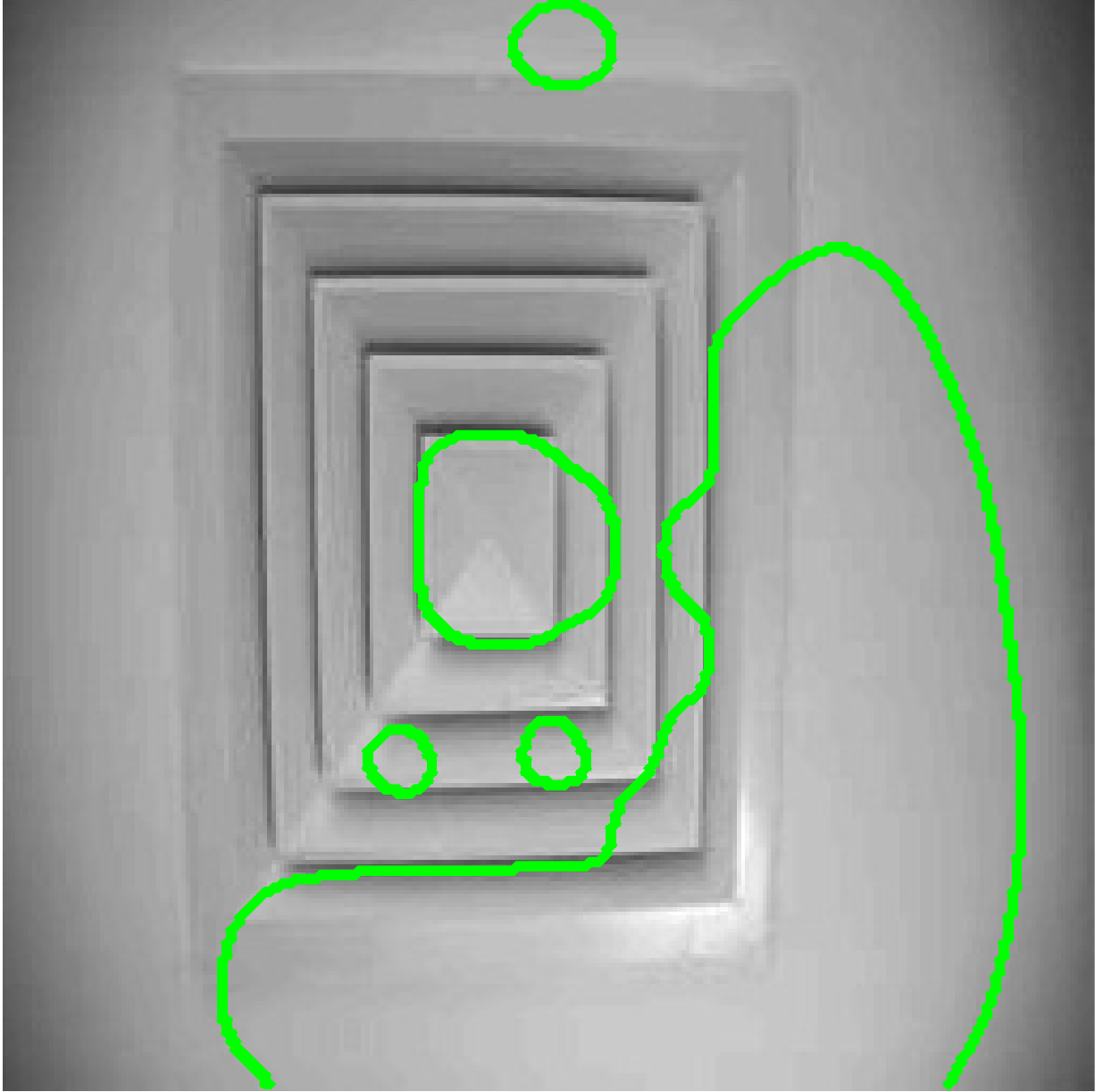}
	\includegraphics[width=\fWidRTex,height=\fHeiRTex]{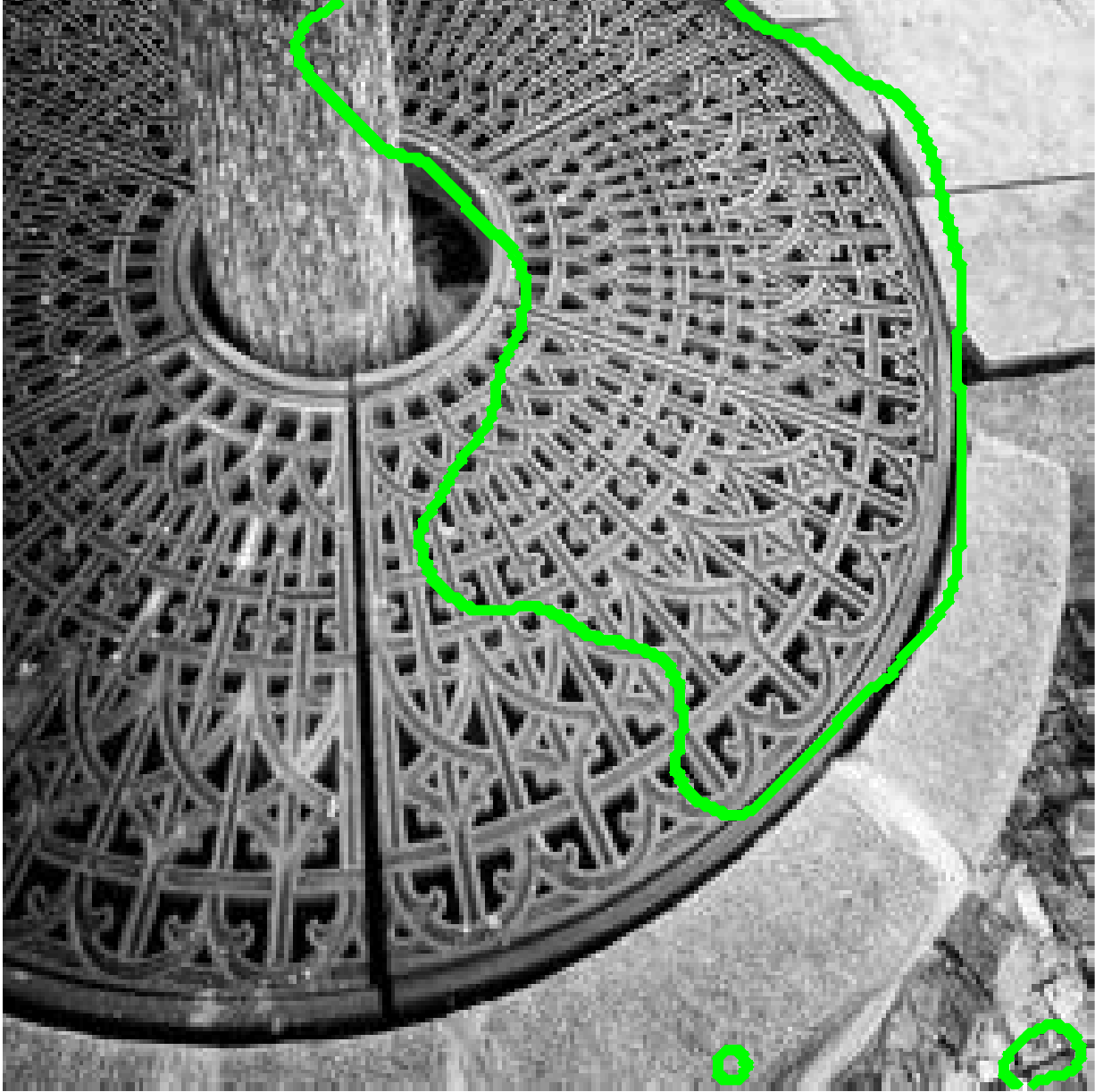}
	\includegraphics[width=\fWidRTex,height=\fHeiRTex]{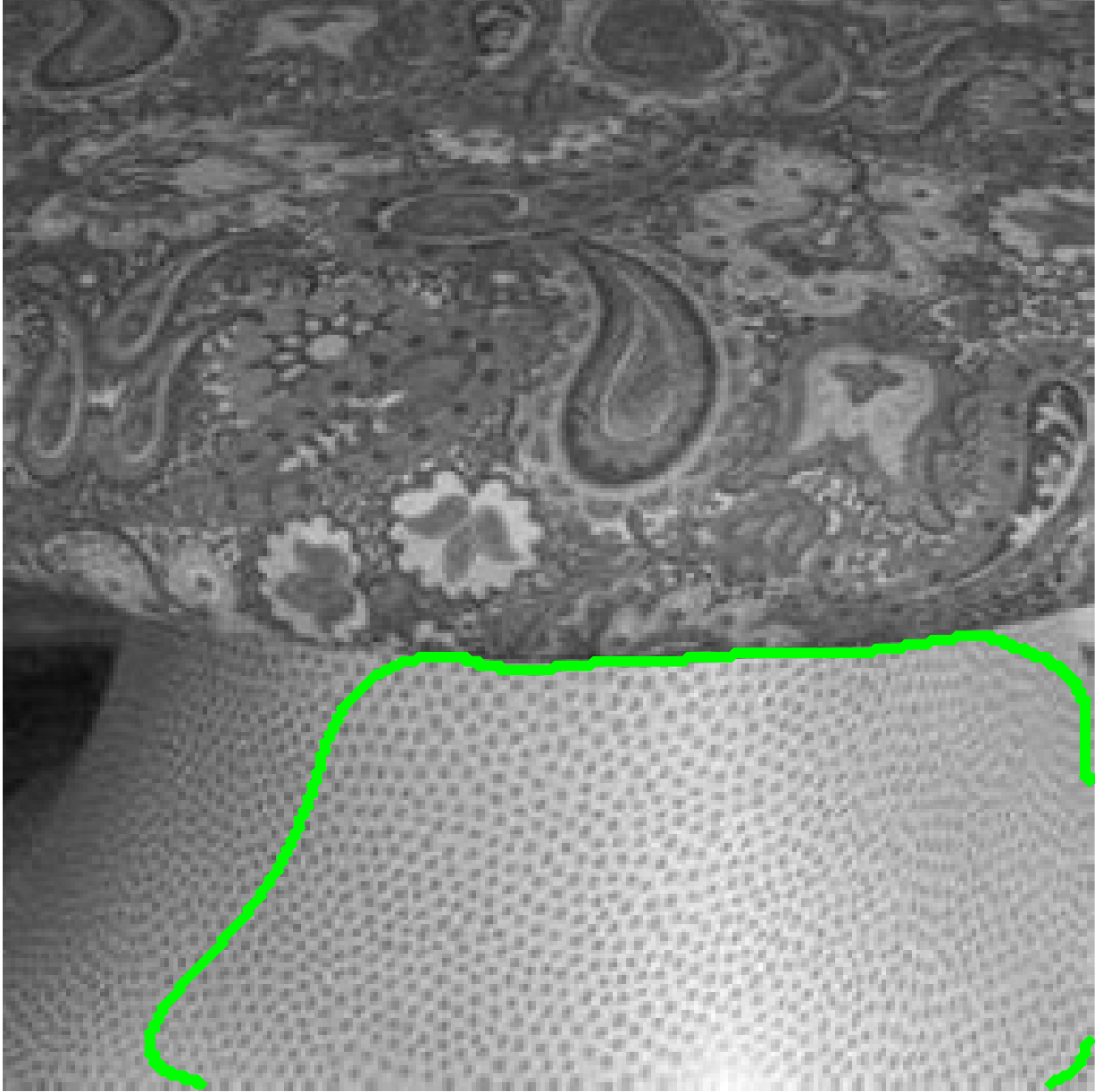}
	\includegraphics[width=\fWidRTex,height=\fHeiRTex]{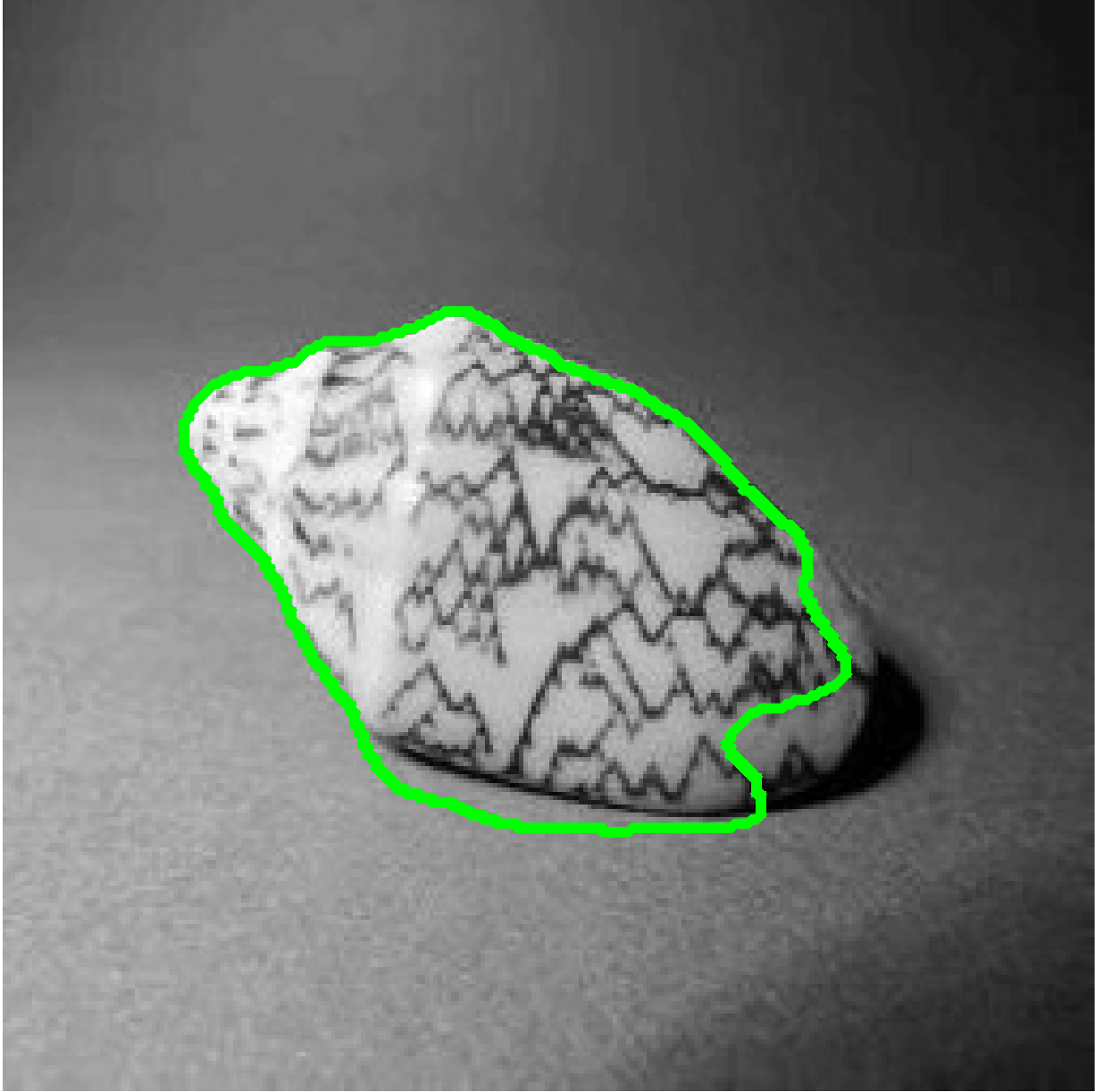}
	\includegraphics[width=\fWidRTex,height=\fHeiRTex]{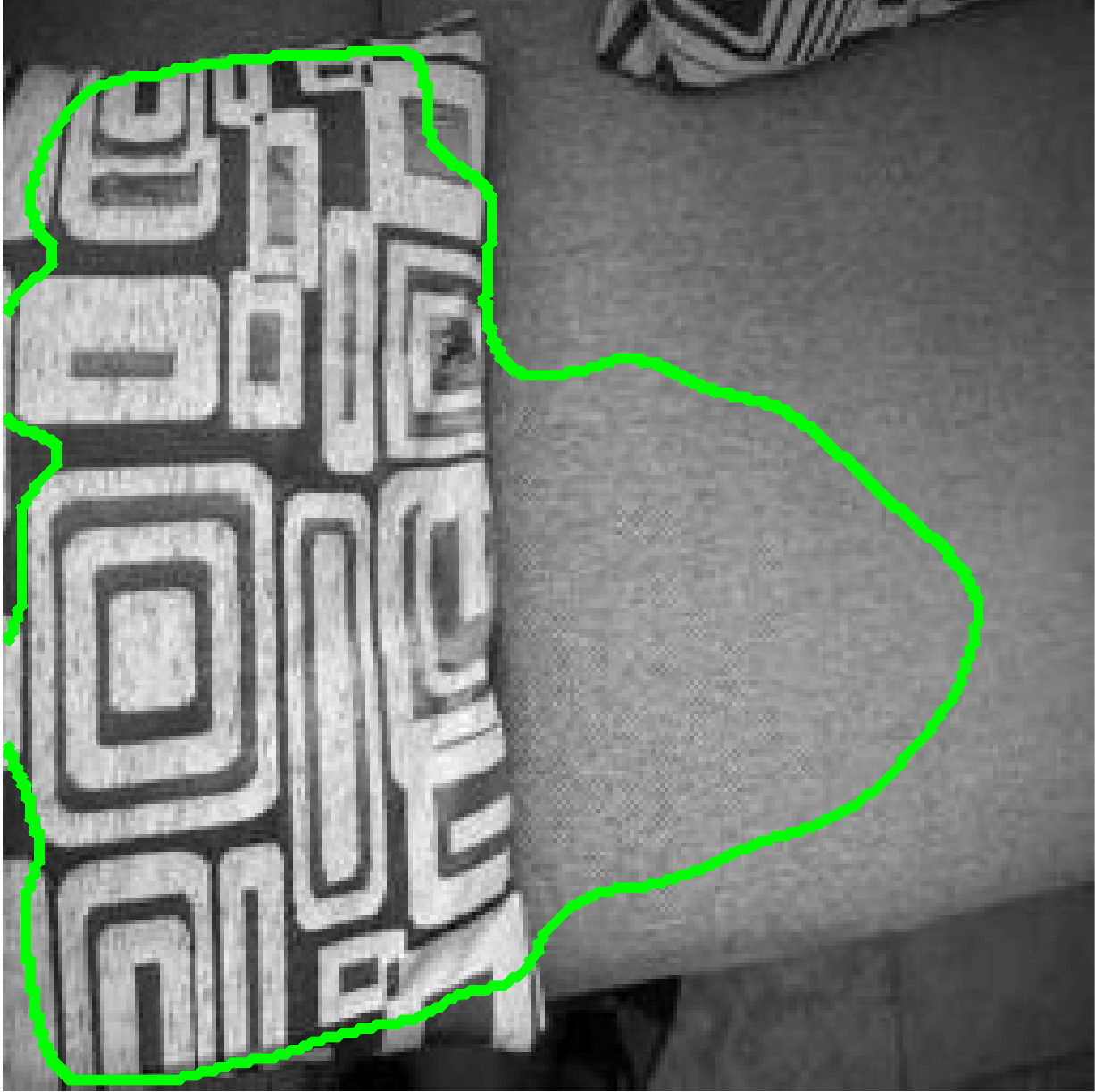}\\[-\dp\strutbox]
	
		resnet101-d-ce &
	\includegraphics[width=\fWidRTex,height=\fHeiRTex]{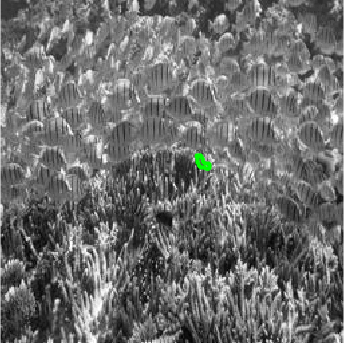}
	\includegraphics[width=\fWidRTex,height=\fHeiRTex]{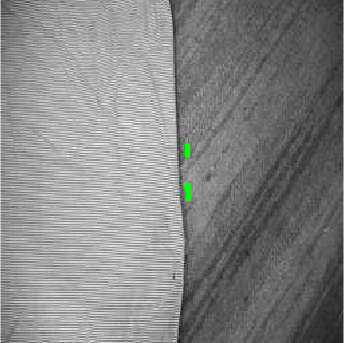}
	\includegraphics[width=\fWidRTex,height=\fHeiRTex]{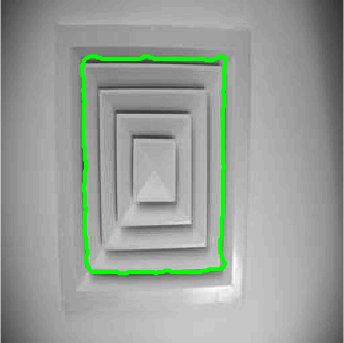}
	\includegraphics[width=\fWidRTex,height=\fHeiRTex]{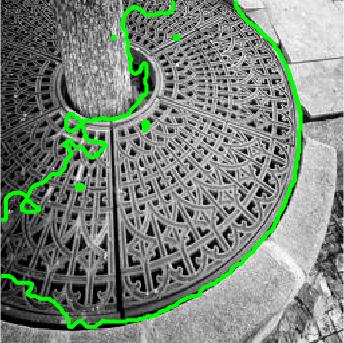}
	\includegraphics[width=\fWidRTex,height=\fHeiRTex]{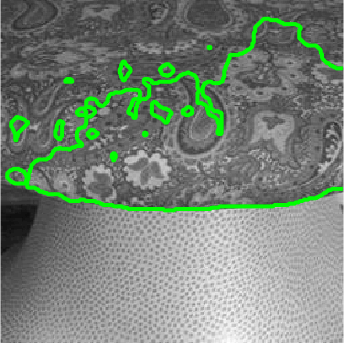}
	\includegraphics[width=\fWidRTex,height=\fHeiRTex]{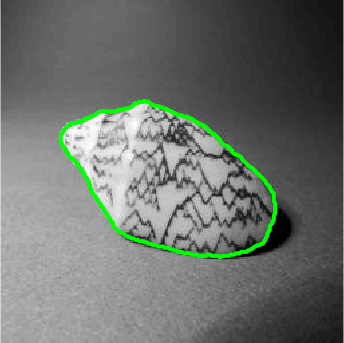}
	\includegraphics[width=\fWidRTex,height=\fHeiRTex]{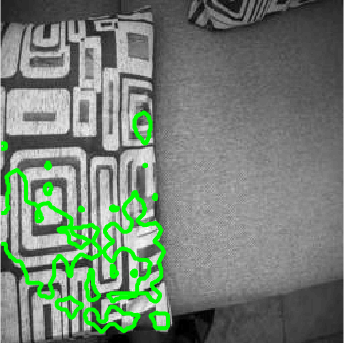}\\[-\dp\strutbox]
	\vspace{1pt}
	resnet101-all-ours&
	\includegraphics[width=\fWidRTex,height=\fHeiRTex]{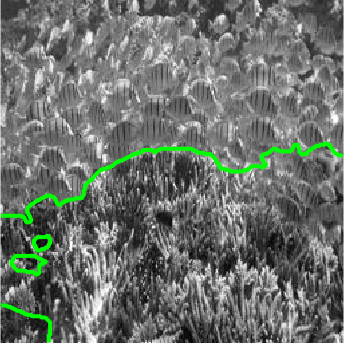}
	\includegraphics[width=\fWidRTex,height=\fHeiRTex]{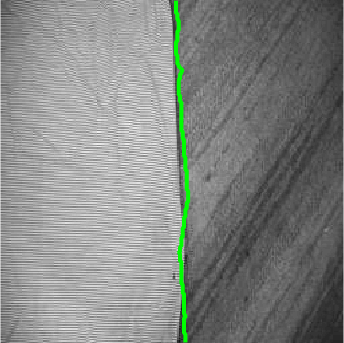}
	\includegraphics[width=\fWidRTex,height=\fHeiRTex]{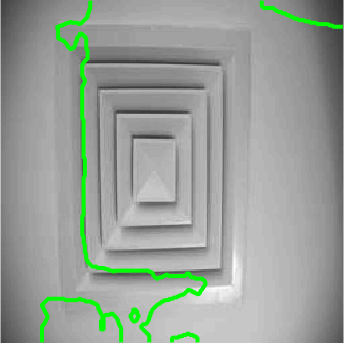}
	\includegraphics[width=\fWidRTex,height=\fHeiRTex]{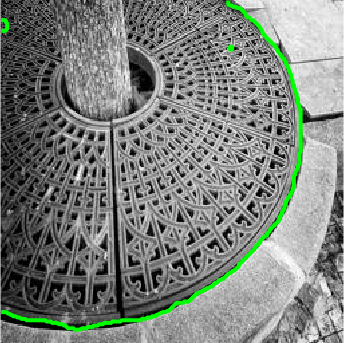}
	\includegraphics[width=\fWidRTex,height=\fHeiRTex]{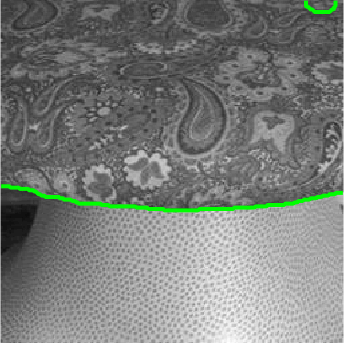}
	\includegraphics[width=\fWidRTex,height=\fHeiRTex]{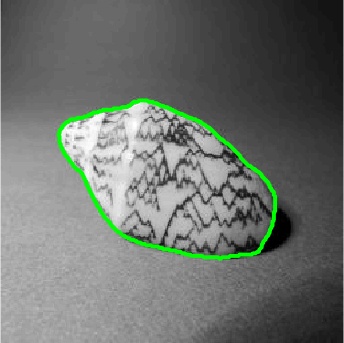}
	\includegraphics[width=\fWidRTex,height=\fHeiRTex]{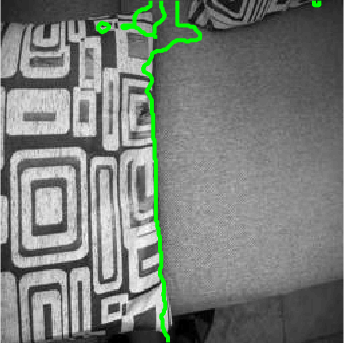}\\[-\dp\strutbox]
	
	\vspace{1pt}
	deeplabv3-all-ours&
	\includegraphics[width=\fWidRTex,height=\fHeiRTex]{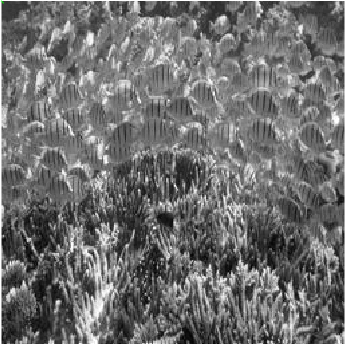}
	\includegraphics[width=\fWidRTex,height=\fHeiRTex]{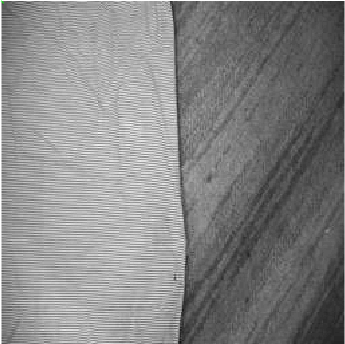}
	\includegraphics[width=\fWidRTex,height=\fHeiRTex]{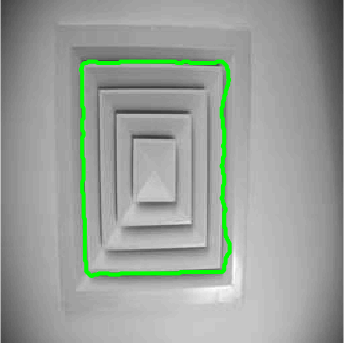}
	\includegraphics[width=\fWidRTex,height=\fHeiRTex]{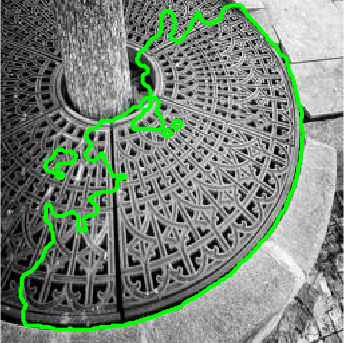}
	\includegraphics[width=\fWidRTex,height=\fHeiRTex]{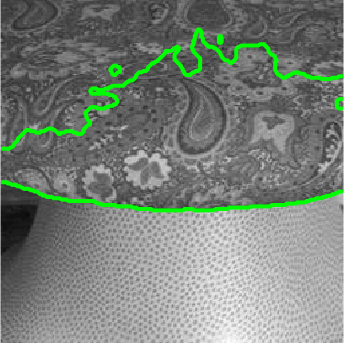}
	\includegraphics[width=\fWidRTex,height=\fHeiRTex]{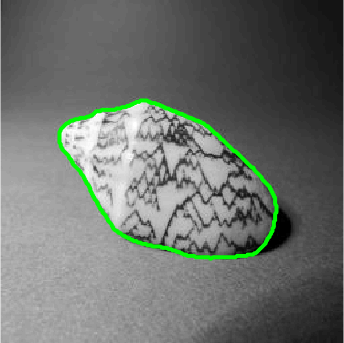}
	\includegraphics[width=\fWidRTex,height=\fHeiRTex]{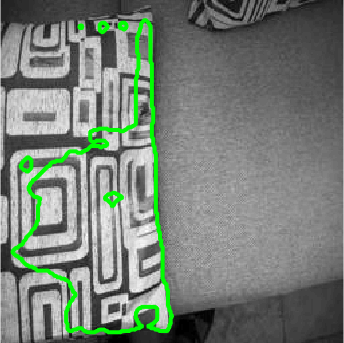}\\[-\dp\strutbox]
	
	ST-DNN &
	\includegraphics[width=\fWidRTex,height=\fHeiRTex]{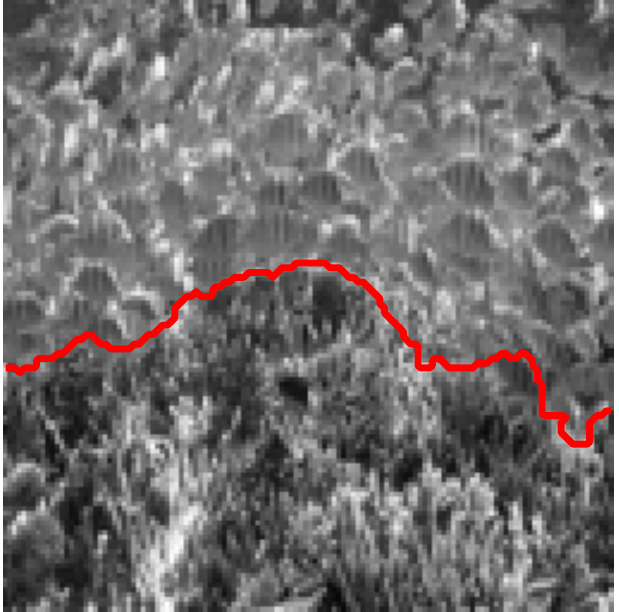}
	\includegraphics[width=\fWidRTex,height=\fHeiRTex]{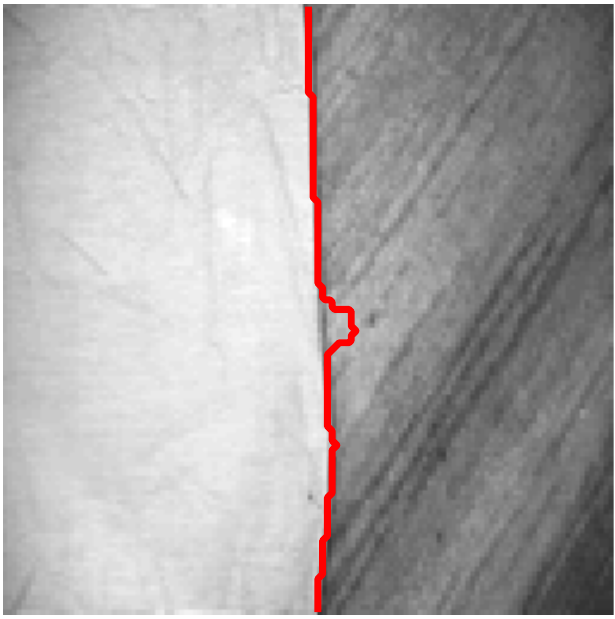}
	\includegraphics[width=\fWidRTex,height=\fHeiRTex]{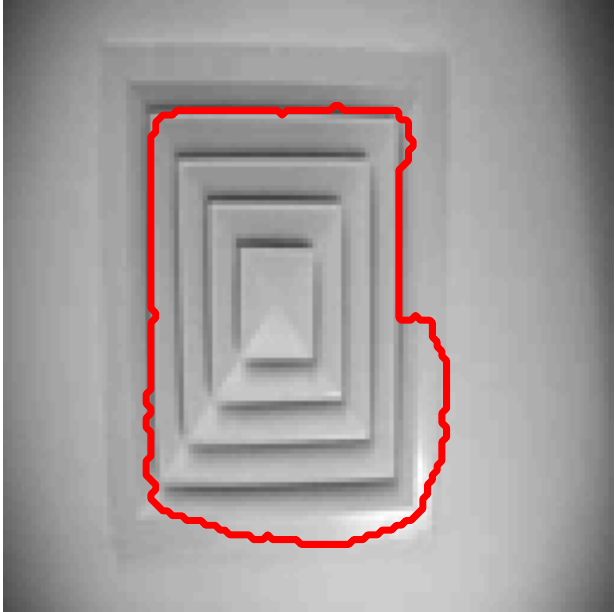}
	\includegraphics[width=\fWidRTex,height=\fHeiRTex]{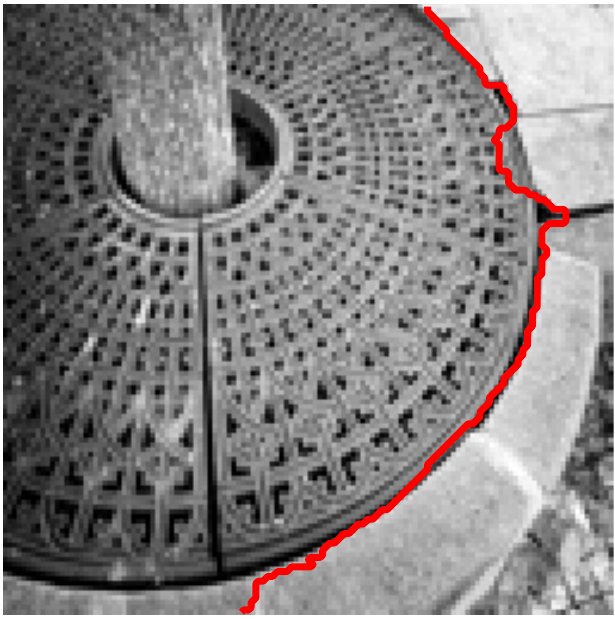}
	\includegraphics[width=\fWidRTex,height=\fHeiRTex]{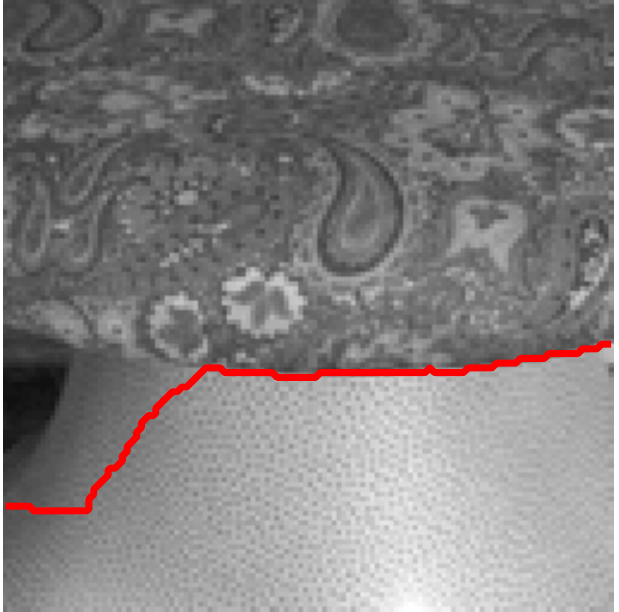}
	\includegraphics[width=\fWidRTex,height=\fHeiRTex]{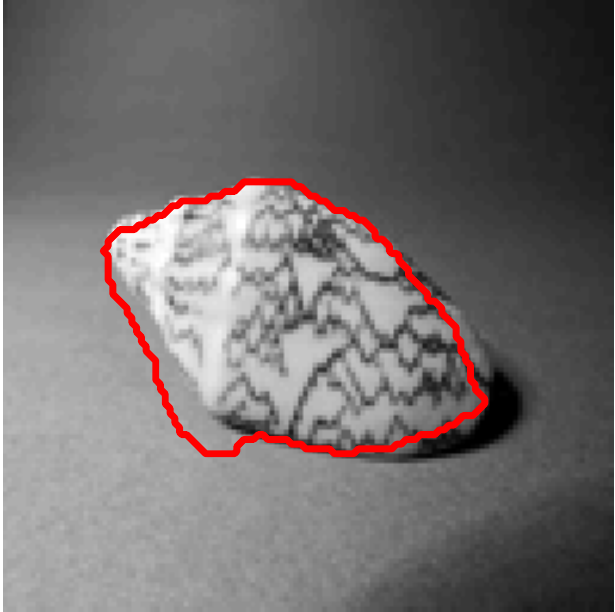}
	\includegraphics[width=\fWidRTex,height=\fHeiRTex]{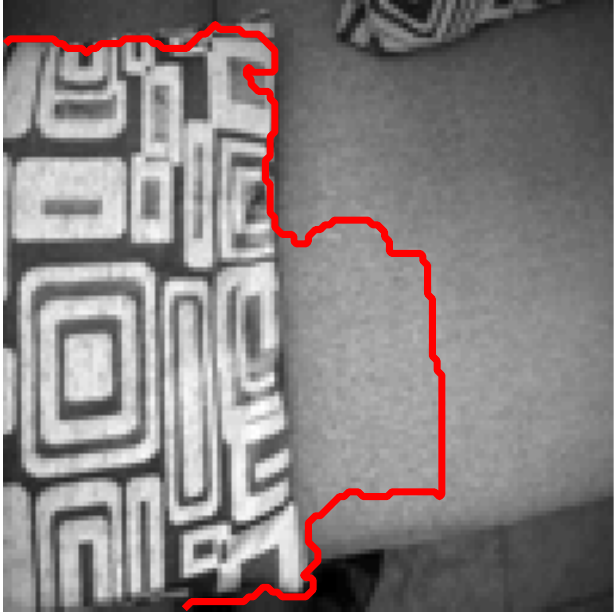}\\[-\dp\strutbox]
\vspace{-0.5cm}	
\end{tabular}
\caption{\sl {\bf Sample representative results on Real-World
		Texture Dataset}. We compare the ST-DNNs (ours),
	STLD, and deep learning based methods.}
\label{fig:results_real_textures}
\end{figure}

\begin{table}
{\scriptsize
	\begin{tabular}{l@{}|c@{\hspace{1em}}|c@{\hspace{1em}}cc@{\hspace{1em}}cc@{\hspace{1em}}cc}
	    \multicolumn{5}{c}{\bf Real-World Texture Dataset}\\\hline
		& {Contour} & \multicolumn{3}{c}{Region metrics} \\\hline
		&  {F-meas.}  & {GT-cov.} & {Rand.~Index} & {Var.~Info.} \\
        \hline
		ST-DNN (ours) &   0.64  & {\bf 0.94} & {\bf 0.94 }   & {\bf 0.35}  \\
		Siamese  &  0.65 &  0.92 &  0.92  &   0.43   \\
		STLD  &  0.58 & 0.86  & 0.88   & 0.63  \\
		mcg   &  0.54 & 0.82  &  0.85 &  0.66 \\
		Kok.&  0.64  &0.56 &  0.57 & 0.92 \\
		non-STLD &  0.20  &  0.83 &  0.84 & 0.79 
	\end{tabular}
	\vline \vspace{1em} \vline
	\begin{tabular}{l@{}|c@{\hspace{1em}}|c@{\hspace{1em}}cc@{\hspace{1em}}cc@{\hspace{1em}}cc}
    	\multicolumn{5}{c}{\bf Synthetic Brodatz Texture Dataset}\\\hline
		& {Contour} & \multicolumn{3}{c}{Region metrics} \\\hline
		&  {F-meas.}  &{GT-cov.} & {Rand.~Index} & {Var.~Info.} \\\hline
		ST-DNN (ours) & {\bf 0.49}  & {\bf  0.92}   & {\bf 0.92 }  & {\bf 0.44}   \\
		Siamese  &   0.45&   0.90 &  0.89  &    0.46  \\
		STLD&  0.41 & 0.87  & 0.86   & 0.53  \\
		gPb   &  0.40  &0.81 &  0.82 & 0.75  \\
		CB  & 0.30   & 0.77 & 0.79 & 1.09 \\
		non-STLD  & 0.18  & 0.84 & 0.84  & 0.65  
	\end{tabular}%
\caption{{\bf Results on Real-World Segmentation
		Dataset (left) and Synthetic Dataset (right)}. Comparisons are performed against CB \cite{isola2014crisp}, gPb \cite{arbelaez2011contour}, Siamese \cite{Khan2018a}, Kok.\cite{Kokkinos2015}, mcg \cite{arbelaez2014multiscale}. See Table ~\ref{tab:results_Net} caption
	for details on the measures. Additional results in Appendix \ref{sec:AddResults_supp}}
\label{tab:results_texture_datasets}
}
\end{table}

\textbf{Comparison on Multi-Region Segmentation:}
We have also tested our method on two multi-region segmentation datasets, 1) BSDS500 2) Synthetic Texture dataset. For BSDS500 dataset we train our methods and state of the art deep networks on the training + validation set (200+100 images augmented with with 8 rotations and 5 scales) and test on the test-set on BSDS dataset (200 images). Results are summarized in Table \ref{tab:results_mul_region} and a few quantitative samples are show in figure \ref{fig:results_mul_region2} in Appendix \ref{sec:AddResults_supp}. BSDS500 is not the best dataset for training on region segmentation as the regions are not marked based on appearance and instead watershed is used on edge maps to generate segments. Hence region with similar looking appearance are often marked as separate regions in images. Despite this drawback ST-DNN performs on par or better than state of the art Deep Neural Networks. During training all images are resized to 256 $\times$ 256 and normalised to have zero mean and unit variance. In post-processing, the  outputs are clustered into 20 regions and then regions with less than 2\% pixels of the entire image are smoothed out using conditional random fields, we use pydensecrf \footnote{https://github.com/lucasb-eyer/pydensecrf} library. Following which we run 20 iterations of ~\ref{alg:multilabel_scheme_supp} . For quantitative comparisons we have provided analyses on region metrics on BSDS500 benchmark.

We have also tested on a large scale multi-region synthetic texture segmentation dataset \footnote{https://github.com/MMFa666/Segmentation\_dataset}. This dataset is designed as an extension of \cite{Khan2015}, we have a maximum of 4 regions per image comprising of different appearance textures. The dataset consists of 42000 training images, 3000 validation images and 5000 test images. The images are generated from 1084 textures collected from Brodatz dataset. Our methods is trained on only 200 images from the dataset, other methods are trained on the entire training set. Our method outperforms state of the art deep learning based methods by a healthy margin despite being significantly smaller in number of parameters and training data required for training. Similar to BSDS500 dataset we resize images to 256 $\times$ 256 and normalise them to have zero mean and unit variance. In post-processing we simply cluster the image into 4 regions and run 20 iterations of ~\ref{alg:multilabel_scheme_supp} . For quantitative comparison we have provided analyses on region accuracy metric provided with the dataset, motivated from PascalVOC class accuracy metric.

\begin{table}
{\centering
{\scriptsize
	\begin{tabular}{l@{}|c@{\hspace{1em}}cc@{\hspace{1em}}cc@{\hspace{1em}}cc}
	    \multicolumn{4}{c}{\bf BSDS}\\\hline
		& \multicolumn{3}{c}{Region metrics} \\\hline
		  & {GT-cov.} & {Rand.~Index} & {Var.~Info.} \\
        \hline
		Deeplab-V3  &  \textbf{0.39} &  0.72    &  1.88  \\
		FCN-Resnet   &  0.38 &  0.71  &   1.98   \\
		ST-DNN(ours)   &\textbf{ 0.39}  & \textbf{0.73}   & \textbf{1.80}  \\
	\end{tabular}
	\vline \vspace{1em} \vline
	\begin{tabular}{l@{}|c@{\hspace{1em}}}
    	\multicolumn{2}{c}{\bf Synthetic Multi-Region Texture Dataset}\\\hline
		& {Region Metric}   \\\hline
		&  {Avg. Accuracy}   \\\hline
		Deeplab-v3 & 0.41   \\
		FCN-resnet101  &   0.43  \\
		ST-DNN (ours)&  \textbf{0.45}   \\
	\end{tabular}%
\caption{{\bf Results on BSDS500 (left) and Synthetic multi-region Dataset (right)}. Comparisons are performed against state-of-the-art deep learning based methods. See Table ~\ref{tab:results_Net} caption
	for details on the measures}. 
\label{tab:results_mul_region}
}
}
\end{table}
\section{Conclusion}
We have introduced ST-DNNs, which generalize CNNs to arbitrary shaped regions by stacking layers that solve PDEs. These are relevant to segmentation since they avoid aggregation of data across segmentation regions. They are also covariant to translations and rotations, and robust to deformations. Experiments showed that ST-DNNs achieve state-of-the-art results on texture segmentation benchmarks: they outperform state-of-the-art deep networks by 20\% on edge metrics and 10\% on region metrics while being \textbf{3} orders of magnitude smaller and using \textbf{2} orders of magnitude less training data. Experiments also validated covariance and robustness of ST-DNNs. To show the strength of the formulation we have tested it on four datasets, which include binary and multi-region segmentation datasets.

{\small
\bibliographystyle{iclr2021_conference}
\bibliography{iclr2021_conference}
}

\appendix
\section{Appendix: Outline}

The Appendices material is divided into the following sections.
\begin{itemize}
    \item Analytical Proofs of Covariance and Robustness [\ref{sec:Proofs_supp}]
    \item Alternative Training Method for ST-DNN With Lower Memory Requirements [\ref{sec:Backprop_supp}]
    \item Implementation Details for Training and Joint Segmentation     [\ref{sec:Implementation_supp}]
    \item Additional Experimental Analysis and Results [\ref{sec:AddResults_supp}]
\end{itemize}

\section{Analytical Proofs for Covariance and Robustness} 
\label{sec:Proofs_supp}
The proof of Theorem~\ref{thrm:covariance} (covariance of ST-DNN to rotation and translation) follows from basic properties of the Laplace equation \cite{evans1998partial}; we state these properties in terms of our language of covariant operators, and show the proof for the convenience of the reader. We will show Theorem~\ref{thrm:robustness} (robustness of ST-DNN to domain deformations) as a consequence of properties of linear PDE theory.

We repeat the definition of covariant operator:
\begin{defn}
	An operator $T : \mathcal{I} \to \mathcal{I}$ is {\bf covariant} to a class $\mathcal{W}$ of transformations if 
	\[
		T [I \circ w] = [TI]\circ w,
	\]
	for every $I\in \mathcal{I}$ and $w\in \mathcal{W}$.
\end{defn}

We show covariance of the Laplace operator, and as a consequence, the covariance of the Poisson PDE and the ST-DNN.
\begin{thrm}
	The Laplacian operator $\Delta = \sum_i \frac{\partial^2}{\partial x_i^2}$ is covariant to rotations, $x \to Rx$ where $R$ is a rotation matrix.
\end{thrm}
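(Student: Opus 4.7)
The plan is to prove this by a direct chain-rule computation, exploiting the defining property $R^\top R = \mathrm{Id}$ of a rotation matrix (we actually only need orthogonality, not the determinant condition). Set $J = I \circ w$ with $w(x) = Rx$, so $J(x) = I(Rx)$. I would first compute the first partials: by the chain rule,
\begin{equation*}
\partial_i J(x) = \sum_k (\partial_k I)(Rx) \, R_{ki}.
\end{equation*}
Then I would differentiate once more, treating $(\partial_k I)\circ w$ again with the chain rule, to obtain
\begin{equation*}
\partial_i \partial_i J(x) = \sum_{k,l} (\partial_l \partial_k I)(Rx) \, R_{li} R_{ki}.
\end{equation*}

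Next I would sum over $i$ and swap the order of summation:
\begin{equation*}
\Delta J(x) = \sum_{k,l} (\partial_l \partial_k I)(Rx) \left( \sum_i R_{li} R_{ki} \right).
\end{equation*}
The inner sum is the $(l,k)$ entry of $R R^\top$, which equals $\delta_{lk}$ since $R$ is orthogonal. Hence the double sum collapses to $\sum_k (\partial_k \partial_k I)(Rx) = (\Delta I)(Rx) = ([\Delta I] \circ w)(x)$, which is the desired covariance identity $\Delta[I\circ w] = [\Delta I]\circ w$.

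There is no real obstacle here; the result is essentially a bookkeeping exercise in the chain rule, and the only nontrivial ingredient is the orthogonality relation $R^\top R = \mathrm{Id}$, which is exactly what collapses the second-derivative double sum into the Laplacian. The mild things to be careful about are: (i) making sure the chain rule is applied twice with the correct index placement, and (ii) writing the proof so that it transparently sets up the subsequent corollaries (covariance of the Poisson PDE \eqref{eq:poisson} and of the full ST-DNN \eqref{eq:STDNN_eqn}), which will follow because rotations also preserve the Neumann boundary condition (the outward normal rotates with $R$ and $\nabla u$ rotates with $R$, so their inner product is preserved), and because the remaining network operations ($1\times 1$ convolutions and pointwise rectification) act channelwise and thus trivially commute with spatial rotations and translations.
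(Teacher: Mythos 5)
Your proof is correct and is essentially the paper's argument in index notation: the paper differentiates twice via the chain rule to obtain $\frac{\partial^2}{\partial x_i^2}u(Rx) = (Re_i)^T Hu(Rx)\,Re_i$, sums to get $\mathrm{tr}[R^T Hu(Rx) R]$, and invokes invariance of the trace under similarity transformations, which is exactly your orthogonality collapse $\sum_i R_{li}R_{ki}=\delta_{lk}$. No substantive difference.
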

\begin{proof}
Let $u \in \mathcal I$, and let $R$ be a rotation. Consider
\begin{align}
\frac{\partial}{\partial x_i} u(Rx) &= \left[ \frac{\partial}{\partial x_i} u(Rx) \right]^T \nabla u(Rx)\\
&= e_i^T R^T \nabla u(Rx) = [\nabla u(Rx)]^T R e_i.
\end{align}
Also,
\begin{align}
\frac{\partial}{\partial x_i}\left[ \frac{\partial u}{\partial x_j}(Rx) \right] &= \left[\nabla\frac{\partial u}{\partial x_j}\right]^T(Rx) R e_i
\end{align}
so,
\begin{align}
\frac{\partial}{\partial x_i}\left[ \nabla u(Rx) \right] &= Hu(Rx) R e_i
\end{align}
so ,
\begin{align}
\frac{\partial}{\partial x_i}\left[ \nabla u(Rx) \right]^T &=  (R e_i)^T Hu(Rx).
\end{align}
Thus,
\begin{align}
\frac{\partial^2}{\partial x_i^2} u(Rx) &=\frac{\partial}{\partial x_i} 
[\nabla u(Rx)]^T R e_i =  (R e_i)^T Hu(Rx) R e_i.
\end{align}
Then, 
\begin{align}
\Delta (u \circ R) (x) &= \sum_i \frac{\partial^2}{\partial x_i^2} u(Rx) \\
& = \sum_i (R e_i)^T Hu(Rx) R e_i = \mbox{tr}[ R^T Hu(Rx) R] \\
&= \mbox{tr}[ Hu(Rx) ] \\
&= \Delta u(Rx),
\end{align}
where the second last equality is due to the invariance of the trace to similarity transformations.
\end{proof}

\begin{thrm}
The Laplacian operator is covariant to translations, $x \to x+ t$, where $t$ is a vector.
\end{thrm}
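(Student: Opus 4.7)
The plan is to mirror the structure of the preceding rotation-covariance theorem, but exploit the fact that translations are even simpler because the Jacobian of the map $x \mapsto x + t$ is the identity. So the chain rule produces no multiplicative factor, and the covariance reduces to a one-line calculation of partial derivatives.

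More concretely, I would fix $u \in \mathcal{I}$, a translation vector $t \in \mathbb{R}^2$, and let $w(x) = x + t$. First I would compute $\frac{\partial}{\partial x_i} u(x+t)$ via the chain rule: since $\frac{\partial (x_j + t_j)}{\partial x_i} = \delta_{ij}$, we have $\frac{\partial}{\partial x_i} u(x+t) = \frac{\partial u}{\partial x_i}(x+t)$. Applying this identity a second time to $\frac{\partial u}{\partial x_i}$ in place of $u$ gives $\frac{\partial^2}{\partial x_i^2} u(x+t) = \frac{\partial^2 u}{\partial x_i^2}(x+t)$. Summing over $i$ then yields
\begin{equation*}
\Delta (u \circ w)(x) = \sum_i \frac{\partial^2}{\partial x_i^2} u(x+t) = \sum_i \frac{\partial^2 u}{\partial x_i^2}(x+t) = (\Delta u)(x+t) = [(\Delta u) \circ w](x),
\end{equation*}
which is precisely the covariance identity from the definition.

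There is no real obstacle here; the only thing to be careful about is invoking the chain rule with a map whose Jacobian is the identity (so that no mixing of partials occurs, unlike the rotation case where the Jacobian $R$ appears and one must use invariance of the trace to conclude). Everything else is standard calculus, and the argument works verbatim in any spatial dimension, so I would state the theorem and proof for general $\mathbb{R}^n$ if desired, though the paper only needs $n = 2$.
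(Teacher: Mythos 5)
Your proposal is correct and follows essentially the same route as the paper's own proof: apply the chain rule (with identity Jacobian) once to get $\frac{\partial}{\partial x_i}u(x+t)=\frac{\partial u}{\partial x_i}(x+t)$, apply it again to obtain the second partials, and sum over $i$. The remark that the argument generalizes to $\mathbb{R}^n$ is a harmless extra observation but does not change the substance.
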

\begin{proof}
We have,
\begin{align}
\frac{\partial}{\partial x_i}  [u(x+t)] &=\frac{\partial u}{\partial x_i}(x+t).
\end{align}
Similarly,
\begin{align}
\frac{\partial^2}{\partial x_i^2} [ u(x+t) ] &=\frac{\partial}{\partial x_i}
\left[ \frac{\partial u}{\partial x_i}(x+t) \right]\\
&= \frac{\partial^2 u}{\partial x_i^2}(x+t)
\end{align}
Thus, $ \Delta (u\circ T)(x) = (\Delta u)\circ T$ where $T(x) = x+t$.
\end{proof}

\begin{cor}
	The solution $u$ of the Poisson equation, i.e.,
	\begin{equation}
		u(x) - \alpha \Delta u(x) = I(x),
	\end{equation}
	$u=T[I]$ is covariant to translations and rotations.
\end{cor}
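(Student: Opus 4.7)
The plan is to show that when both the input image and the region of interest are transported by a rigid motion $w(x) = \mathcal{R}x + \mathcal{T}$, the solution of the Poisson boundary value problem transports correspondingly; covariance of $T$ then follows by uniqueness of the elliptic problem. Specifically, let $u = T[I]$ denote the solution on $R$, and define the pullback $v(x) = u(w(x))$ on $w^{-1}(R)$. The aim is to identify $v$ as $T[I \circ w]$.

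The first step is to verify the bulk equation. Since translations and rotations compose to arbitrary rigid motions, the two theorems already proven give $\Delta(u \circ w) = (\Delta u) \circ w$. Pulling back the identity $u - \alpha \Delta u = I$ by $w$ then yields
\begin{equation*}
v(x) - \alpha \Delta v(x) = (I \circ w)(x), \quad x \in w^{-1}(R).
\end{equation*}

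The second step is to verify the Neumann boundary condition on $\partial(w^{-1}(R))$. By the chain rule, $\nabla v(x) = \mathcal{R}^T (\nabla u)(w(x))$. Because $w$ is an isometry, the outward unit normal $\tilde N$ to $\partial(w^{-1}(R))$ at $x$ is $\mathcal{R}^T N$, where $N$ is the outward unit normal to $\partial R$ at $w(x)$. Using orthogonality $\mathcal{R}\mathcal{R}^T = I$,
\begin{equation*}
\nabla v(x) \cdot \tilde N = \mathcal{R}^T (\nabla u)(w(x)) \cdot \mathcal{R}^T N = (\nabla u)(w(x)) \cdot N = 0,
\end{equation*}
so the Neumann condition is preserved. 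By well-posedness (uniqueness) of the Poisson problem on $w^{-1}(R)$ with input $I \circ w$, we conclude $v = T[I \circ w]$, i.e., $T[I \circ w](x) = T[I](w(x))$, which is exactly the covariance property.

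The main obstacle is the bookkeeping for how the region and its boundary normal transform under $w$; once the boundary calculation is organized via the orthogonality of $\mathcal{R}$, the proof reduces to a change-of-variables plus uniqueness, and no new analytic tools beyond the two covariance theorems for $\Delta$ already established are required.
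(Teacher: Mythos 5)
Your proof is correct and takes essentially the same route as the paper: covariance of the solution operator $T$ is deduced from the covariance of the Laplacian (and of the identity) under rigid motions. The paper's own proof is a one-line remark that omits the two points you make explicit --- the transformation of the Neumann boundary condition under the isometry $w$ and the appeal to uniqueness of the boundary value problem to identify $u\circ w$ with $T[I\circ w]$ --- so your version is the complete argument of which the paper's is a sketch.
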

\begin{proof}
This follows from the covariance of the Laplacian, and the identity map.
\end{proof}

We can now show covariance of the ST-DNN to translations and rotations.
\begin{thrm}
    The ST-DNN \eqref{eq:STDNN_eqn} is covariant to translations and rotations.
\end{thrm}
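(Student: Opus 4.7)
The plan is to prove covariance by induction on the depth of the network, leaning on the fact that composition of covariant operators is again covariant. First I would record the elementary composition lemma: if $S_1, S_2$ are both covariant to a class $\mathcal{W}$, then so is $S_1 \circ S_2$, since
\[
  (S_1 \circ S_2)[I \circ w] = S_1[(S_2 I) \circ w] = (S_1 S_2 I) \circ w.
\]
By induction this reduces Theorem~\ref{thrm:covariance} to showing that each factor in $F = s \circ f_m \circ \cdots \circ f_0$, with $f_i = r \circ L_i \circ T$, is individually covariant to the group generated by translations $x \mapsto x + \mathcal{T}$ and rotations $x \mapsto \mathcal{R} x$.

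Next I would dispose of the easy pieces. The softmax $s$, the rectifier $r$, and the $1 \times 1$ affine map $L_i(y) = w_i y + b_i$ all act pointwise in the spatial variable, so for any such $P$ we have $P[I \circ w](x) = P(I(w(x))) = (P[I])(w(x))$, which is exactly covariance. The Poisson solve $I \mapsto u$ is already handled by the Corollary; one extends that statement from $\R^2$ to a region $R$ by observing that if $u$ satisfies \eqref{eq:poisson} on $R$ with right-hand side $I$, then $u \circ w$ satisfies the same equation on $w^{-1}(R)$ with right-hand side $I \circ w$, and the Neumann condition $\nabla u \cdot N = 0$ is preserved because $w$ is an isometry and hence maps boundary normals to boundary normals.

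The step I expect to be the main obstacle is covariance of the derivative operator sitting inside $T$. Under translations the chain rule gives $\partial_j(u \circ T_{\mathcal{T}}) = (\partial_j u) \circ T_{\mathcal{T}}$, which is direct. Under rotations, however, $\nabla(u \circ \mathcal{R})(x) = \mathcal{R}^T \nabla u(\mathcal{R} x)$, so the stacked partial-derivative channels are mixed by $\mathcal{R}^T$ rather than being pointwise invariant. The cleanest way I would resolve this is to observe that the composition $L_i \circ T$ is really a learned linear combination of directional derivatives, so the action of $\mathcal{R}^T$ on the channel dimension amounts to a reparameterization of derivative directions that can be absorbed into the weights $w_i$; equivalently, when one restricts (as in the experimental setup) to directional derivatives at angles $\{0, \pi/4, \pi/2, 3\pi/4\}$, this set is closed up to sign under the cyclic group of $\pi/2$-rotations, so the induced action on derivative channels is a fixed permutation that $L_i$ absorbs. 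Once this bookkeeping is in place the inductive step closes, and chaining the per-layer covariances through $s \circ f_m \circ \cdots \circ f_0$ yields $F[I \circ w] = F[I] \circ w$ for every $w$ in the translation-rotation group.
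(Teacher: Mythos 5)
Your overall strategy---reduce to a composition lemma and check each factor separately---is exactly the route the paper takes: its proof consists of the covariance of the Laplacian (hence of the Poisson solve) plus the observation that pointwise channel-mixing and nonlinearities preserve covariance. Your handling of the Neumann problem on a general region $R$ (transporting the equation to $w^{-1}(R)$ and using that an isometry carries boundary normals to boundary normals) is a detail the paper omits but should have stated, so up to that point you are on solid, and in fact slightly firmer, ground.

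Where the argument does not close is precisely the step you flag: the derivative channels inside $T$. Under a rotation, $\nabla(u\circ\mathcal{R})(x)=\mathcal{R}^{T}\nabla u(\mathcal{R}x)$, so $T[I\circ\mathcal{R}]$ agrees with $(T[I])\circ\mathcal{R}$ only after a linear mixing of the channel index; that is equivariance with respect to a nontrivial representation on the channels, not covariance in the sense of the paper's definition, which demands channel-wise equality. Your first fix, absorbing $\mathcal{R}^{T}$ into $w_i$, proves a statement about the \emph{family} of networks---$F_{W'}[I\circ\mathcal{R}]=F_{W}[I]\circ\mathcal{R}$ for suitably rotated weights $W'$---whereas the theorem asserts covariance of the fixed trained network $F_W$. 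Your second fix, the permutation of the angle set $\{0,\pi/4,\pi/2,3\pi/4\}$, covers only rotations by multiples of $\pi/2$, and even there the identity $L_i\circ P=L_i$ holds only if the learned weights happen to commute with the permutation $P$, which nothing in training enforces. So as written your proof establishes strict covariance for translations, and for rotations only covariance up to a channel reparameterization. The paper's own proof has the same hole (it never mentions the derivative operator at all), so your diagnosis is the more valuable contribution here; a genuine repair would either restate the theorem as equivariance of the channel stack under the induced representation of the rotation group, or replace the fixed directional derivatives by rotation-invariant quantities such as $|\nabla u|$.
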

\begin{proof}
Since ST-DNNs are composition of solutions of Poisson Equations with fully connected layer across channels and non-linearity in multiple layers. A linear combination of channels and point-wise non-linear operation preserves the covariance of rotation and translations, hence the ST-DNNs are covariant to translations and rotations.
\end{proof}

We now show robustness of the ST-DNN to domain deformations with respect to the Sobolev norm; that is, we show that the output of a ST-DNN layer does not change much if deformed by a transformation with small Sobolev norm (a smooth transformation that has small displacement). The Sobolev measures the smoothness of the deformation, i.e., the $\mathbb{L}^2$ norm of the deformation and the gradient of the deformation. We have the following theorem:
\begin{thrm}
	The solution of the ST-DNN is robust to deformations, i.e., 
	\begin{equation}
		|T[I\circ w]-T[I]| \leq C \| w-\mbox{id} \|_{H^1},
	\end{equation}
	where $T$ is the mapping from the image to the solution of the ST-DNN, $w : \Omega\to\Omega$ is a domain deformation, $\mbox{id}$ is the identity map, and $H^1$ indicates the Sobolev norm.
\end{thrm}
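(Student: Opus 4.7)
The plan is to reduce the theorem to a Lipschitz estimate for a single Poisson layer, and then chain through the depth of the network. Observe that the ST-DNN $F$ is a composition of maps of three kinds: the Poisson solve operator $T$, the pointwise affine map $L_i(y)=w_i y+b_i$, and the pointwise nonlinearities $r$ (ReLU) and $s$ (softmax). The latter three are Lipschitz continuous with constants depending on $\|w_i\|$ (and bounded by $1$ for $r,s$), so if we prove the single-layer estimate $\|T[J_1]-T[J_2]\|_{L^2(R)} \le C_T\|J_1-J_2\|_{L^2(R)}$ and couple it with a bound of the form $\|I\circ w - I\|_{L^2(\Omega)} \le C_I\|w-\mathrm{id}\|_{H^1}$, the full inequality follows by iterating through the layers.

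For the PDE Lipschitz estimate, I would use an energy method on the weak formulation of \eqref{eq:poisson}. Since $T$ is linear, it suffices to bound $\|T[J]\|_{L^2}$ by $\|J\|_{L^2}$. Testing the weak form against $u=T[J]$ itself gives
\begin{equation}
\int_R u^2\,dx + \alpha\int_R |\nabla u|^2\,dx = \int_R J\,u\,dx \le \|J\|_{L^2}\|u\|_{L^2},
\end{equation}
(the Neumann term vanishes) from which $\|u\|_{L^2}\le \|J\|_{L^2}$, so $T$ is in fact contractive in $L^2$. Equivalently, this can be read off from the Green's function representation $u(x)=\int_R K(x,y)J(y)\,dy$, where positivity and mass conservation of $K$ give the same $L^2$ bound. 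The same energy identity extends the estimate to an $H^1$ estimate, which we will need when propagating through later layers since elliptic regularity lifts each $T[\cdot]$ into $H^2$ and hence provides a uniform $W^{1,\infty}$ control on the output that is needed when applying the next deformation bound.

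The bound $\|I\circ w-I\|_{L^2}\le C_I\|w-\mathrm{id}\|_{H^1}$ I would obtain by writing
\begin{equation}
I(w(x))-I(x)=\int_0^1 \nabla I\bigl(x+t(w(x)-x)\bigr)\cdot(w(x)-x)\,dt,
\end{equation}
so that $|I\circ w-I|\le \|\nabla I\|_{L^\infty}|w-\mathrm{id}|$ pointwise, giving $\|I\circ w-I\|_{L^2}\le \|\nabla I\|_{L^\infty}\|w-\mathrm{id}\|_{L^2}\le \|\nabla I\|_{L^\infty}\|w-\mathrm{id}\|_{H^1}$. For a non-smooth raw input this estimate is too strong; however, after the first Poisson layer the signal lies in $H^2$ by elliptic regularity, and from then on the required $\nabla$-control follows from the $H^1$ estimate on $T$ together with Sobolev embedding, so I would use a density/approximation argument (or the more general substitution inequality $\|I\circ w-I\|_{L^2}\le C\|I\|_{H^1}\|w-\mathrm{id}\|_{H^1}^{1/2}$ refined by the PDE smoothing) to justify the first-layer step.

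Given these ingredients, the full proof is a straightforward induction over the $m{+}1$ layers: at layer $i$, the quantity $\delta_i=\|f_i\circ\cdots\circ f_0[I\circ w]-f_i\circ\cdots\circ f_0[I]\|_{L^2}$ is controlled by $\mathrm{Lip}(r)\|w_i\|\,\mathrm{Lip}(T)\,\delta_{i-1}$, and softmax turns the final $L^2$ difference into a pointwise one. The product of layer-wise Lipschitz constants forms the constant $C$ in the statement. The hard part, in my view, is not the chaining but the first-layer deformation bound on a raw (possibly discontinuous) image: making this rigorous without additional regularity assumptions on $I$ requires either restricting to a suitable function-space setting or exploiting the smoothing effect of $T$ to move all subsequent estimates into $H^2$ where $\nabla$ of the signal is well controlled.
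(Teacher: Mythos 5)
Your overall architecture --- an $L^2$ Lipschitz estimate for the Poisson solve (your energy identity giving $\|T[J]\|_{L^2}\le\|J\|_{L^2}$ is correct, and its $H^1$ refinement handles the derivative channels), followed by chaining through the pointwise-Lipschitz maps $L_i$, $r$, $s$ --- matches the paper's multi-layer step, which likewise just multiplies per-layer Lipschitz constants of the form $C_i\|L_i\|_\infty$. The genuine gap is exactly where you locate it: the first-layer deformation bound. Your estimate $\|I\circ w-I\|_{L^2}\le\|\nabla I\|_{L^\infty}\|w-\mathrm{id}\|_{L^2}$ requires the raw image to be Lipschitz, which excludes precisely the inputs one cares about (images with edges), and none of your proposed repairs closes this: a density argument fails because $\|\nabla I\|_{L^\infty}$ is not controlled along the approximating sequence; the substitution inequality with $\|I\|_{H^1}$ still needs $I\in H^1$ and produces $\|w-\mathrm{id}\|_{H^1}^{1/2}$, the wrong power; and ``exploiting the smoothing effect of $T$'' is the right instinct but you do not say how to implement it.

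The paper's proof supplies the missing implementation. It writes the first layer as $f[I]=r[(L\circ D\circ K)\ast I]$ with $M=L\circ D\circ K$ the smooth kernel built from the Green's function, and performs the change of variables $M\ast I(x)=\int M(x-w(y))\,I(w(y))\,\det\nabla w(y)\,dy$, so that $M\ast(I\circ w)(x)-M\ast I(x)=\int\bigl[M(x-y)-M(x-w(y))\det\nabla w(y)\bigr]\,I(w(y))\,dy$. The deformation is thereby transferred from the image onto the kernel: Lipschitz continuity of $M$ gives $|M(x-y)-M(x-w(y))|\le C\,|v(y)|$ with $v=w-\mathrm{id}$, the Jacobian expansion contributes terms bounded by $|\nabla v(y)|^2$, and the image enters only through $\|I\|_\infty$. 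This yields a single-layer bound with a constant depending on $\|I\|_\infty$ rather than $\|\nabla I\|_{L^\infty}$, which is what makes the theorem meaningful for non-smooth inputs; if you adopt this kernel-side estimate for the first layer, your $L^2$ energy estimates for the remaining layers complete the proof along essentially the paper's lines. Two minor points: you do not need $W^{1,\infty}$ control of intermediate outputs, since the deformation acts only on the input and only plain Lipschitz continuity of the later layers is required (and in any case $H^2$ does not embed into $W^{1,\infty}$ in two dimensions); and the paper's own per-layer estimate actually comes out proportional to $\|w-\mathrm{id}\|_{H^1}^2$, so the linear-in-norm form of the statement should be read as the small-deformation regime.
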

\begin{proof}
    This is a consequence of Lemma~\eqref{lem:layer_lipschitz_deform_supp} below, which shows that each layer of the ST-DNN is robust. Stacking such layers preserves the robustness by applying Lemma~\ref{lem:layer_lipschitz_deform_supp} successively.
\end{proof}

We now prove robustness of layers of ST-DNN. For convenience in the proof, we assume the input operates on the domain $\Omega = \R^2$, which avoids having to consider the boundary and more complicated formulas that do not affect the essence of the argument. Let
\begin{equation}
	f[I] = r[ (L\circ D \circ K)\ast I ]
\end{equation}
be a layer of ST-DNN. Here $r$ is the rectified linear unit, $K$ is the kernel representing, the Green's function of the Poisson equation (we assume for this proof that the domain of the image is all of $\mathbb{R}^2$), $D$ is the derivative operator representing oriented gradients (or parial derivatives of an finite order), and $L$ is a weight matrix of fully connected layer across channels.

We state the robustness of a layer of ST-DNN:
\begin{lemma} \label{lem:layer_lipschitz_deform_supp}
A layer, $f[I] = r[ (L \circ D \circ K \ast I ]$, of a ST-DNN is Lipschitz continuous with respect to diffeomorphisms in the Sobolev norm, i.e., 
\begin{equation}
|f[I\circ w]-f[I]| \leq C \| w - \mbox{id}\|_{H^1},
\end{equation}
where $\mbox{id}(x) = x$ is the identity map, and $C$ is a constant (independent of $w$ and only of function of the class of images), and $\|w\|_{H^1}^2 = \int_{\Omega} (|w(x)| + |\nabla w(x)|^2) dx$. Note that $w-\mbox{id}$ is the displacement.
\end{lemma}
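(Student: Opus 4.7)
The plan is to exploit the Lipschitz regularity of each building block of the layer and reduce the estimate to a convolution inequality involving the displacement field $v := w - \mbox{id}$, which is then handled by a Cauchy--Schwarz bound in which the smoothness of the Green's function absorbs the derivative operator $D$.

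First I would peel off the outer pointwise operations. Since $r$ is $1$-Lipschitz and $L$ is a bounded linear map acting channel-wise, $|f[I\circ w](x) - f[I](x)| \leq \|L\|\,|(DK)*(I\circ w)(x) - (DK)*I(x)|$, so it suffices to estimate the convolution difference. Writing this difference as $\int DK(x-y)[I(w(y)) - I(y)]\,dy$ and applying the fundamental theorem of calculus along the segment from $y$ to $w(y) = y + v(y)$, we obtain $I(w(y)) - I(y) = \int_0^1 \nabla I(y + tv(y))\cdot v(y)\,dt$.

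Next, taking absolute values and using a uniform bound $\|\nabla I\|_\infty \leq C_I$ (valid for the bounded, band-limited image class we consider) gives
\[
    |(DK)*(I\circ w)(x) - (DK)*I(x)| \leq C_I \int |DK(x-y)|\,|v(y)|\,dy.
\]
Cauchy--Schwarz in $y$ then bounds this by $C_I\,\|DK\|_{L^2}\,\|v\|_{L^2}$. The Green's function $K$ of $(\mbox{id} - \alpha\Delta)$ on $\R^2$ is a modified-Bessel (Yukawa) potential with exponential decay, so derivatives of any fixed finite order of $K$ lie in $L^2$ with norm depending only on $\alpha$ and the order of $D$. Finally, $\|v\|_{L^2} \leq \|v\|_{H^1} = \|w - \mbox{id}\|_{H^1}$, which yields the lemma with $C = \|L\|\cdot C_I\cdot\|DK\|_{L^2}$.

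The main obstacle is justifying the pointwise Lipschitz bound $\|\nabla I\|_\infty$ on the \emph{input} of a layer: for layer zero the raw image can be treated as bandlimited, but for subsequent layers $I$ is itself the output of prior ST-DNN layers, so one must check inductively that each layer preserves the required regularity (the smoothing action of $K$ helps here, since convolution with a smooth integrable kernel regularises the output). An alternative that sidesteps this is to integrate by parts and move $D$ from $K$ onto $I\circ w$, producing the factor $\nabla I\circ w\cdot \nabla w = \nabla I\circ w\cdot(\mbox{id} + \nabla v)$; this exchanges the $\|\nabla I\|_\infty$ hypothesis for the $\|\nabla v\|_{L^2}$ component of the $H^1$ norm and makes the necessity of the full Sobolev norm, rather than just $L^2$, transparent.
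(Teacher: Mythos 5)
Your proof is essentially sound but takes a genuinely different route from the paper's. The paper never differentiates the image: after stripping off $r$ and $L$ exactly as you do, it performs the change of variables $y\mapsto w(y)$ in the second convolution, so that the entire perturbation lands on the \emph{kernel}, giving the difference $\int [M(x-y)-M(x-w(y))\det \nabla w(y)]\,I(w(y))\,dy$ with $M=L\circ D\circ K$. It then uses Lipschitz continuity of the Green's function, the expansion $\det\nabla w = 1+\mathrm{div}\,v+\det\nabla v$, and the bound $|\mathrm{div}\,v+\det\nabla v|\le C|\nabla v|^2$, so the final constant involves only $\|I\|_\infty$ (and both the $|v|$ and $|\nabla v|^2$ contributions appear, which is where the $H^1$ norm genuinely enters). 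You instead keep the kernel fixed and apply the fundamental theorem of calculus to $I(w(y))-I(y)$, which transfers the cost to $\|\nabla I\|_\infty$ --- the obstacle you correctly flag, since intermediate layer inputs are ReLU outputs and must be checked inductively (your integration-by-parts variant, which trades $\|\nabla I\|_\infty$ for $\|\nabla v\|_{L^2}$, is in spirit much closer to what the paper does). Your approach buys a bound that is genuinely \emph{linear} in $\|w-\mathrm{id}\|_{H^1}$, consistent with the lemma as stated, whereas the paper's own derivation actually produces $\|w-\mathrm{id}\|_{H^1}^2$ and then silently drops the square; the paper's approach buys freedom from any differentiability assumption on $I$.

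One concrete step of yours that does not survive scrutiny: the claim that $\|DK\|_{L^2(\R^2)}<\infty$ for first-order $D$. The Green's function of $\mathrm{id}-\alpha\Delta$ in two dimensions is the Bessel potential $\tfrac{1}{2\pi\alpha}K_0(|x|/\sqrt{\alpha})$, whose gradient behaves like $|x|^{-1}$ near the origin; this is in $L^1(\R^2)$ but \emph{not} in $L^2(\R^2)$, so the Cauchy--Schwarz step $\int|DK(x-y)||v(y)|\,dy\le \|DK\|_{L^2}\|v\|_{L^2}$ fails as written, and Young's inequality with $\|DK\|_{L^1}$ would instead require control of $\|v\|_\infty$, which $H^1$ does not give in 2D. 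You would need to split the integral near and away from the singularity (integrating the singular part against the bounded factor) --- which is precisely the caveat the paper itself states and then waves away for its Lipschitz bound on the kernel, so your argument is no less rigorous than the paper's on this point, but the gap should be acknowledged rather than asserted away via an incorrect $L^2$ claim.
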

\begin{proof}
Let $w$ be a smooth diffeomorphism.  Then by Lipschitz continuity of the ReLu,
\begin{align}
|f[I\circ w](x) - f[I](x)| &\leq |M\ast (I\circ w)(x) - M\ast I(x)| 
\end{align}
where $M= L \circ D \circ K$. Note that by a change of variables,
\begin{equation}
M\ast I(x) = \sum_y M(x - w(y)) I(w(y))\det{\nabla w(y)}.
\end{equation}
Note that the determinent of the Jacobian appears if we weight the sum by the area measure, which approximates the integral. Therefore,
\begin{multline}
M\ast (I\circ w)(x) - M\ast I(x) =
\sum_y [  M(x-y) - M(x - w(y)) \det{\nabla w(y)} ]I(w(y)).
\end{multline}
We let $w(y) = y + v(y)$.  This gives us 
\[
\det{\nabla w(y)} = 1 + \mbox{div}(v(y)) + \det{\nabla v(y)}.
\]
We may bound the second term as
\[
|\mbox{div}(v(y)) + \det{\nabla v(y)}| \leq C_1 |\nabla v(y)|^2
\]
by basic inequalities. Therefore,
\begin{multline}
|M(x-y) - M(x - w(y)) \det{\nabla w(y)}| \leq \\
|M(x-y)-M(x-w(y))| + C_1 M(x-w(y))  |\nabla v(y)|^2.
\end{multline}
By Lipschitz continuity of the Poisson kernel and derivatives, we have
\begin{equation}
	|M(x-y)-M(x-w(y))| \leq C_G \| L \|_{\infty} |v(y)|.
\end{equation}
Note that the Poisson kernel has a singularity at the origin, so the statement is not precise; however, as common in PDE analysis, as we will below compute integrals of the left hand quantity, we can break the integral into two terms one that integrates the singularity in a small ball (which is finite) and the other that integrates the right hand side, that we analyze below.  The former will disappear to zero as the ball goes to zero.  We omit the details to avoid hiding the main argument.

We also have that 
\begin{equation}
	|M(x-w(y))| \leq C_2 \| L \|_{\infty}.
\end{equation}
Therefore,
\begin{align}
	|f[I\circ w](x) - f[I](x)| &\leq
	C\|L\|_{\infty}\|I\|_{\infty} \int_{\Omega} (|v(y)|^2 + |\nabla v(y)|^2) d y\\
	&= C\|L\|_{\infty}\|I\|_{\infty} \| w - \mbox{id}\|_{H^1}^2.
\end{align}
\end{proof}

For a multi layer multiple layers network with $N$ layers we will have:
\begin{equation}
		|T[I \circ w]-T[I]| \leq  ( \prod _{i=1}^{N } C_i \| L_i \|_{\infty} ) C_0 \|L_0 \|_{\infty}\|I\|_{\infty} \| w - \mbox{id}\|_{H^1}^2 =   C \| w-\mbox{id} \|_{H^1},
\end{equation}

\section{Alternative Training Method for ST-DNN} \label{sec:Backprop_supp}
In this section, we provide an alternative algorithm for training ST-DNN that has a smaller memory footprint than the method present in the main paper to avoid having to train on small images. Unfortunately, this is not supported in current deep learning packages, so it requires a custom implementation, and it may be more computationally costly depending on the size of the image. The basic idea is instead of back propagating (through large matrices to solve the PDE, and storing matrix multiplies) to determine derivatives of the loss with respect to the weights, we instead forward propagate the derivatives of each layer through variations of the rest of the layers (see Figure~\ref{fig:forwardPass}). The advantage of this method is that the variations of the layers with PDEs can be computed by solving the PDE itself with input being the layer weight derivative. This can be solved with an iterative technique. This does not require accumulating matrix multiplies consuming large memory as standard back propagation, as forward propagating through the PDE is analogous to a matrix-vector multiply, which as solved through an iterative technique and does not need to store the matrix explicitly.

\begin{figure}[H]
    \centering
    \includegraphics[scale=0.5,clip,trim={3cm 6cm 0 3cm},clip]{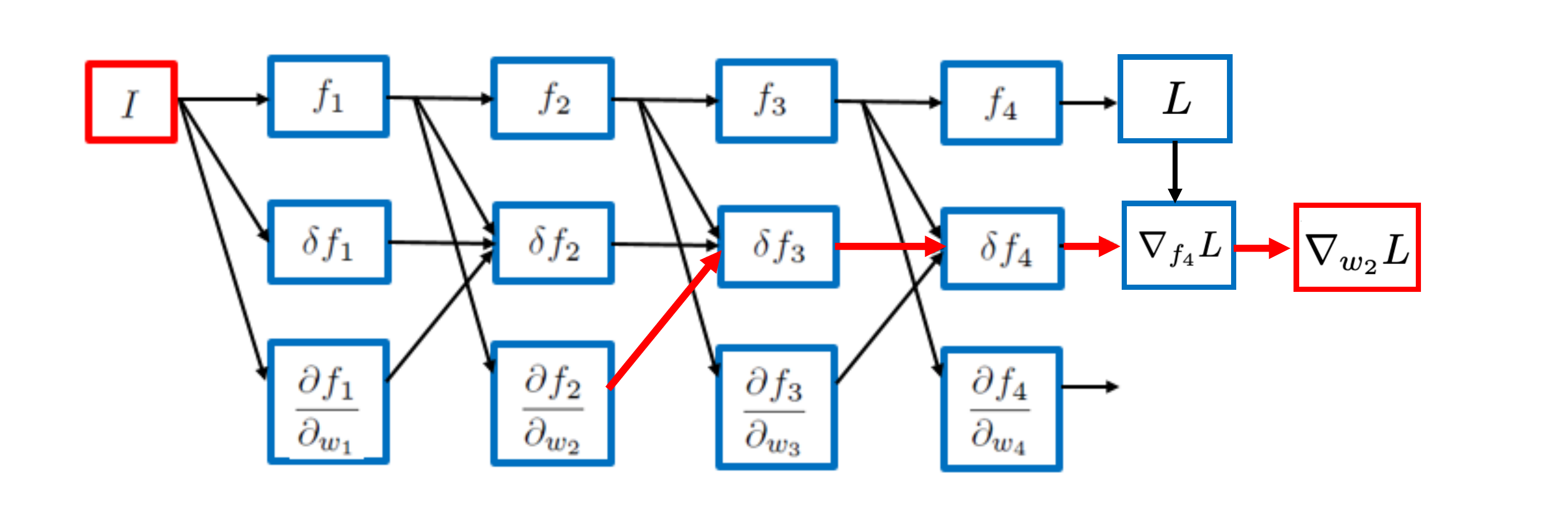}
    \caption{Example loss gradient with respect to weights calculation for the ST-DNN using the "forward propagation" method. The example red path in arrows represents the calculation of the gradient of the loss with respect to the weights of the second layer. This procedure avoids having to accumulate large matrix multiplies (to solve PDEs) as in back-propagation.}
    \label{fig:forwardPass}
\end{figure}

Below, we present the mathematics to show that this forward propagation technique is valid for the case of PDEs as layers, and give details of the technique.

\subsection{Variation of Shape-Tailored Descriptors}

The shape-tailored descriptors are given by a Poisson equation with Neumann boundary conditions.
\begin{equation} \label{eq:poisson_supp}
\begin{cases}
u(x) -\alpha \Delta u(x) = I(x)
& x\in R \\
\nabla u(x) \cdot N = 0 & x\in \partial R
\end{cases},
\end{equation}

To take the variation of the PDEs we take the variation of each partial differential equation. To find the variation of the first equation we first calculate the value of $u(x)$ when it's input is perturbed by $\delta I$
\begin{equation}
u_{I+\epsilon \delta I}(x)-\alpha \Delta u_{I+\epsilon \delta I}(x)=I(x)+\epsilon \delta I(x).
\end{equation}
The variation in the direction of $\delta I$ is defined as
\begin{equation}
\lim_{\epsilon \to 0} \frac{u_{I+\epsilon \delta I}(x)- u_{I}(x)}{\epsilon}
\end{equation}

Next, we replace this value in the equation for the variation and get
\begin{equation}
u_{\delta I}(x)-\alpha \Delta u_{ \delta I}(x)=\delta I(x).
\end{equation}
Similarly for the second term, we have
\begin{equation}
\nabla u_{\delta I}(x) \cdot N=0.
\end{equation}
So the variation of the shape-tailored descriptor denoted by $u_h(x)=\delta u.\delta I(x)$ in the direction of a perturbation $\delta I(x)$ of the input image $I(x)$  is given by:
\begin{equation} \label{eq:Variation_poisson_supp}
\begin{cases}
u_h(x) -\alpha \Delta u_h(x) = \delta I(x)
& x\in R \\
\nabla u_h(x) \cdot N = 0 & x\in \partial R
\end{cases},
\end{equation}

\subsection{Derivative of Energy with respect to Weights}

First, we find the variation of a layer of the shape-tailored network. A layer of shape-tailored network in composed of $r \circ L \circ T$. The expression for variation of a layer of the network is given below: 
\begin{equation}\label{backpropLayer_supp}
\begin{split}
\delta f(I) . \delta I &=(r \circ L)^\prime (T(I)) \delta T(I). \delta I\\
&=(r \circ L)^\prime (T(I))T[\delta I]
\end{split}
\end{equation} 
where,
\begin{equation}
(r \circ L)^\prime (T(I))=r^\prime (L*T[I])*L
\end{equation} 

Next, we calculate the variation through the entire network.
\begin{equation}\label{backpropNet_supp}
\begin{split}
\delta F(I). \delta I = &s^\prime \delta [f_m \circ f_{m-1} \circ .... f_1]. \delta I \\
=& s^\prime \delta f_m (\tilde{F}_{m-1}(I))\circ \delta f_{m-1} (\tilde{F}_{m-2} (I))...\circ \delta f(I). \delta I\\
=&(s\circ r \circ L_m)^\prime( T[\tilde{F}_{m-1}]) \circ \delta T. (r \circ L_{m-1} )^\prime( T[\tilde{F}_{m-2}])\\ &\circ \delta T. (r \circ L_{m-2})^\prime( T[\tilde{F}_{m-3}]) \quad . . .   \circ \delta T. (r \circ L_{1})^\prime(T[I])\circ( T[\delta I]) 
\end{split}
\end{equation}
where, $\tilde{F}_i$ is the output of the $i^{th}$ layer of the network.

Now, we derive the expression for derivative of the energy w.r.t to the weights of the network. These derivative would be used to update the weights of the network during the application of the stochastic gradient decent to the energy function.

To start with, we derive the derivative of the descriptor w.r.t. the weights of the network ($w_i$ are weights $b_i$ are biases of the fully connected layer). The derivative w.r.t. to the weights of the last layer of the network are:

\begin{equation}
\begin{split}
\frac{\partial F(x)}{\partial w_m}=&s^\prime . \frac{\partial f_m}{\partial w_m}
\\=&s^\prime(r \circ L_m \circ T[\tilde{F}_{m-1}(x))])\circ r^{\prime} (\tilde{F}_{m}(x)) \circ \tilde{F}_{m-1}(x) 
\end{split}
\end{equation}

\begin{equation}
\begin{split}
\frac{\partial F(x)}{\partial b_m}=&s^\prime . \frac{\partial f_m}{\partial b_m}
\\=&s^\prime(r \circ L_m \circ T[\tilde{F}_{m-1}(x))])\circ r^{\prime} (\tilde{F}_{m}(x))
\end{split}
\end{equation}

where $\tilde{F}_i$ represent the output of the network at the $i^{th}$ layer.
Next, we calculate the derivative of $F$ w.r.t to $w_i$ and $b_i$, where $w_i$ and $b_i$ are the weights and biases of the fully connected layers, and $i$ represents intermediate layer of the network.

\begin{equation}
\begin{split}
\frac{\partial F(x)}{\partial w_i}=&s^\prime \circ \delta f_m \circ \delta f_{m-1} \circ \delta f_{m-2} ... \circ \delta f_{i+1}. \frac{\partial f_i}{\partial w_i}
\\= &(s\circ r \circ L_m)^\prime( T[\tilde{F}_{m-1}]) \circ \delta T. (r \circ L_{m-1} )^\prime( T[\tilde{F}_{m-2}])\\ &\circ \delta T. (r \circ L_{m-2})^\prime( T[\tilde{F}_{m-3}]) \quad . . .   \circ \delta T. (r \circ L_{i+1})^\prime( T[\tilde{F}_{i}]) \circ r^{\prime} (\tilde{F}_{i}(x)) \circ \tilde{F}_{i-1}(x) 
\end{split}
\end{equation}
similarly,
\begin{equation}
\begin{split}
\frac{\partial F(x)}{\partial b_i}=&s^\prime \circ \delta f_m \circ \delta f_{m-1} \circ \delta f_{m-2} ... \circ \delta f_{i+1}. \frac{\partial f_i}{\partial b_i}
\\= &(s\circ r \circ L_m)^\prime( T[\tilde{F}_{m-1}]) \circ \delta T. (r \circ L_{m-1} )^\prime( T[\tilde{F}_{m-2}])\\ &\circ \delta T. (r \circ L_{m-2})^\prime( T[\tilde{F}_{m-3}]) \quad. . .  \circ \delta T. (r \circ L_{i+1})^\prime( T[\tilde{F}_{i}]) \circ r^{\prime} (\tilde{F}_{i}(x))  
\end{split}
\end{equation}

Finally,
\begin{equation}
\begin{split}
\partial_{w_i} E(I)=&\sum_i <\frac{1}{|R_i|}\int_{R_i} 2({\bf F}(x)-a(x)), \frac{\partial {\bf F}(x)}{\partial w_i}>_{L^2}dx\\
&-\sum_i <\frac{1}{|R_i|}\int_{R_i} 2({\bf F}(x)-a(x)), \int_{R_i} \frac{1}{|R_i|}\frac{\partial {\bf F}(y)}{\partial w_i}>_{L^2}dy dx\\
& -\sum_i \sum_{j \neq i} <2(\bf{a}_i-\bf{a}_j) ,\int_{R_i} \frac{1}{|R_i|}\frac{\partial F(x)}{\partial w_i}>_{L^2}dx\\
& +\sum_i \sum_{j \neq i} <2(\bf{a}_i-\bf{a}_j) ,\int_{R_i} \frac{1}{|R_i|}\frac{\partial F(x)}{\partial w_i}>_{L^2}dx
\end{split}
\end{equation}
where $<.,.>_{L^2}$ represent the inner produce of two vectors. Now, that we have the derivatives of the energy w.r.t. the weights of the network, we can train the shape-tailored networks for applications like segmentation (see Figure \ref{fig:forwardPass} for a visualization of the forward pass for gradients calculation).

\section{Implementation Details} \label{sec:Implementation_supp}
In this section we outline the important implementation details for our experiments. Figure \ref{fig:schematic_supp} shows a schematic of the ST-DNN we use in our experiments.

\begin{figure}
\centering
\includegraphics[scale=0.45]{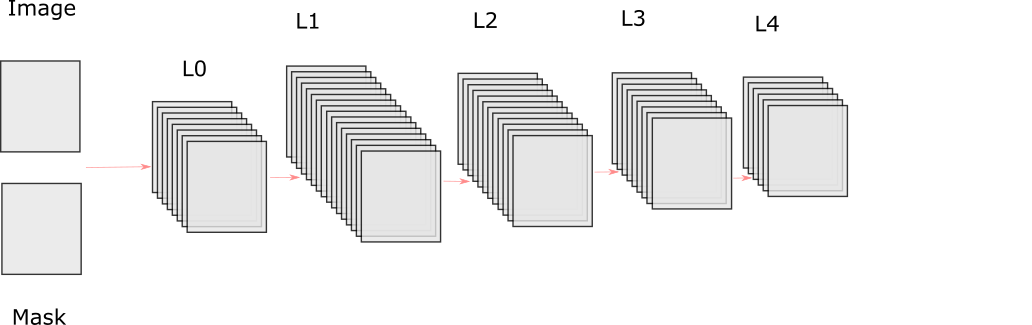}
\caption{{\bf Schematic of the ST-DNN:} We have a four layer ST-DNN with an additional pre-processing layer. The pre-processing layer, $L0$ in the schematic, extracts 3 color channels a gray scale channel and 4 oriented gradients at 5 scales $\alpha=\{5,10,15,20,25\}$ . The subsequent layers of the network have a smoothing layer $\alpha=\{5\}$, a fully connected layer, and a non-linearity. The number of hidden units for each layer are $100,40,20,5$ respectively.}
 \label{fig:schematic_supp}
\end{figure}

\subsection{ST-DNN setup}
We setup two versions of ST-DNN, one with MatConvNet where we use a C++ wrapper for the solution of Poisson equation of Eq \ref{eq:poisson_supp} and a faster version in Pytorch. In the MatConvNet version the Poisson equation with Neumann boundary condition is solved with conjugate gradient algorithm, and in Pytorch version, we explicitly invert the right hand side of Equation \ref{eq:poisson_supp} to find the solution. We noticed that training by solving the PDEs with (conjugate gradient algorithm on full size images) MatConvNet was slow compared to the faster PyTorch version, and it didn't have support available in PyTorch and TensorFlow and the results for the faster Pytorch solution (trained on downsampled images) were almost on par with the MatConvNet version (at least for the datasets we have tested on), however, learning ST-DNNs from full size images might be helpful in other applications and future work. The two methods are different in how they handle the solution of the PDEs. Below we provide the details of both methods.

As a first step, we provide the discretisation of the Poisson Equation \ref{eq:poisson_supp}:

\begin{equation}\label{eq:disc_supp}
    u(i,j) - \alpha \cdot  \sum_{k,l \in \mathcal{N}(i,j) \cap R } [ u(k,l) - u(i,j) ] = I(i,j), \quad \mbox{ for } \,\,  (i,j)\in R
\end{equation}
where $\mathcal{N}(i,j)$ represents the 4-pixel neighborhood of pixel $(i,j)$, which represents the $i^{\text{th}}$ row and $j^{\text{th}}$ column, and intersection means that only neighbors in the region are considered as implied by the Neumann boundary condition in the PDE, avoiding aggregation outside $R$.

For small images, we can linearise the image and write the above equations as:
\begin{equation} \label{eq:linSys_supp}
\mathbf{A_R} \mathbf{u}=  \mathbf{I}
\end{equation}
where, $\mathbf{u}$ and $\mathbf{I}$ are linearised image and descriptor respectively. We can then find the solution by simply inverting the system of equations
\begin{equation} \label{eq:linear_supp}
\mathbf{u}=  \mathbf{A_R}^{-1} \mathbf{I}
\end{equation}

Notice that size of $\mathbf{A_R}$ is $(m*n, m*n)$, where $m$ and $n$ are number of rows and columns in the image respectively and $\mathbf{A_R}$ depends of the mask of region $\mathbf{R}$. For small images (32 $\times$ 32) the size of $\mathbf{A_R}$ is (256 $\times$ 256) and we can store and invert it effectively, hence we use small images for training in our Pytorch implementation. Also, notice that if we implement the PDEs solution as in Equation~\ref{eq:linear_supp}, Pytorch (autograd) takes care of the back-propagation through the PDE layers and we do not have to explicitly implement the back-propagation (see Algorithm~\ref{alg:Training_supp}).
\begin{algorithm}
\begin{algorithmic}[1]
	\State Input: Image I and ground truth mask $\mathbf{R}$
	\State compute: $\mathbf{A}^{-1}$ defined in Equation \eqref{eq:linear_supp}. (Notice A depends on only $\mathbf{R}$) 
	\State initialise weights of the ST-DNN  $F(x)=s \circ L_m  \circ \mathbf{A}^{-1} \circ r\circ L_{m-1} \circ \mathbf{A}^{-1} \circ ... \mathbf{A}^{-1} r\circ L_0 \circ I(x)$ 
	\Repeat
	\State Update Weights:
		using gradients from backpropagation
	\Until { validation error is minimized }
\end{algorithmic}
\caption{Training of ST-DNNs}
\label{alg:Training_supp}
\end{algorithm}

For full size images we had to implement the equations of Section~\ref{sec:Backprop_supp}, where we want to solve the PDEs without explicitly storing the smoothing kernel for each pixel in the image (as it would have huge memory requirement). We can solve Equation \ref{eq:disc_supp} using conjugate gradient algorithm as $\mathbf{A}$ in Equation \ref{eq:linSys_supp} is symmetric positive definite (more detail in \cite{Khan2015}). Notice however that, if we use this approach we can not use autograd algorithms of standard packages like PyTorch and TensorFlow because Equation \ref{eq:linSys_supp} is the relations from output to input (and not from input to output), so we have to implement the gradient decent equations by calculating equations of Section \ref{sec:Backprop_supp} explicitly. We have implemented this using  MatConvNet, however, we see that the results (at least on the datasets we have tested on) of training on small scale images are on par with results of full size image. Hence in the paper we have reported results from the Pytorch implementation. The full implementation, however, might be useful in other applications and future work.

\subsection{Inference (Joint Segmentation) Algorithm}
We present the details of the algorithm for segmentation and joint ST-DNN descriptor updates (Algorithm~\ref{alg:multilabel_scheme_supp}). We use an approach analogous to level set methods, but instead evolve functions $\phi_i : \Omega \to \R$ that are smooth indicator functions of the regions $R_i$. The gradients of the energy with respect to the regions can be converted into an evolution of the indicator functions as shown in the algorithm. Note that $\kappa_i$ is the curvature of the boundary of region $R_i$, and can be written in terms of $\phi$.

\begin{algorithm}
\begin{algorithmic}[1]
	\State Input: An initialization of $\phi_i$
	\Repeat
	\State Set regions:
	$R_i = \{ x\in \Omega \, : \, i = \mbox{argmax}_j \phi_j(x) \}$
	
	\State Compute dilations, $D(R_i)$, of $R_i$
	\State Compute $\mathbf {F_{R_i}}$ in $D(R_i)$, compute $\mathbf a_i =1/|R_i|\cdot \int_{R_i} {\bf F_{\bf{R_i}}}(x)  dx$.
	\State Compute band pixels $B_i = D(R_i)\cap D(\Omega\backslash R_i)$
	\State Compute $G_i = \| {\bf F_{R_i}}(x)) - \mathbf a_i \|_2^2$
	for $x\in B_i$. $\mathbf {F}$ is evaluated from the neural network.
	\State Update pixels $x\in D(R_i)\cap D(R_j)$ as follows:
	\begin{equation}
	\phi_i^{\tau+\Delta\tau}(x) = \phi_i^{\tau}(x) -\Delta\tau( G_i(x) -
	G_j(x) )|\nabla \phi_i^{\tau}(x)|
	+ \Delta\tau \cdot \beta \kappa_i |\nabla \phi_i^{\tau}(x)|.
	\end{equation}
	\State Update all other pixels as
	\[
	\phi_i^{\tau+\Delta\tau}(x) = \phi_i^{\tau}(x) +
	\Delta\tau \cdot \beta \kappa_i |\nabla \phi_i^{\tau}(x)|.
	\]
	\State Clip between 0 and 1: $\phi_i = \max\{ 0, \min\{ 1, \phi_i \}\}$.
	\Until { regions have converged  }
\end{algorithmic}
\caption{Texture Segmentation with ST-DNNs}
\label{alg:multilabel_scheme_supp}
\end{algorithm}

\section{Additional Experiments and Results} \label{sec:AddResults_supp}

\subsection{Ablation Studies}
In Table \ref{tab:STLDvsST-DNN_supp}, we show ablation studies of ST-DNN with varying number of layers and compare it against Siamese \cite{Khan2018a} (that takes hand-crafted shape-tailored descriptors as input and then uses a neural network in the channel dimension) on the Real-World Texture Segmentation Dataset. We test the performance against the number of layers in two settings. First, for both ST-DNN and Siamese, we compute the descriptors, choosing only one region that contains the whole image, i.e., $R_0=\Omega$ to initially compute the descriptors. We do not update the descriptors as the regions of segmentation evolve to minimize the energy. Secondly, we update the descriptors as the regions evolve. This is done to show layering more PDE layers increases performance independent of the joint segmentation/descriptor approach. Results under both settings (first setting on the left and second setting on the right) are shown in Table~\ref{tab:STLDvsST-DNN_supp}. They show that increasing the number of layers in ST-DNN under both settings increases performance up to 4 layers, and then performance decreases (probably due to overfitting with too many parameters after 4 layers). Thus, the performance of ST-DNN is due to layering multiple (shape-tailored) spatial filtering layers, one of the contributions of this work. The fact that performance increases when updating the descriptors as the region evolves, which makes the descriptors shape-tailored to the segmentation, shows that the shape-tailored aspect of the descriptor is essential. 

In comparison to Siamese (result reported on 4 layers processing channels of the shape-tailored pre-processing), with four layers ST-DNN out-performs it on all region metrics and performs on par on the contour metric. Note that this makes perfect sense, since stacking shape-tailored spatial filtering layers smooths the data more than Siamese (which uses only one spatial filtering layer), making it slightly more difficult to localize the boundary, but the large effective spatial receptive size from layering spatial filtering in ST-DNN allows capturing regional properties better, leading to better performance on the region metrics.

\cut{
The experiment also shows that without the effect of PDE layers inside the network, i.e. without updating descriptor during region update, \cite{Khan2018a} outperforms our method. But when we introduce the PDE smoothing inside the network, i.e. descriptors are updated as we update the regions our method outperforms \cite{Khan2018a}, justifying the use of Shape-Tailored PDE layers inside deep networks. This also points out to the fact that the increase of ST-DNN over Siamese \cite{Khan2018a} Notice that \cite{Khan2018a} applies Shape-Tailored smoothing only on the input layer.
}

\begin{table}[ht]
  \centering
  \scriptsize{
  \caption{Left: Descriptors computed on whole image (without iteratively updating descriptors in Eqn. \ref{eq:mulregions}) with varying number of  layers. Right: Descriptors computed on Region (with iteratively updating descriptors in Eqn. \ref{eq:mulregions}) with varying number of layers layers. Compared against Siamese \cite{Khan2018a}. }
    \begin{tabular}{|c|c|c|c|c||c|c|c|c|c|}
    \hline
          & \multicolumn{4}{c|}{without per-iteration descriptor update} &       & \multicolumn{4}{c|}{with per-iteration descriptor update} \\
          \hline
          & Contour & \multicolumn{3}{c|}{Region metrics} &       & Contour & \multicolumn{3}{c|}{Region metrics} \\
    \hline
          method (layers)& F-meas. &  GT-cov. &  Rand. Index &  Var. Info. &   method (layers)    & F-meas. &  GT-cov. &  Rand. Index &  Var. Info. \\
    \hline
    ST-DNN (1)  & 0.22  & 0.79  & 0.8   & 0.74  & ST-DNN (1)  & 0.23  & 0.8   & 0.8   & 0.72 \\
    ST-DNN (2)  & 0.35  & 0.81  & 0.82  & 0.72  & ST-DNN (2) & 0.38  & 0.84  & 0.84  & 0.68 \\
    ST-DNN (3)  & 0.47  & 0.86  & 0.86  & 0.59  & ST-DNN (3)  & 0.51  & 0.89  & 0.9   & 0.5 \\
    ST-DNN (4)  & 0.52  & 0.9   & 0.9   & 0.45  & ST-DNN (4) & 0.64  & 0.94  & 0.94  & 0.35 \\
    ST-DNN (5)  & 0.51  & 0.89  & 0.89  & 0.56  & ST-DNN (5)  & 0.63  & 0.94  & 0.94  & 0.37 \\
    Siamese (4)  & 0.53  & 0.89  & 0.89  & 0.47  & Siamese (4)  & 0.65  & 0.92  & 0.92  & 0.43 \\
    \hline
    \end{tabular}%
  \label{tab:STLDvsST-DNN_supp}%
  }
\end{table}%

\subsection{Deformation Robustness}
We provide more detailed results from the deformation robustness experiments in the main paper. Figure~\ref{fig:Deform_supp} shows the plots of the performance of ST-DNN against SOTA deep learning methods, showing that ST-DNN outperforms SOTA DL methods in terms of robustness, and the margin of out-performance increases with increasing deformation amount (measured by the Sobolev norm). The robustness is measured through 3 metrics (GT-Cov, Rand Index and Variation of Information). The first two metrics higher values indicate more robustness, and the last lower values indicate more robustness. Figure~\ref{fig:defromExp_supp} shows some visual results of the experiment.

\begin{figure*}[ht]
\centering
\scriptsize
\begin{tabular}{c}
    \includegraphics[height=4.5cm,width=4.5cm]{{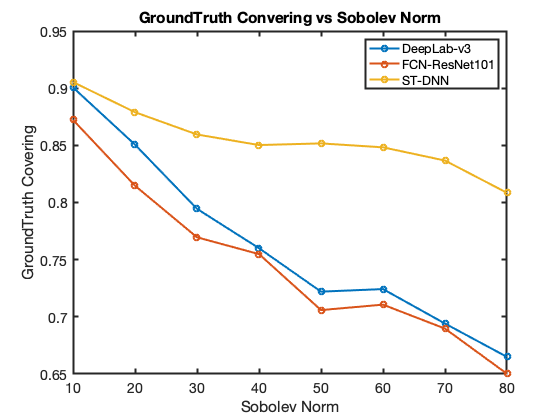}}
    \includegraphics[height=4.5cm,width=4.5cm]{{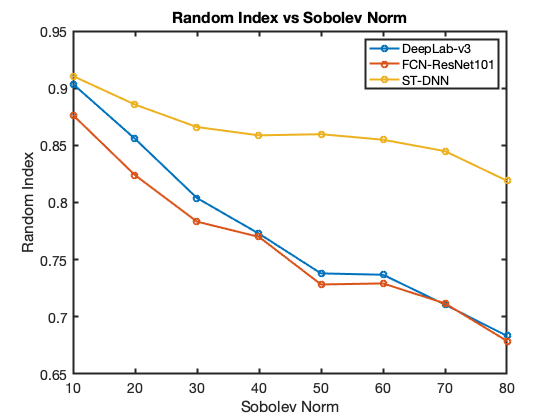}}
    \includegraphics[height=4.5cm,width=4.5cm]{{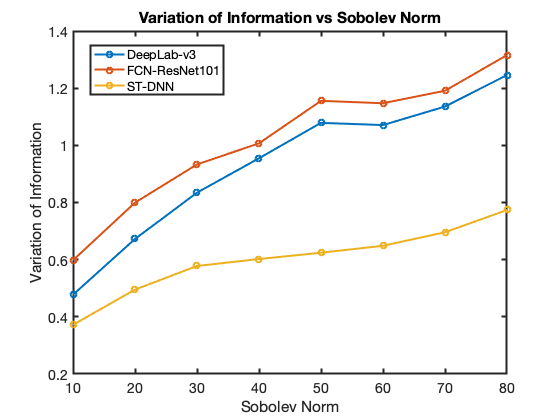}}
    \end{tabular}
\caption{\textbf{Comparison of ST-DNN with SOTA deep networks for deformation robustness.} ST-DNN is more robust as measured on all metrics.}
\label{fig:Deform_supp}
\end{figure*}

\def\fWidRTex{.135\linewidth}
\def\fHeiRTex{.08\linewidth}
\begin{figure*}[ht]
\scriptsize{
\begin{tabular}{c}

	images (increasing deformation $\longrightarrow$)\\
	\includegraphics[width=\fWidRTex,height=\fHeiRTex]{{Figures/image10-1-eps-converted-to}}
	\includegraphics[width=\fWidRTex,height=\fHeiRTex]{{Figures/image10-2-eps-converted-to}}
	\includegraphics[width=\fWidRTex,height=\fHeiRTex]{{Figures/image10-3-eps-converted-to}}
	\includegraphics[width=\fWidRTex,height=\fHeiRTex]{{Figures/image10-4-eps-converted-to}}
	\includegraphics[width=\fWidRTex,height=\fHeiRTex]{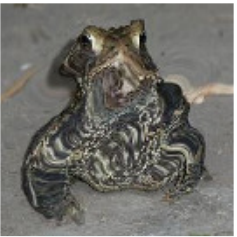}
	\includegraphics[width=\fWidRTex,height=\fHeiRTex]{{Figures/image10-6-eps-converted-to}}
	\includegraphics[width=\fWidRTex,height=\fHeiRTex]{{Figures/image10-8-eps-converted-to}}\\
	ST-DNN\\
	\includegraphics[width=\fWidRTex,height=\fHeiRTex]{Figures/results10-1-eps-converted-to}
	\includegraphics[width=\fWidRTex,height=\fHeiRTex]{{Figures/results10-2-eps-converted-to}}
	\includegraphics[width=\fWidRTex,height=\fHeiRTex]{{Figures/results10-3-eps-converted-to}}
	\includegraphics[width=\fWidRTex,height=\fHeiRTex]{{Figures/results10-4-eps-converted-to}}
	\includegraphics[width=\fWidRTex,height=\fHeiRTex]{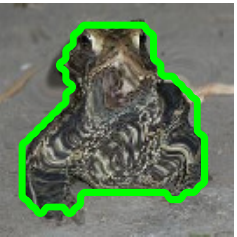}
	\includegraphics[width=\fWidRTex,height=\fHeiRTex]{{Figures/results10-6-eps-converted-to}}
	\includegraphics[width=\fWidRTex,height=\fHeiRTex]{{Figures/results10-8-eps-converted-to}}\\[-\dp\strutbox]
	DeepLab-v3 \\
	\includegraphics[width=\fWidRTex,height=\fHeiRTex]{{Figures/resultsDeepLabA10-1-eps-converted-to}}
	\includegraphics[width=\fWidRTex,height=\fHeiRTex]{{Figures/resultsDeepLabA10-2-eps-converted-to}}
	\includegraphics[width=\fWidRTex,height=\fHeiRTex]{{Figures/resultsDeepLabA10-3-eps-converted-to}}
	\includegraphics[width=\fWidRTex,height=\fHeiRTex]{{Figures/resultsDeepLabA10-4-eps-converted-to}}
	\includegraphics[width=\fWidRTex,height=\fHeiRTex]{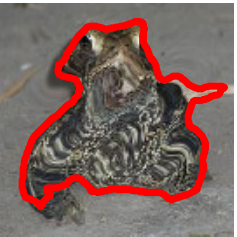}
	\includegraphics[width=\fWidRTex,height=\fHeiRTex]{{Figures/resultsDeepLabA10-6-eps-converted-to}}
	\includegraphics[width=\fWidRTex,height=\fHeiRTex]{{Figures/resultsDeepLabA10-8-eps-converted-to}}\\[-\dp\strutbox]
	
	FCN-ResNet101 \\
	\includegraphics[width=\fWidRTex,height=\fHeiRTex]{{Figures/resultsResNetA10-1-eps-converted-to}}
	\includegraphics[width=\fWidRTex,height=\fHeiRTex]{{Figures/resultsResNetA10-2-eps-converted-to}}
	\includegraphics[width=\fWidRTex,height=\fHeiRTex]{{Figures/resultsResNetA10-3-eps-converted-to}}
	\includegraphics[width=\fWidRTex,height=\fHeiRTex]{{Figures/resultsResNetA10-4-eps-converted-to}}
	\includegraphics[width=\fWidRTex,height=\fHeiRTex]{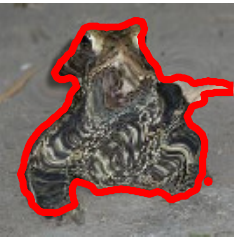}
	\includegraphics[width=\fWidRTex,height=\fHeiRTex]{{Figures/resultsResNetA10-6-eps-converted-to}}
	\includegraphics[width=\fWidRTex,height=\fHeiRTex]{{Figures/resultsResNetA10-8-eps-converted-to}}\\[-\dp\strutbox]
		images (increasing deformation $\longrightarrow$)\\
	\includegraphics[width=\fWidRTex,height=\fHeiRTex]{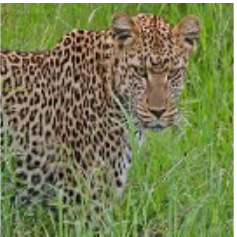}
	\includegraphics[width=\fWidRTex,height=\fHeiRTex]{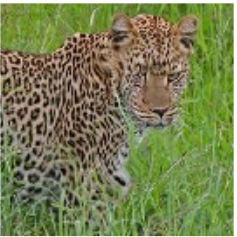}
	\includegraphics[width=\fWidRTex,height=\fHeiRTex]{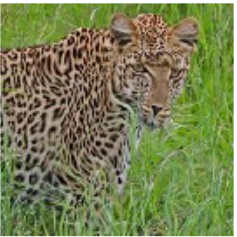}
	\includegraphics[width=\fWidRTex,height=\fHeiRTex]{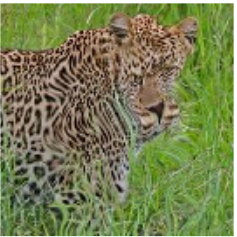}
	\includegraphics[width=\fWidRTex,height=\fHeiRTex]{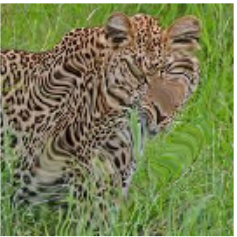}
	\includegraphics[width=\fWidRTex,height=\fHeiRTex]{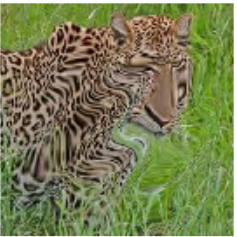}
	\includegraphics[width=\fWidRTex,height=\fHeiRTex]{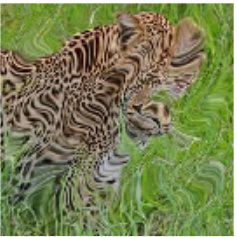}\\

		ST-DNN\\
	\includegraphics[width=\fWidRTex,height=\fHeiRTex]{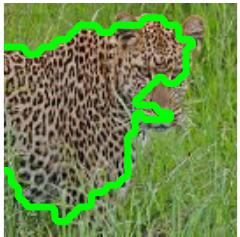}
	\includegraphics[width=\fWidRTex,height=\fHeiRTex]{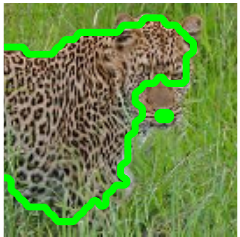}
	\includegraphics[width=\fWidRTex,height=\fHeiRTex]{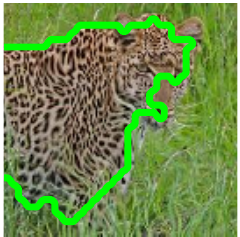}
	\includegraphics[width=\fWidRTex,height=\fHeiRTex]{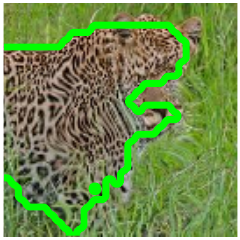}
	\includegraphics[width=\fWidRTex,height=\fHeiRTex]{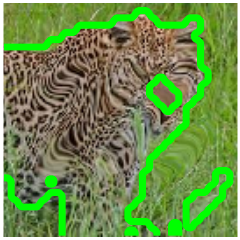}
	\includegraphics[width=\fWidRTex,height=\fHeiRTex]{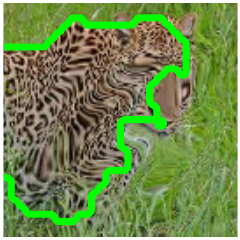}
	\includegraphics[width=\fWidRTex,height=\fHeiRTex]{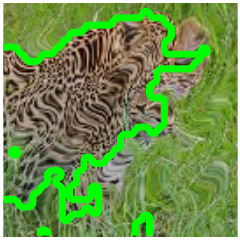}\\[-\dp\strutbox]
	DeepLab-v3 \\
	\includegraphics[width=\fWidRTex,height=\fHeiRTex]{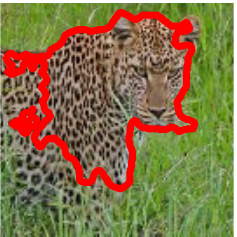}
	\includegraphics[width=\fWidRTex,height=\fHeiRTex]{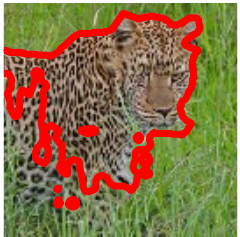}
	\includegraphics[width=\fWidRTex,height=\fHeiRTex]{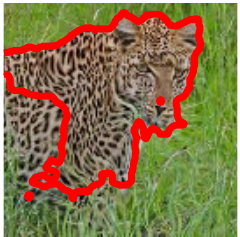}
	\includegraphics[width=\fWidRTex,height=\fHeiRTex]{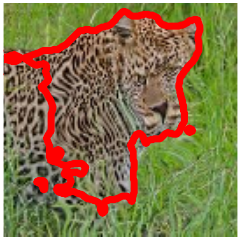}
	\includegraphics[width=\fWidRTex,height=\fHeiRTex]{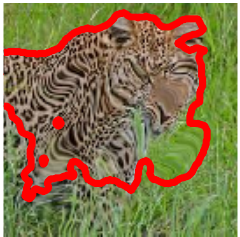}
	\includegraphics[width=\fWidRTex,height=\fHeiRTex]{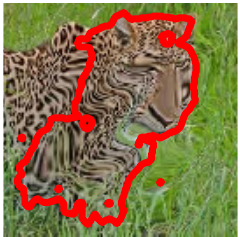}
	\includegraphics[width=\fWidRTex,height=\fHeiRTex]{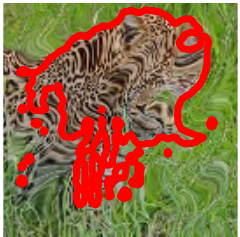}\\[-\dp\strutbox]
	
	FCN-ResNet101 \\
	\includegraphics[width=\fWidRTex,height=\fHeiRTex]{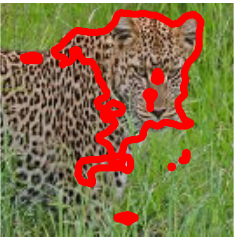}
	\includegraphics[width=\fWidRTex,height=\fHeiRTex]{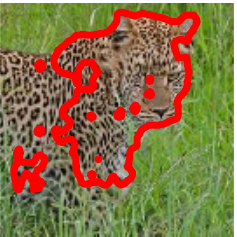}
	\includegraphics[width=\fWidRTex,height=\fHeiRTex]{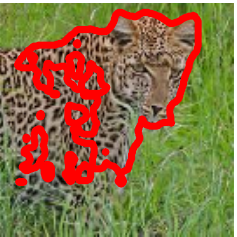}
	\includegraphics[width=\fWidRTex,height=\fHeiRTex]{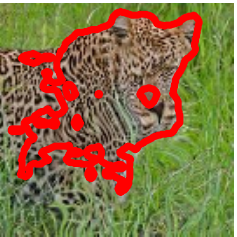}
	\includegraphics[width=\fWidRTex,height=\fHeiRTex]{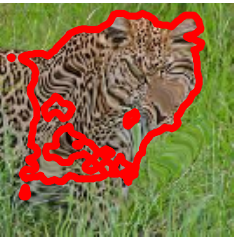}
	\includegraphics[width=\fWidRTex,height=\fHeiRTex]{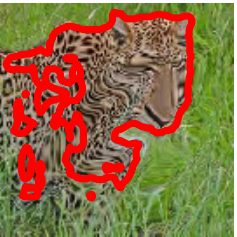}
	\includegraphics[width=\fWidRTex,height=\fHeiRTex]{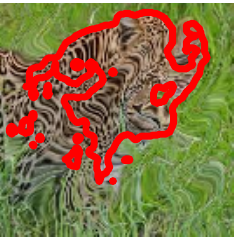}\\[-\dp\strutbox]

\end{tabular}
\caption{\sl {\bf Sample representative results on Real-World
		Texture Dataset} with varying deformations. We compare the ST-DNNs (ours),
	 and deep learning based methods. }
\label{fig:defromExp_supp}
}
\end{figure*}

\subsection{ST-DNN Performance against Competing Methods}

We show some additional qualitative samples of our results to better motivate our experiments. Figures \ref{fig:deep_results1_supp} - \ref{fig:deep_results3_supp}, shows additional samples of results, where we compare our ST-DNN setup with state-of-the-art deep architectures on the Real-World Texture Segmentation Dataset (RWTSD). As can be seen from the samples our results are comfortably better than the SOTA deep architectures. Table \ref{tab:results_Nets_supp} shows quantitative resuls on  RWTSD. In our notation resnet101/deeplabv3-x-y, resnet101/deeplabv3 represents the back-bone architecture used, x represents the dataset used in training, 'd' for DUTS dataset fine-tuned on real-world texture dataset, 'm' for MSRA dataset fine-tuned on real-world texture dataset and 'all' for a combination of all datasets in Table~\ref{tab:datasets_supp} and RWTSD without any fine-tuning. With 'y' we represent the loss function used to train the network, 'ce' represents cross-entropy loss and 'ours' represents the loss function we have presented in this paper. We have tested state-of-the-art deep learning architecture with the loss introduced in this paper to show that although our loss function improves the results compared to cross-entropy, the bulk of improvement of ST-DNN over state-of-the-art networks comes from the construction of shape-tailored covariant descriptors. 


Figures \ref{fig:results_real_textures_supp} and \ref{fig:results_real_textures2_supp} represent the visual samples (compared to other texture segmentation methods) on the Real-World Texture Segmentation Dataset and Synthetic Brodatz Texture Segmentation Dataset, respectively. Again, our results are better. Additional quantitative results are provided in Tables \ref{tab:results_texture_datasets_supp} (for RWTSD) and \ref{tab:ResutlsSyntheticTextures_supp} (for SBTSD), comparing against more methods than in the main paper.


\begin{table}[ht]
\centering
{\scriptsize
	

\caption{{\bf Results on and Synthetic multi-region Dataset and BSDS500}. Comparisons are performed against state-of-the-art deep learning based methods. }
\label{fig:results_mul_region2}
}
\end{table}

\section{Additional Comments}
\begin{itemize}
\item \textbf{Choice of Poisson Equation}:
We have chosen Poisson equation in ST-DNN because it is a linear PDE and can be efficiently solved, even for large images since the matrix $A$ in $Au=I$ is symmetric positive definite we can use conjugate gradient algorithm to solve for $u$ efficiently. However, other PDEs like Heat equation can also be used with our formulation.
\item \textbf{Pre-processing}
The zeroth layer (pre-processing) layer of our ST-DNN extracts color, grayscale and oriented gradient channels at multiple scales. We have used this design choice because oriented gradient are shown to be important for textural appearance of segments \cite{Khan2015,sifre2013rotation}.
\item \textbf{Case for Shape-Tailored descriptors:} In \cite{Khan2015} the authors show the effect of aggregation of statistics across region boundaries. They show a marked improvement in the performance of the descriptors by simply tailoring it to the region of interest.
\end{itemize}

\end{document}